\DeclareMathOperator*{\argmin}{arg\,min} 
\DeclareMathOperator*{\argmax}{arg\,max}
\theoremstyle{plain}
  \newtheorem{theorem}{Theorem}[section]
  \newtheorem{lemma}[theorem]{Lemma}
\newcommand{\rtx}{\calT_x}
\newcommand{\chx}{\calC_x}
\newcommand{\tx}{t_x}
\newcommand{\cx}{c_x}
\newcommand{\cid}{\overset{D}{\longrightarrow}}
\newcommand{\colormathbox}[3][\mathord]{%
  #1{%
    \setlength{\fboxsep}{1pt}%
    \mathpalette\color@mathbox{{#2}{#3}}%
  }%
}
\newcommand{\color@mathbox}[2]{%
  \color@@mathbox#1#2%
}
\newcommand{\color@@mathbox}[3]{%
  \colorbox{#2}{$#1\m@th#3$}%
}
\definecolor{colorLM}{HTML}{ef8a62}
\definecolor{colorGLM}{HTML}{999999}
\title{Enhancing Preference-based Linear Bandits\\via Human Response Time}
\author{%
Shen Li$^{1}$\thanks{First two authors have equal contribution.}
\quad Yuyang Zhang$^{2\,*}$
\quad Zhaolin Ren$^{2}$ \quad Claire Liang$^{1}$ \quad Na Li$^{2}$ \quad Julie A. Shah$^1$ \\
$^1$Massachusetts Institute of Technology \quad $^2$Harvard University\\
\texttt{\{shenli,cyl48\}@mit.edu, julie\_a\_shah@csail.mit.edu}\\
\texttt{\{yuyangzhang,zhaolinren\}@g.harvard.edu, nali@seas.harvard.edu}
}
\begin{document}

\maketitle


\begin{abstract}
Interactive preference learning systems infer human preferences by presenting queries as pairs of options and collecting binary choices. Although binary choices are simple and widely used, they provide limited information about preference strength. To address this, we leverage human response times, which are inversely related to preference strength, as an additional signal. We propose a computationally efficient method that combines choices and response times to estimate human utility functions, grounded in the EZ diffusion model from psychology. Theoretical and empirical analyses show that for queries with strong preferences, response times complement choices by providing extra information about preference strength, leading to significantly improved utility estimation. We incorporate this estimator into preference-based linear bandits for fixed-budget best-arm identification. Simulations on three real-world datasets demonstrate that using response times significantly accelerates preference learning compared to choice-only approaches. Additional materials, such as code, slides, and talk video, are available at \url{https://shenlirobot.github.io/pages/NeurIPS24.html}.
\end{abstract}


\section{Introduction}\label{sec:introduction}
Interactive preference learning from human binary choices is widely used in recommender systems~\citep{karimi2018news,silva2022multi,bogina2023considering,deldjoo2024content}, assistive robots~\citep{sadigh2017active,tucker2020preference}, and fine-tuning large language models~\citep{stiennon2020learning,menick2022teaching,nakano2021webgpt,ouyang2022training,bai2022training}. This process is often framed as a preference-based bandit problem~\citep{bengs2021preference,jun2021improved}, where the system repeatedly presents queries as pairs of options, the human selects a preferred option, and the system infers preferences from these choices. Binary choices are popular because they are easy to implement and impose low cognitive load on users~\citep{xu2017noise,wilde2021learning,koppol2021interaction}. However, while binary choices reveal preferences, they provide little information about preference strength~\citep{yu2023thumbs}. To address this, researchers have incorporated additional \textit{explicit human feedback}, such as ratings~\citep{somers2017efficient,perez2019pairwise}, labels~\citep{xu2017noise}, and slider bars~\citep{wilde2021learning,bai2022training}, but these approaches often complicate interfaces and increase cognitive demands~\citep{koppol2020iterative,koppol2021interaction}.

In this paper, we propose leveraging \textit{implicit human feedback}, specifically response times, to provide additional insights into preference strength. Unlike explicit feedback, response time is unobtrusive and effortless to measure~\citep{clithero2018response}, offering valuable information that complements binary choices~\citep{clithero2018improving,alos2021time}. For instance, consider an online retailer that repeatedly presents users with a binary query, whether to purchase or skip a recommended product~\citep{konovalov2019revealed}. Since most users skip products most of the time~\citep{karpov2022instance}, the probability of skipping becomes nearly 1 for most items. This lack of variation in choices makes it difficult to assess how much a user likes or dislikes any specific product, limiting the system's ability to accurately infer their preferences. Response time can help overcome this limitation. Psychological research shows an inverse relationship between response time and preference strength~\citep{clithero2018response}: users who strongly prefer to skip a product tend to do so quickly, while longer response times can indicate weaker preferences. Thus, even when choices appear similar, response time can uncover subtle differences in preference strength, helping to accelerate preference learning.

Leveraging response times for preference learning presents notable challenges. Psychological research has extensively studied the relationship between human choices and response times~\citep{clithero2018response,de2019overview} using complex models like Drift-Diffusion Models~\citep{ratcliff2008diffusion} and Race Models~\citep{usher2001time,brown2008simplest}. While these models align with both behavioral and neurobiological evidence~\citep{webb2019neural}, they rely on computationally intensive methods, such as hierarchical Bayesian inference~\citep{wiecki2013hddm} and maximum likelihood estimation (MLE)~\citep{ratcliff2002estimating}, to estimate the underlying human utility functions from both human choices and response times, making them impractical for real-time interactive systems. Although faster estimators exist~\citep{wagenmakers2007ez,wagenmakers2008ez,grasman2009mean,fudenberg2020testing,berlinghieri2023measuring}, they typically estimate the utility functions for a single pair of options without aggregating data across multiple pairs. This limits their ability to leverage structures like linear utility functions, which are widely adopted both in preference learning with large option spaces~\citep{li2010contextual,sadigh2017active,fiez2019sequential,silva2022multi,deldjoo2024content} and in cognitive models for human multi-attribute decision-making~\citep{trueblood2014multiattribute,fisher2017attentional,yang2023dynamic}.

To address these challenges, we propose a computationally efficient method for estimating linear human utility functions from both choices and response times, grounded in the difference-based EZ diffusion model~\citep{wagenmakers2007ez,berlinghieri2023measuring}. Our method leverages response times to transform binary choices into richer continuous signals, framing utility estimation as a \textit{linear regression} problem that aggregates data across multiple pairs of options. We compare our estimator to traditional \textit{logistic regression} methods that rely solely on choices~\citep{azizi2021fixed,jun2021improved}. For queries with strong preferences, our theoretical and empirical analyses show that response times complement choices by providing additional information about preference strength. This significantly improves utility estimation compared to using choices alone. For queries with weak preferences, response times add little value but do not degrade performance. \textbf{In summary, response times complement choices, particularly for queries with strong preferences.}

Our linear-regression-based estimator integrates seamlessly into algorithms for preference-based bandits with linear human utility functions~\citep{azizi2021fixed,jun2021improved}, enabling interactive learning systems to leverage response times for faster learning. We specifically integrated our estimator into the Generalized Successive Elimination algorithm~\citep{azizi2021fixed} for fixed-budget best-arm identification~\citep{gabillon2012best,kaufmann2016complexity}. Simulations using three real-world datasets~\citep{smith2018attention,clithero2018improving,krajbich2010visual} consistently show that incorporating response times significantly reduces identification errors, compared to traditional methods that rely solely on choices. \textit{To the best of our knowledge, this is the first work to integrate response times into bandits (and RL).}

\Cref{sec:setting} introduces the preference-based linear bandit problem and the difference-based EZ diffusion model. \Cref{sec:estimation} presents our utility estimator, incorporating both choices and response times, and offers a theoretical comparison to the choice-only estimator. \Cref{sec:algorithm} integrates both estimators into the Generalized Successive Elimination algorithm. \Cref{sec:empirical} presents empirical results for estimation and bandit learning. \Cref{sec:conclusion} discusses the limitations of our approach. \Cref{app:sec:literature} reviews response time models, parameter estimation techniques, and their connection to preference-based RL.

\textit{Nomenclature}: We use $[n]$ to denote the set $\{1,\dots,n\}$. For a scalar random variable $x$, the expectation and variance are denoted by $\bbE\qb{x}$ and $\bbV\qb{x}$, respectively. The function $\text{sgn}(x)$ denotes the sign of $x$.

\section{Problem setting and preliminaries}\label{sec:setting}
\paragraph{Preference-based bandits with a linear utility function.} The learner is given a finite set of options (or ``arms''), each represented by a feature vector in $\mathcal{Z}\subset\mathbb{R}^d$, and a finite set of binary queries, where each query is the difference between two arms, denoted by $\mathcal{X}\subset\mathbb{R}^d$. For instance, if the learner can query any pair of arms, the query space is $\mathcal{X}=\left\{z-z'\colon z,z'\in\mathcal{Z}\right\}$. In the online retailer example from \cref{sec:introduction}, the query space is $\mathcal{X}=\left\{z-z_{\text{skip}}\colon z\in\mathcal{Z}\right\}$, where $z$ represents purchasing a product and $z_{\text{skip}}$ represents skipping (often set as $\mathbf{0}$). For each arm $z \in \mathcal{Z}$, the human utility is assumed to be linear in the feature space, defined as $u_z\coloneqq z^\top\theta^*$, where $\theta^*\in\mathbb{R}^d$ represents the human's preference parameters. For any query $x\in\mathcal{X}$, the utility difference is then defined as $u_x\coloneqq x^\top\theta^*$.

Given a query $x\coloneqq z_1-z_2\in\mathcal{X}$, we model human choices and response times using the difference-based EZ-Diffusion Model (dEZDM)~\citep{wagenmakers2007ez,berlinghieri2023measuring}, integrated with our linear utility structure. (See \cref{app:sec:literature:1} for a comparison with other models.) This model interprets human decision-making as a stochastic process in which evidence accumulates over time to compare two options. As shown in \cref{fig:setting:DDM:1}, after receiving a query $x$, the human first spends a fixed amount of non-decision time, denoted by $t_{\text{nondec}}>0$, to perceive and encode the query. Then, evidence $E_x$ accumulates over time following a Brownian motion with drift $x^\top\theta^*$ and two symmetric absorbing barriers, $a>0$ and $-a$. Specifically, at time $t_{\text{nondec}}+\tau$ where $\tau\geq 0$, the evidence is $E_{x,\tau}=x^\top\theta^*\cdot \tau + B(\tau)$, where $B(\tau)\sim\mathcal{N}(0, \tau)$ is standard Brownian motion. This process continues until the evidence reaches either the upper barrier $a$ or lower barrier $-a$, at which point a decision is made. The random stopping time, $t_x\coloneqq\min\left\{\tau>0\colon E_{x,\tau}\in\left\{a, -a\right\}\right\}$, represents the decision time. If $E_{x,t_x}=a$, the human chooses $z_1$; if $E_{x,t_x}=-a$, they choose $z_2$. The choice is represented by the random variable $c_x$, where $c_x=1$ if $z_1$ is chosen, and $-1$ if $z_2$ is chosen. The total response time, $t_{\text{RT}, x}$, is the sum of the non-decision time and the decision time: $t_{\text{RT}, x}=t_{\text{nondec}}+t_x$. The choice probability, expected choice, choice variance, and expected decision time are given as follows \citep[eq.~(A.16) and (A.17)]{palmer2005effect}:
\begin{equation}\begin{split}
  \forall x\in\mathcal{X}\colon
  &\bbP\qb{c_x=1}=\frac{1}{1+\exp(-2ax\t\theta^*)},
  \,\,\,\,
  \bbE\qb{c_x}=\tanh(ax\t\theta^*)\\
  &\bbV\qb{c_x}=1-\tanh^2(ax\t\theta^*),
  \,\,\,\,
  \bbE\qb{t_x}=
    \begin{cases}
    \frac{a}{x\t\theta^*}\tanh(a x\t\theta^*) \quad\text{if } x\t\theta^*\neq 0\\
    a^2 \hfill\text{ if } x\t\theta^*= 0
    \end{cases}.
  \label{eq:setting:ECH_VCH_ERT}
\end{split}\end{equation}
This choice probability matches that of the \citet{BradleyTerry1952} model. If the learner relies solely on choices, then our bandit problem reduces to the transductive linear logistic bandit problem~\citep{jun2021improved}.

\Cref{fig:setting:DDM:2,fig:setting:DDM:3} illustrate the roles of the parameters $x\t\theta^*$ and $a$. First, the absolute drift (or the absolute utility difference), $|x^\top\theta^*|$, reflects the human's preference strength for the query $x$. Larger values indicate stronger preferences, leading to faster decisions and more consistent choices. Smaller values suggest weaker preferences, resulting in slower decisions and less consistent choices. Second, the barrier $a$ represents the human's conservativeness in decision-making~\citep{lerche2017many}. A more conservative human (higher $a$) requires more evidence to decide, resulting in slower but more consistent choices. In contrast, a less conservative human (lower $a$) decides faster but makes less consistent choices.
\begin{figure}[h]
  \centering
  \begin{subfigure}[b]{0.325\textwidth}
  \centering
  \includegraphics[width=\textwidth]{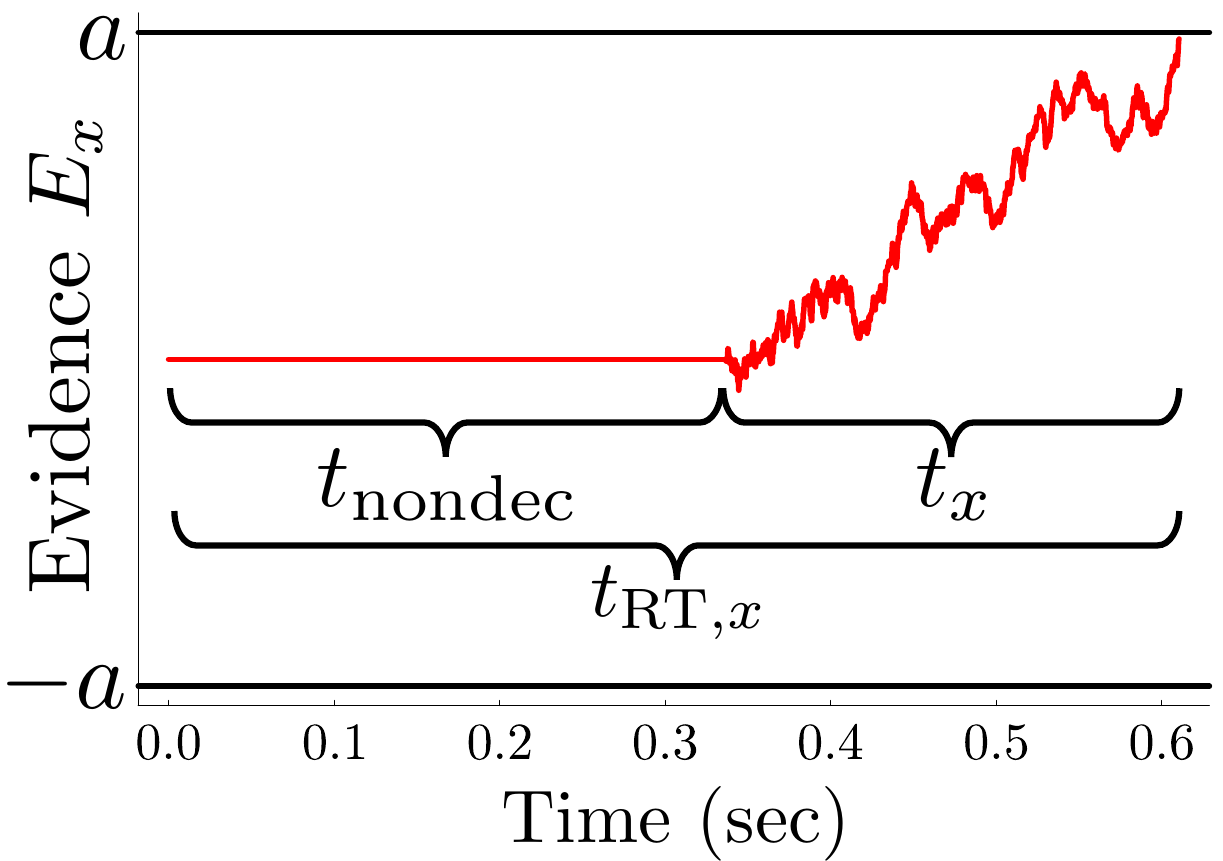}
  \caption{}
  \label{fig:setting:DDM:1}
  \end{subfigure}
  \hfill
  \begin{subfigure}[b]{0.325\textwidth}
  \centering
  \includegraphics[width=\textwidth]{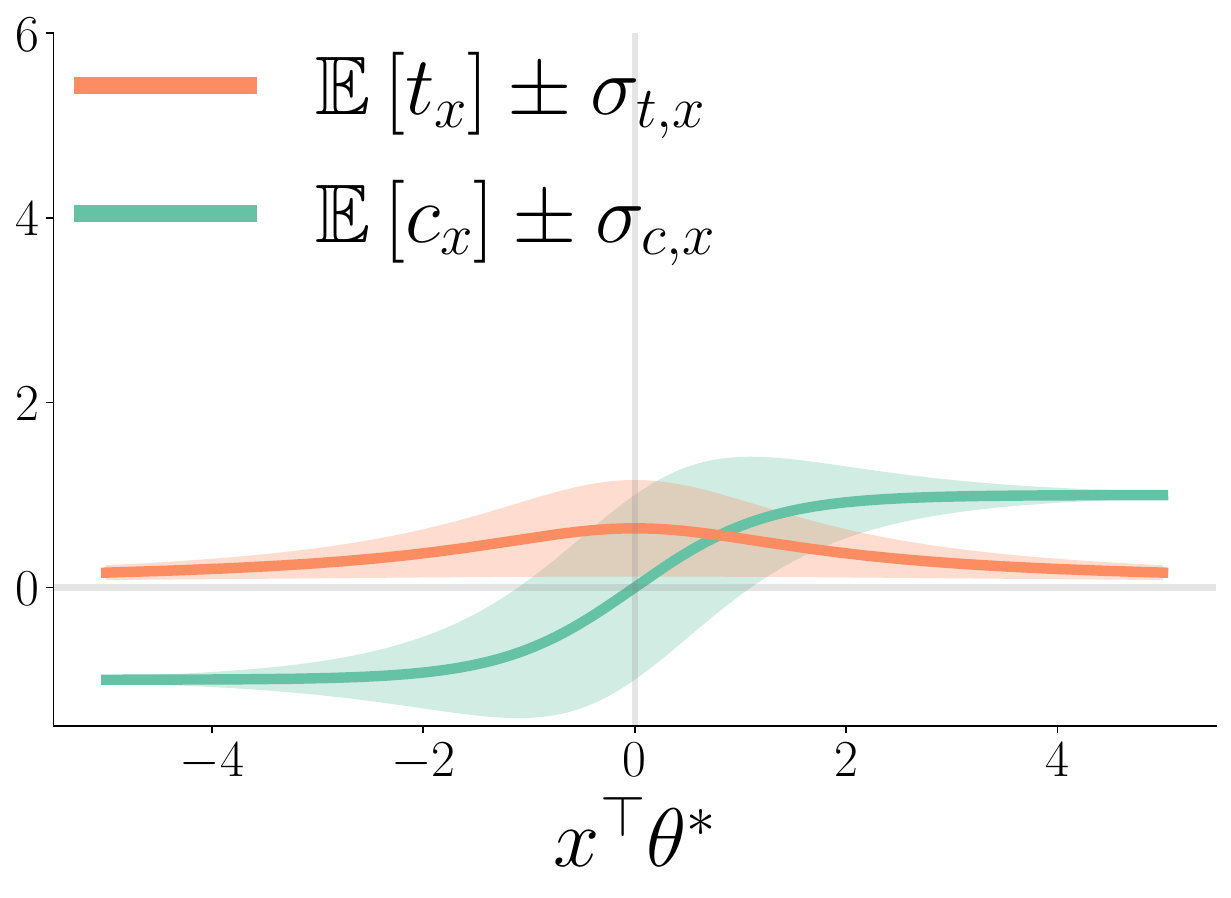}
  \caption{$a=0.8$}
  \label{fig:setting:DDM:2}
  \end{subfigure}
  \hfill
  \begin{subfigure}[b]{0.325\textwidth}
  \centering
  \includegraphics[width=\textwidth]{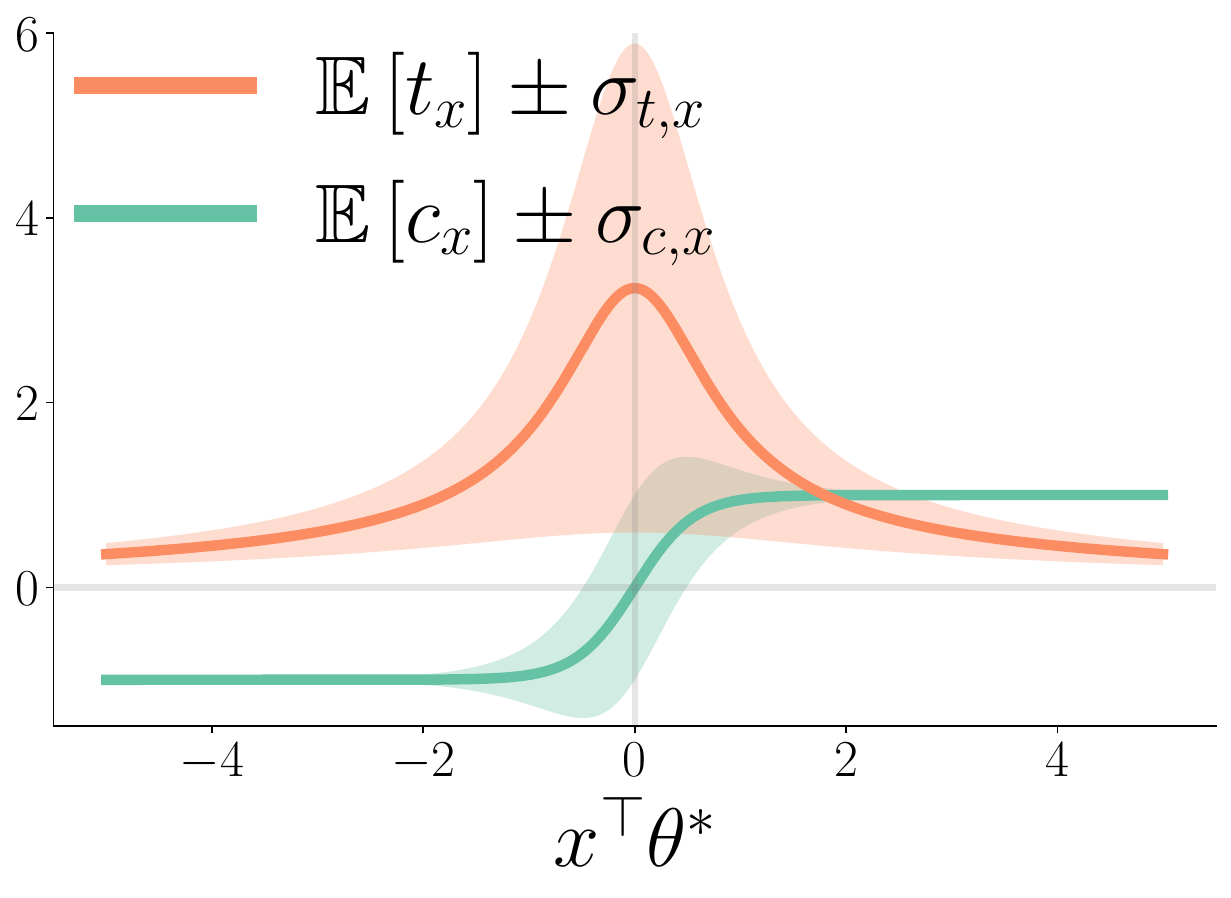}
  \caption{$a=1.8$}
  \label{fig:setting:DDM:3}
  \end{subfigure}
  \caption{(a) depicts the human decision-making process for a binary query $x \in \mathcal{X}$, where the human selects between two arms. The human first spends a fixed non-decision time $t_{\text{nondec}}$ encoding the query. Then, the human's evidence accumulates according to a Brownian motion with drift $x^\top \theta^*$. When the evidence reaches the upper barrier $a$ or lower barrier $-a$, the human makes a choice, denoted by $c_x = 1$ or $c_x = -1$, respectively. The random stopping time of the accumulation process is the decision time $t_x$, and the total response time is $t_{\text{RT},x}=t_{\text{nondec}}+t_x$. (b) and (c) plot the expected choice $\bbE[c_x]$ and the expected decision time $\bbE[t_x]$, with shaded regions representing one standard deviation, plotted as functions of the utility difference $x^\top \theta^*$ for two barrier values $a$.}
  \label{fig:setting:DDM}
\end{figure}

We adopt the common assumption that $t_{\text{nondec}}$ is constant across all queries for a given human~\citep{clithero2018improving,yang2023dynamic} and further assume that $t_{\text{nondec}}$ is known to the learner. This assumption enables the learner to perfectly recover $t_x$ from the observed $t_{\text{RT},x}$. In \cref{ssec:empirical:bandit}, we empirically show that even when $t_{\text{nondec}}$ is unknown, its impact on the performance of our method that relies on decision times is negligible.

\paragraph{Learning objective: Best-arm identification with a fixed budget.}
We focus on the fixed-budget best-arm identification problem~\citep{gabillon2012best,kaufmann2016complexity}. The learner is provided with a total interaction time budget $B > 0$, an arm space $\mathcal{Z}$, a query space $\mathcal{X}$, and a non-decision time $t_{\text{nondec}}$. Both the human's preference vector $\theta^*$ and the decision barrier $a$ are unknown. In each episode $s\in\mathbb{N}$, the learner selects a query $x_s\in\mathcal{X}$, receives human feedback $(c_{x_s,s},t_{x_s,s})$ generated by the dEZDM, and consumes $t_{\text{RT},x_s,s}$ time. When the cumulative interaction time exceeds the budget $B$ at some episode $S$, i.e., $\sum_{s=1}^S t_{\text{RT},x_s,s}>B$, the learner must stop and recommend an arm $\widehat{z}\in\mathcal{Z}$. The goal is to recommend the unique best arm $z^*\coloneqq\argmax_{z\in\mathcal{Z}}z\t\theta^*$, minimizing the error probability $\mathbb{P}\left[\widehat{z}\neq z^*\right]$.

To address this problem, we adopt the Generalized Successive Elimination (GSE) algorithm~\citep{azizi2021fixed,alieva2021robust,yang2022minimax}. GSE divides the total budget $B$ into multiple phases. In each phase, it strategically samples queries until the phase's budget is exhausted, collecting both human choices and decision times. It then estimates the preference vector $\theta^*$ and eliminates arms with low estimated utilities. Decision times play a key role in the estimation step by providing complementary information about preference strength, which can enable more accurate estimation of $\theta^*$ than choices alone. Next, in \cref{sec:estimation}, we introduce a novel estimator that combines decision times and choices to estimate $\theta^*$. Then, in \cref{sec:algorithm}, we discuss how this estimator is integrated into GSE to improve preference learning.

\section{Utility estimation}\label{sec:estimation}
This section addresses the problem of estimating human preference $\theta^*$ from a fixed dataset, denoted by $\left\{x,c_{x,s_{x,i}},t_{x,s_{x,i}}\right\}_{x\in\mathcal{X}_{\text{sample}},i\in[n_x]}$. Here, $\calX_{\text{sample}}$ denotes the set of queries in the dataset, $n_x$ denotes the number of samples for each query $x\in\calX_{\text{sample}}$, and $s_{x,i}$ denotes the episode when $x$ is sampled for the $i$-th time. Samples from the same query $x$ are i.i.d., while samples from different queries are independent. \Cref{ssec:estimation:method} introduces a new estimator, the ``choice-decision-time estimator,'' which uses both choices and decision times, in contrast to the commonly used ``choice-only estimator'' that only uses choices~\citep{azizi2021fixed,jun2021improved}. \Cref{ssec:estimation:asymp,ssec:estimation:nonasymp} theoretically compares these estimators, analyzing both asymptotic and non-asymptotic performance and highlighting the advantages of incorporating decision times. \Cref{ssec:empirical:estimation} presents empirical results that validate our theoretical insights.

\subsection{Choice-decision-time estimator and choice-only estimator}\label{ssec:estimation:method}
The choice-decision-time estimator is based on the following relationship between human utilities, choices, and decision times, derived from \cref{eq:setting:ECH_VCH_ERT}:
\begin{equation}
    \forall x\in\mathcal{X}\colon
    x\t\frac{\theta^*}{a}=\frac{\mathbb{E}\left[c_x\right]}{\mathbb{E}\left[t_x\right]}.
    \label{eq:estimation:identity}
\end{equation}
Intuitively, when a human provides consistent choices (i.e., large $|\mathbb{E}[c_x]|$) and makes decisions quickly (i.e., small $\mathbb{E}[t_x]$), it implies a strong preference (i.e., large $|x^\top \theta^*|$). This relationship formulates the estimation of $\theta^*$ as a \textit{linear regression} problem. Accordingly, the choice-decision-time estimator calculates the empirical means of both choices and decision times, aggregates the ratios across all sampled queries, and applies ordinary least squares (OLS) to estimate $\theta^*/a$. Since the ranking of arm utilities based on $\theta^*/a$ is identical to that based on $\theta^*$, estimating $\theta^*/a$ is sufficient for identifying the best arm. Formally, this estimate of $\theta^*/a$, denoted by $\widehat{\theta}_{\text{CH,DT}}$, is given by:
\begin{equation}
    \widehat{\theta}_{\text{CH,DT}}\coloneqq\left(\sum_{x\in\mathcal{X}_{\text{sample}}} n_x\: xx\t\right)^{-1}\sum_{x\in\mathcal{X}_{\text{sample}}} n_x \: x
    \:\:\frac{\sum_{i=1}^{n_x}c_{x,s_{x,i}}}{\sum_{i=1}^{n_x}t_{x,s_{x,i}}}.
    \label{eq:estimation:LM}
\end{equation}

In contrast, the choice-only estimator is based on \cref{eq:setting:ECH_VCH_ERT}, which shows that for each query $x \in \mathcal{X}$, the random variable $(c_x + 1)/2$ follows a Bernoulli distribution with mean $1/[1 + \exp(- x^\top\cdot 2a\theta^*)]$. Similar to the choice-decision-time estimator, the parameter $2a$ does not impact the ranking of arms, so estimating $2a \theta^*$ is sufficient for best-arm identification. This estimation is formulated as a \textit{logistic regression} problem~\citep{azizi2021fixed,jun2021improved}, with MLE providing the following estimate of $2a \theta^*$, denoted by $\widehat{\theta}_{\text{CH}}$:
\begin{equation}
    \widehat{\theta}_{\text{CH}}\coloneqq
    \argmax_{\theta\in\mathbb{R}^d}
    \sum_{x\in\mathcal{X}_{\text{sample}}}\sum_{i=1}^{n_x}
    \log\mu(c_{x,s_{x,i}}\: x\t\theta),
\label{eq:estimation:GLM:MLE}
\end{equation}
where $\mu(y)\coloneqq1/[1+\exp(-y)]$ is the standard logistic function. While this MLE lacks a closed-form solution, it can be efficiently solved using optimization methods like Newton's algorithm~\citep{minka2003comparison,Filippi_NIPS2010_c2626d85}.

\subsection{Asymptotic normality of the two estimators}\label{ssec:estimation:asymp}
The choice-decision-time estimator from \cref{eq:estimation:LM} satisfies the following asymptotic normality result:
\begin{restatable}[Asymptotic normality of $\widehat{\theta}_{\text{CH,DT}}$]{theorem}{thmEstimationAsymptoticLM}
\label{thm:estimation:asymptotic:LM}
Given a fixed i.i.d. dataset $\left\{x,c_{x,s_{x,i}},t_{x,s_{x,i}}\right\}_{i\in[n]}$ for each $x\in\calX_{\text{sample}}$, where $\sum_{x\in\calX_{\text{sample}}}xx\t\succ 0$, and assuming that the datasets for different $x\in\calX_{\text{sample}}$ are independent, then, for any vector $y\in\bbR^d$, as $n\to \infty$, the following holds:
\begin{equation*}
    \sqrt{n}\: y\t \b{\widehat{\theta}_{\text{CH,DT},n} - \theta^*/a} \overset{D}{\longrightarrow} \calN(0, \zeta^2/a^2).
\end{equation*}
Here, the asymptotic variance depends on a problem-specific constant, $\zeta^2$, with an upper bounded:
\begin{equation*}
    \zeta^2 \leq \norm{y}^2_{\b{\sum_{x\in\calX_{\text{sample}}}
    \qb{\min_{x'\in\calX_{\text{sample}}} \bbE\qb{t_{x'}}}
    \cdot xx\t}^{-1}}.
\end{equation*}
\end{restatable}
The proof is provided in \cref{app:ssec:proof:asymptotic:LM}. The asymptotic variance upper bound shows that all sampled queries are weighted by a common factor $\min_{x'\in\calX_{\text{sample}}} \bbE\qb{t_{x'}}$, which is the smallest expected decision time among all the sampled queries in $\calX_{\text{sample}}$. This weight represents the amount of information provided by each query's choices and decision times for utility estimation. A larger weight indicates that all queries in $\calX_{\text{sample}}$ provides more information, leading to lower variance and better estimates.

In contrast, the choice-only estimator from \cref{eq:estimation:GLM:MLE} has the following asymptotic normality result, as derived from \citet[corollary 1]{Fahrmeir85}:
\begin{theorem}[Asymptotic normality of $\widehat{\theta}_{\text{CH}}$]
\label{thm:estimation:asymptotic:GLM}
Given a fixed i.i.d. dataset $\left\{x,c_{x,s_{x,i}},t_{x,s_{x,i}}\right\}_{i\in[n]}$ for each $x\in\calX_{\text{sample}}$, where $\sum_{x\in\calX_{\text{sample}}}xx\t\succ 0$, and assuming that the datasets for different $x\in\calX_{\text{sample}}$ are independent, then, for any vector $y\in\bbR^d$, as $n\to \infty$, the following holds:
\begin{equation*}
    \sqrt{n} y\t \b{\widehat{\theta}_{\text{CH},n} - 2a \theta^*} \overset{D}{\longrightarrow} \calN\b{0,
    4a^2\norm{y}^2_{\b{\sum_{x\in\calX_{\text{sample}}}
    \qb{a^2\,\bbV\qb{c_x}}
    \cdot xx\t}^{-1}}
    }.
\end{equation*}
\end{theorem}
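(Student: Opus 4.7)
The plan is to reduce this statement to a standard asymptotic-normality result for maximum-likelihood estimation in a logistic-regression generalized linear model, and then invoke the cited results of \citet[prop.~4]{gourieroux1981asymptotic} and \citet[thm.~3]{mcfadden1984econometric}. First I would reparameterize by $\beta^* \coloneqq 2a\theta^*$. By \cref{eq:setting:ECH}, $(c_x+1)/2$ is a Bernoulli variable with success probability $\mu(x\t\beta^*)$, so the objective in \cref{eq:estimation:GLM:MLE}, viewed as a function of $\beta = 2a\theta$, is exactly the canonical logistic-regression log-likelihood in $\beta$. This objective is smooth and strictly concave on the span of $\calX_{\text{sample}}$, and its gradient and Hessian have uniformly bounded moments (the Bernoulli responses are bounded and the logistic link is globally Lipschitz in its derivatives); the design assumption $\sum_{x\in\calX_{\text{sample}}} xx\t \succ 0$ guarantees identifiability of $\beta^*$. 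Consequently the standard regularity hypotheses of the cited MLE theorems are met.

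Second, I would compute the Fisher information per observation. A direct calculation shows that for a single sample from query $x$, the score at $\beta^*$ is $x\bigl[(c_x+1)/2 - \mu(x\t\beta^*)\bigr]$, whose covariance equals $\mu(x\t\beta^*)\bigl(1-\mu(x\t\beta^*)\bigr)\,xx\t = \dot{\mu}(x\t\beta^*)\,xx\t$ since the response is Bernoulli. Since samples are independent across queries and i.i.d.\ within each query, aggregating over the $n$ samples per query gives total Fisher information $n\sum_{x\in\calX_{\text{sample}}}\dot{\mu}(x\t\beta^*)\,xx\t$, which is positive definite by the design assumption together with $\dot{\mu}>0$ everywhere.

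Third, I would apply the MLE central limit theorem for independent-but-heterogeneous observations (the cited results, both of which are written for GLMs with non-identically-distributed regressors) to conclude
\begin{equation*}
\sqrt{n}\bigl(\widehat{\beta}_n - \beta^*\bigr) \overset{D}{\longrightarrow} \calN\Bigl(0,\, \Bigl(\textstyle\sum_{x\in\calX_{\text{sample}}} \dot{\mu}(x\t\beta^*)\,xx\t\Bigr)^{-1}\Bigr).
\end{equation*}
Projecting onto $y$ by the continuous mapping theorem and undoing the reparameterization via $\beta^* = 2a\theta^*$ produces the stated asymptotic distribution for $\sqrt{n}\,y\t(\widehat{\theta}_{\text{CH},n}-2a\theta^*)$. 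To match the stated notation, one writes $\dot{\mu}(2a x\t\theta^*) = m_{\text{CH}}^{\text{asym}}(x\t\theta^*)/(4a^2)$ and factors the scalar $4a^2$ out of the matrix inverse, which yields exactly the variance $4a^2\norm{y}^2_{(\sum_{x} m_{\text{CH}}^{\text{asym}}(x\t\theta^*)\,xx\t)^{-1}}$.

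The main subtlety — not really an obstacle — is that samples across different queries have different Bernoulli means, so one must cite an MLE CLT that permits independent but non-identically-distributed observations rather than the textbook i.i.d.\ version. The generalized-linear-model results of Gouriéroux--Monfort and McFadden are explicitly written for this heterogeneous setting, so no new probabilistic machinery is required beyond verifying their standard hypotheses (identifiability via $\sum_x xx\t\succ 0$, smoothness of the log-likelihood, and integrability of the score and Hessian), all of which are immediate from the bounded Bernoulli responses and the smooth logistic link.
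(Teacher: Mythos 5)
Your proposal is correct and takes essentially the same route as the paper: the paper does not supply its own proof of this theorem but simply invokes the asymptotic-normality results of \citet[proposition~4]{gourieroux1981asymptotic} and \citet[theorem~3]{mcfadden1984econometric} for the logistic-regression MLE, which is exactly the reduction you carry out (reparameterize to $\beta^*=2a\theta^*$, identify the Fisher information $\sum_x \dot{\mu}(x\t\beta^*)\,xx\t$, and factor out the scalar $4a^2$ to match the stated variance). Your scaling bookkeeping checks out, since $4a^2\bigl(\sum_x 4a^2\dot{\mu}(2ax\t\theta^*)xx\t\bigr)^{-1}=\bigl(\sum_x \dot{\mu}(2ax\t\theta^*)xx\t\bigr)^{-1}$.
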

This asymptotic variance shows that each sampled query $x\in\calX_{\text{sample}}$ is weighted by its own factor $a^2\,\bbV\qb{c_x}$, representing the amount of information the query's choices contribute to utility estimation. A larger weight indicates that the query contributes more information, leading to better estimates.

The weights in both theorems highlight the different contributions of choices and decision times to utility estimation. In the choice-only estimator (\cref{thm:estimation:asymptotic:GLM}), each query is weighted by $a^2\,\bbV\qb{c_x}$, which depends on the utility difference $x\t\theta^*$ for a fixed barrier $a$. As shown by the gray curves in \cref{fig:estimation:LM_vs_GLM:a}, this weight quickly decays to zero as preferences become stronger (i.e., as $|x^\top \theta^*|$ increases). This indicates that \textit{choices from queries with strong preferences provide little information}. Intuitively, when preferences are strong, humans consistently select the same option, making it hard to distinguish whether their preference is moderately or very strong. As a result, choices from such queries contribute minimally to utility estimation. This intuition aligns with the online retailer example in \cref{sec:introduction}.

For the choice-decision-time estimator (\cref{thm:estimation:asymptotic:LM}), queries are weighted by $\min_{x'\in\calX_{\text{sample}}} \bbE\qb{t_{x'}}$, which depends on both $\calX_{\text{sample}}$ and $\bbE\qb{t_x}$. To better understand this weight, we first plot $\bbE\qb{t_x}$ without the `min' operator as the orange curves in \cref{fig:estimation:LM_vs_GLM:a}. Comparing the orange and gray curves shows that $\bbE\qb{t_x}$ is generally larger than the choice-only weight, $a^2\,\bbV\qb{c_x}$. The actual weight in the choice-decision-time estimator, which is the minimum expected decision time across sampled queries, is less than or equal to the orange curve but is likely still higher than the choice-only weight, especially for queries with strong preferences. This suggests that \textit{when preferences are strong, decision times complement choices by capturing preference strength, leading to improved estimation.}

When queries have weak preferences, the choice-decision-time weight may be lower than the choice-only weight. However, since the choice-decision-time weight represents only an upper bound on the asymptotic variance (\cref{thm:estimation:asymptotic:LM}), no definitive conclusions can be drawn from the theory alone. Empirically, as shown in \cref{ssec:empirical:estimation}, decision times add little value but do not degrade performance.

As the barrier $a$ increases, the choice-decision-time weight rises. In contrast, the choice-only weight increases for queries with weak preferences, but this increase is concentrated in a narrower region, with weights decreasing elsewhere. Intuitively, a higher barrier reflects greater conservativeness in human decision-making, leading to longer decision times and more consistent choices (\cref{fig:setting:DDM}). As a result, more queries exhibit strong preferences, making choices from these queries less informative.


\subsection{Non-asymptotic concentration of the two estimators for utility difference estimation}\label{ssec:estimation:nonasymp}
In this section, we focus on the simpler problem of estimating the utility difference for a single query, without aggregating data from multiple queries. Comparing the non-asymptotic concentration bounds of both estimators, in this case, provides insights similar to those discussed in \cref{ssec:estimation:asymp}. Extending this non-asymptotic analysis to the full estimation of the preference vector $\theta^*$ is left for future work.

Given a query $x\in\mathcal{X}$, the task is to estimate the utility difference $u_x\coloneqq x^\top\theta^*$ using the fixed i.i.d. dataset $\{(c_{x,s_{x,i}}, t_{x,s_{x,i}})\}_{i\in[n_x]}$. Applying the choice-decision-time estimator from \cref{eq:estimation:LM}, we get the following estimate (for details, see \cref{app:sssec:estimation:nonasymp:LM}), which estimates $\smash{u_x/a}$ rather than $\smash{u_x}$:
\begin{equation}
    \widehat{u}_{x,\text{CH,DT}}\coloneqq
    \frac{\sum_{i=1}^{n_x}c_{x,s_{x,i}}}{\sum_{i=1}^{n_x}t_{x,s_{x,i}}}.
    \label{eq:estimation:LM:singleQuery}
\end{equation}
In contrast, applying the choice-only estimator from \cref{eq:estimation:GLM:MLE}, we get the following estimate (for details, see \cref{app:sssec:estimation:nonasymp:GLM}), which estimates $\smash{2a u_x}$ rather than $\smash{u_x}$:
\begin{equation}
    \widehat{u}_{x,\text{CH}}
    \coloneqq\mu^{-1}\b{\frac{1}{n_x}\sum_{i=1}^{n_x}\frac{c_{x,s_{x,i}}+1}{2}},
    \label{eq:estimation:GLM:singleQuery}
\end{equation}
where $(c_{x,s_{x,i}}+1)/2$ is the binary choice coded as 0 or 1, and $\mu^{-1}(p)\coloneqq\log\b{p/(1-p)}$ is the logit function (inverse of $\mu$ introduced in \cref{eq:estimation:GLM:MLE}).

Notably, the choice-only estimator in \cref{eq:estimation:GLM:singleQuery} aligns with the EZ-diffusion model's drift estimator~\citep[eq.~(5)]{wagenmakers2007ez}. Moreover, the estimators in \citet[eq.~(6)]{xiang2024combining} and \citet[eq.~(7)]{berlinghieri2023measuring} combine elements of both estimators from \cref{eq:estimation:LM:singleQuery,eq:estimation:GLM:singleQuery}. In \cref{ssec:empirical:bandit}, we demonstrate that both estimators from \citet[eq.~(5)]{wagenmakers2007ez} and \citet[eq.~(6)]{xiang2024combining} are outperformed by our proposed estimator in \cref{eq:estimation:LM} for the full bandit problem.

\begin{figure}[t]
  \centering
  \begin{subfigure}[b]{0.49\textwidth}
    \centering
    \includegraphics[trim=0cm 0cm 0cm 0cm,clip,width=\textwidth]{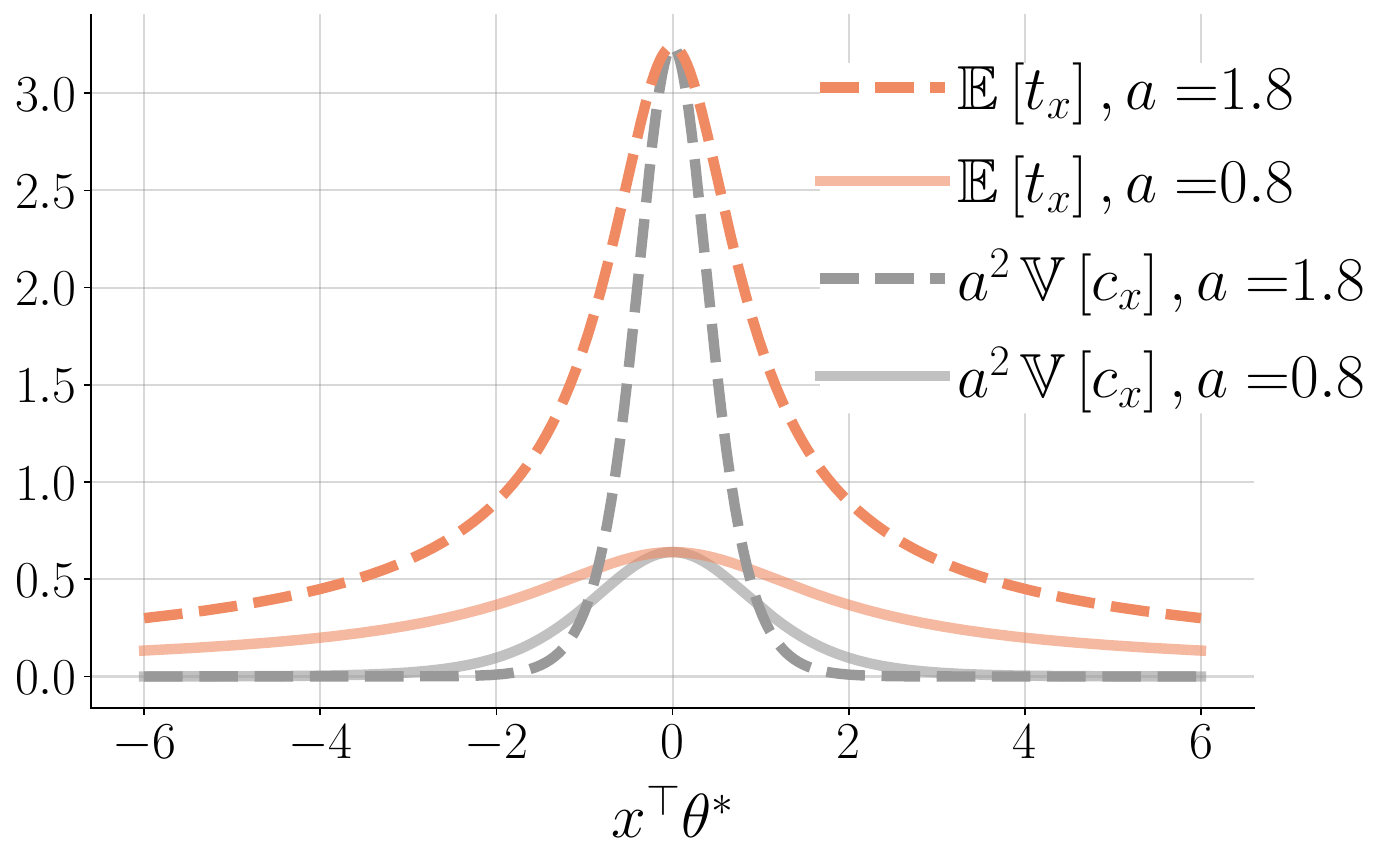}
    \caption{$\bbE\qb{t_x}$ and $a^2\,\bbV\qb{c_x}$ in asymptotic variances}
    \label{fig:estimation:LM_vs_GLM:a}
  \end{subfigure}
  \begin{subfigure}[b]{0.49\textwidth}
    \centering
    \includegraphics[trim=0cm 0cm 0cm 0cm,clip,width=\textwidth]{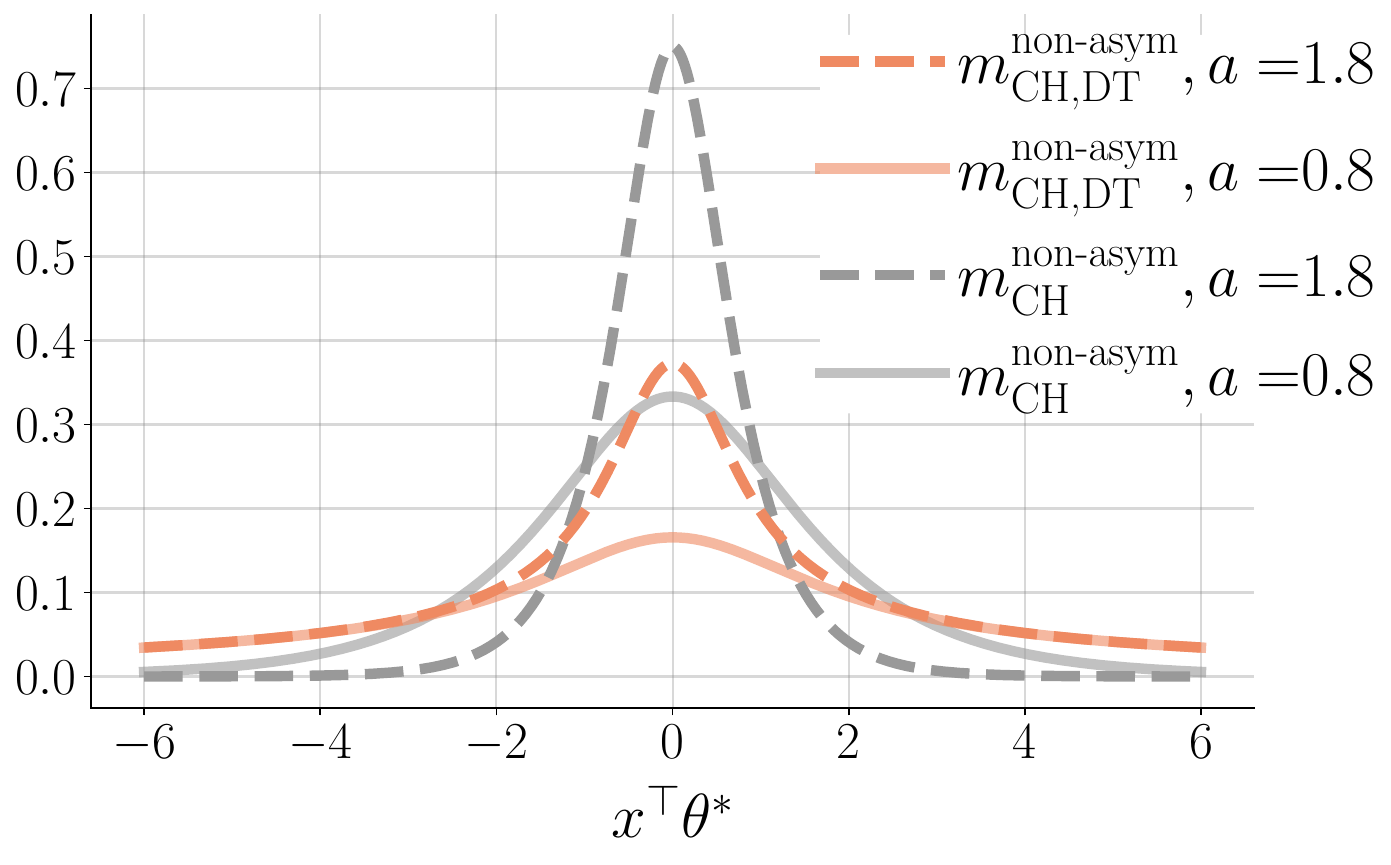}
    \caption{Weights in non-asymptotic concentration bounds}
    \label{fig:estimation:LM_vs_GLM:b}
  \end{subfigure}
  \caption{This figure illustrates key terms from our theoretical analyses, highlighting the different contributions of choices and decision times to utility estimation. These terms are functions of the utility difference $x^\top\theta^*$ and are plotted for two barrier values, $a$. (a) compares the weights $\bbE\qb{t_x}$ and $a^2\,\bbV\qb{c_x}$ in the asymptotic variances for the choice-decision-time estimator (orange, \cref{thm:estimation:asymptotic:LM}) and the choice-only estimator (gray, \cref{thm:estimation:asymptotic:GLM}), respectively. This comparison shows that \textit{decision times complement choices, particularly for queries with strong preferences}. (b) compares the weights in the non-asymptotic concentration bounds (\cref{thm:estimation:nonasymptotic:LM,thm:estimation:nonasymptotic:GLM}), showing similar trends, though these weights may not be optimal due to proof techniques.}
  \label{fig:estimation:LM_vs_GLM}
\end{figure}

Assuming the utility difference $u_x \neq 0$, the choice-decision-time estimator in \cref{eq:estimation:LM:singleQuery} satisfies the following non-asymptotic concentration bound, proven in \cref{app:sssec:estimation:nonasymp:LM}:
\begin{restatable}[Non-asymptotic concentration of $\widehat{u}_{x,\text{CH,DT}}$]{theorem}{thmEstimationNonAsymptoticLM}
\label{thm:estimation:nonasymptotic:LM}
For each query $x\in\mathcal{X}$ with $u_x\neq 0$, given a fixed i.i.d. dataset $\left\{\b{c_{x,s_{x,i}},t_{x,s_{x,i}}}\right\}_{i\in[n_x]}$, for any $\epsilon>0$ satisfying $\epsilon \leq \min\left\{|u_x|/(\sqrt{2} a), \b{1+\sqrt{2}} a |u_x| / \bbE\qb{t_x}\right\}$, the following holds:
\begin{equation*}\begin{split}
    \mathbb{P}\b{\left|\widehat{u}_{x,\text{CH,DT}} - \frac{u_x}{a}\right|>\epsilon}
    \leq4\exp\b{-
    \qb{m_{\text{CH,DT}}^{\text{non-asym}}\b{x\t\theta^*}}^2
    \, n_x \, \qb{\epsilon\cdot a}^2},
\end{split}\end{equation*}
where $m_{\text{CH,DT}}^{\text{non-asym}}\b{x\t\theta^*} \coloneqq
\bbE\qb{t_x}\,/\,\qb{(2+2\sqrt{2})\ a}$.
\end{restatable}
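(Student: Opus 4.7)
The plan is to linearize the estimator's error using the exact identity $u_x/a = \mathbb{E}[c_x]/\mathbb{E}[t_x]$ from \cref{eq:estimation:identity}. Writing $\bar c \coloneqq n_x^{-1}\sum_{i}c_{x,s_{x,i}}$, $\bar t \coloneqq n_x^{-1}\sum_{i}t_{x,s_{x,i}}$, $\Delta_c \coloneqq \bar c - \mathbb{E}[c_x]$, and $\Delta_t \coloneqq \bar t - \mathbb{E}[t_x]$, the identity $a\,\mathbb{E}[c_x] = u_x\mathbb{E}[t_x]$ cancels the cross-term and gives the clean decomposition
\begin{equation*}
    \widehat{u}_{x,\text{CH,DT}} - \frac{u_x}{a} \;=\; \frac{\bar c}{\bar t} - \frac{\mathbb{E}[c_x]}{\mathbb{E}[t_x]} \;=\; \frac{a\,\Delta_c - u_x\,\Delta_t}{a\,\bar t}.
\end{equation*}
Taking absolute values reduces the task to (i) an upper bound on $|\Delta_c|$, (ii) an upper bound on $|\Delta_t|$, and (iii) a high-probability lower bound on the random denominator $\bar t$.

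The three bounds come from two concentration tools plus a union bound. Because each $c_{x,i}\in\{-1,+1\}$ is bounded, Hoeffding's inequality immediately yields $\mathbb{P}(|\Delta_c|>\delta_c)\leq 2\exp(-n_x\delta_c^2/2)$. For $\Delta_t$, I would exploit the EZ-diffusion structure: $t_x$ is a first hitting time of symmetric barriers by Brownian motion with drift, so its moment generating function is explicit and $t_x$ is sub-exponential with variance $\mathbb{V}[t_x]$ recorded in \cref{app:ssec:model}. A Bernstein-type inequality then produces a tail for $\Delta_t$. I would introduce a ``denominator-stability'' event $\{|\Delta_t|\leq \alpha\,\mathbb{E}[t_x]\}$, which enforces $\bar t \geq (1-\alpha)\mathbb{E}[t_x]$; on this event, the triangle inequality gives
\begin{equation*}
    \left|\widehat{u}_{x,\text{CH,DT}} - \frac{u_x}{a}\right| \;\leq\; \frac{|\Delta_c|}{(1-\alpha)\,\mathbb{E}[t_x]} \;+\; \frac{|u_x|\,|\Delta_t|}{a\,(1-\alpha)\,\mathbb{E}[t_x]}.
\end{equation*}
Splitting the error budget $\epsilon$ between the two summands, choosing $\alpha$ optimally, and taking a union bound over the three events (Hoeffding for $\Delta_c$, denominator stability for $\Delta_t$, and the numerator tail for $\Delta_t$) yields a total failure probability of the form $4\exp(-c\cdot n_x\,\mathbb{E}[t_x]^2\,\epsilon^2)$. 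The two range conditions $\epsilon\leq |u_x|/(\sqrt 2\,a)$ and $\epsilon\leq (1+\sqrt 2)\,a|u_x|/\mathbb{E}[t_x]$ are exactly what calibrate $\alpha$ and absorb the Bernstein variance term, so that the constant in the exponent simplifies to $\mathbb{E}[t_x]^2/[(2+2\sqrt 2)^2 a^2]$ as claimed.

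The principal obstacle is the sub-exponential (rather than sub-Gaussian) character of $t_x$: unlike $c_x$, the decision time is unbounded, so neither Hoeffding nor McDiarmid applies directly, and Bernstein introduces a linear-in-$\delta_t$ correction that must be controlled. I expect the explicit Laplace transform of the Brownian hitting time, together with the stated range restriction on $\epsilon$, to be the exact ingredients that make the correction subdominant and collapse the Bernstein tail into a clean $\exp(-c\,n_x\epsilon^2)$ form. A secondary, more mechanical, difficulty is bookkeeping the constants through the denominator stability and budget split so that the final coefficient agrees with $(2+2\sqrt 2)^2$ and the prefactor is $4$ rather than a larger union-bound constant.
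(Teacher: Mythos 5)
Your plan follows essentially the same route as the paper's proof: Hoeffding for the choice average, a sub-exponential tail for the decision-time average derived from the explicit Laplace transform of the Brownian first-passage time, a denominator-stability event to control the ratio, an error-budget split calibrated by the two range conditions on $\epsilon$, and a union bound yielding the prefactor $4$ and the constant $\mathbb{E}[t_x]^2/[(2+2\sqrt{2})^2a^2]$. The only cosmetic difference is that you control the ratio error via the algebraic decomposition $(a\,\Delta_c-u_x\Delta_t)/(a\,\bar t)$, whereas the paper uses a deterministic extremal argument over the box $[\mathcal{C}\pm\epsilon_{\mathcal{C}}]\times[\mathcal{T}\pm\epsilon_{\mathcal{T}}]$; these are interchangeable.
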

In contrast, the choice-only estimator in \cref{eq:estimation:GLM:singleQuery} has the following non-asymptotic concentration result, adapted from \citet[theorem 5]{jun2021improved}\footnote{In \citet[theorem 5]{jun2021improved}, we let $x_1=\cdots=x_t=1$ and $t_{\text{eff}}=d=1$.}:
\begin{restatable}[Non-asymptotic concentration of $\widehat{u}_{x,\text{CH}}$]{theorem}{thmEstimationNonAsymptoticGLM}
\label{thm:estimation:nonasymptotic:GLM}
For each query $x\in\mathcal{X}$, given a fixed i.i.d. dataset $\left\{c_{x,s_{x,i}}\right\}_{i\in[n_x]}$, for any positive $\epsilon < 0.3$, if $n_x \geq 1/\dot{\mu}(2au_x)\cdot \max\{3^2\log(6e)/\epsilon^2, 64\log(3)/(1-\epsilon^2/0.3^2)\}$, the following holds:
\begin{equation*}\begin{split}
    \mathbb{P}\b{\left|\widehat{u}_{x,\text{CH}} - 2a u_x\right|>\epsilon}
    \leq6\exp\b{-
    \qb{m_{\text{CH}}^{\text{non-asym}}\b{x\t\theta^*}}^2
    \, n_x \, \qb{\epsilon/(2a)}^2},
\end{split}\end{equation*}
where $m_{\text{CH}}^{\text{non-asym}}\b{x\t\theta^*} \coloneqq
a\,\sqrt{\bbV\qb{c_x}}\,/\,2.4$.
\end{restatable}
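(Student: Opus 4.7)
The plan is to observe that the single-query problem is a one-dimensional logistic regression instance and to invoke \citet[Theorem~1]{jun2021improved} after specializing it to that setting. The binary outcomes $(c_{x,s_{x,i}}+1)/2\in\{0,1\}$ are i.i.d.\ Bernoulli with mean $\mu(2au_x)$, so their joint likelihood is exactly that of a logistic GLM in one dimension with scalar feature $X_i=1$ and true parameter $\theta^*=2au_x$. The log-likelihood is strictly concave and its first-order optimality condition reduces to $\mu(\widehat\theta)=\bar c_{n_x}$, where $\bar c_{n_x}\coloneqq\frac{1}{n_x}\sum_{i=1}^{n_x}(c_{x,s_{x,i}}+1)/2$ is the empirical mean of the one-zero-coded choices. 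Hence the MLE admits the closed form $\mu^{-1}(\bar c_{n_x})$, which coincides with $\widehat u_{x,\text{CH}}$ of \cref{eq:estimation:GLM:singleQuery}, so Jun et al.'s confidence-ellipsoid bound for the logistic MLE applies directly to $\widehat u_{x,\text{CH}}$.

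First, I would specialize the Jun et al.\ bound to this one-dimensional fixed-design instance. Their theorem controls the quadratic form $(\widehat\theta-\theta^*)^\top H_n(\theta^*)(\widehat\theta-\theta^*)$ in terms of $\log(1/\delta)$, where $H_n(\theta^*)\coloneqq\sum_i\dot\mu(X_i^\top\theta^*)X_iX_i^\top$ is the empirical Fisher information. With $d=1$, $X_i=1$, and $\theta^*=2au_x$, this Hessian collapses to the scalar $n_x\,\dot\mu(2au_x)$, so the bound reduces to
\begin{equation*}
  \bigl(\widehat u_{x,\text{CH}}-2au_x\bigr)^{2}\,n_x\,\dot\mu(2au_x)\;\leq\;2.4^{2}\log(6/\delta)
\end{equation*}
with probability at least $1-\delta$. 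The two branches in the sample-size hypothesis and the ceiling $\epsilon<\sqrt{1/12}$ are the validity conditions of Jun et al.'s inequality: the first branch keeps the self-concordant remainder of the logistic log-likelihood negligible relative to its quadratic part, while the second ensures that the empirical Fisher information concentrates sharply around its population counterpart. Inverting the display by choosing $\delta=6\exp\bigl(-n_x\,\dot\mu(2au_x)\,\epsilon^{2}/2.4^{2}\bigr)$ and rewriting the exponent through the trivial identity $n_x\,\dot\mu(2au_x)\,\epsilon^{2}/2.4^{2}=[4a^{2}\dot\mu(2au_x)/2.4^{2}]\cdot n_x\,[\epsilon/(2a)]^{2}=m_{\text{CH}}^{\text{non-asym}}(x^\top\theta^*)\cdot n_x\,[\epsilon/(2a)]^{2}$ produces the claimed tail bound verbatim.

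The main obstacle is bookkeeping universal constants. Jun et al.'s theorem is proved for general dimension and for potentially adaptively chosen features, so extracting the exact constants $2.4^{2}$, $\log(6e)$, $64\log 3$, and the prefactor $6$ in the stated form requires following their proof line by line, tracking which $d$-dependent terms disappear when $d=1$ and $|X_i|=1$, and which concentration events are invoked in which order. A minor auxiliary issue is the almost-sure well-definedness of $\mu^{-1}(\bar c_{n_x})$: the events $\bar c_{n_x}\in\{0,1\}$ occur with probability decaying geometrically in $n_x$ and can be absorbed either into the stated probability budget or, following Jun et al., by restricting the MLE to a sufficiently large compact parameter set. No new ideas are needed beyond this accounting, but the constants are sensitive to how the self-concordant remainder is handled, so the constant-tracking step is the part that requires the most care.
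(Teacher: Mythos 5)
Your proposal is correct and follows essentially the same route as the paper: the paper likewise derives the closed form $\widehat{u}_{x,\text{CH}}=\mu^{-1}\bigl(\tfrac{1}{n_x}\sum_i e_{x,s_{x,i}}\bigr)$ from the first-order optimality condition of the single-query MLE and then states that the concentration bound ``can be directly adapted from \citet[theorem~1]{jun2021improved} by setting the arm and query spaces to contain only a single element with the $1$-dimensional feature vector $[1]$.'' Your specialization of the Hessian to $n_x\,\dot\mu(2au_x)$ and the inversion of $\delta$ into the stated exponent is exactly the intended adaptation, carried out in more detail than the paper itself provides.
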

The weights $m_{\text{CH,DT}}^{\text{non-asym}}(\cdot)$ and $m_{\text{CH}}^{\text{non-asym}}(\cdot)$ from \cref{thm:estimation:nonasymptotic:LM,thm:estimation:nonasymptotic:GLM}, respectively, are functions of the utility difference $x\t\theta^*$ for a fixed barrier $a$. These weights determine how quickly estimation errors decay as the dataset size $n_x$ grows, with larger weights indicating faster error reduction. While these weights may not be optimal due to proof techniques, they highlight the distinct contributions of choices and decision times, consistent with our asymptotic analysis in \cref{ssec:estimation:asymp}. \Cref{fig:estimation:LM_vs_GLM:b} compares the weights for the choice-decision-time estimator (orange, $m_{\text{CH,DT}}^{\text{non-asym}}(\cdot)$) and the choice-only estimator (gray, $m_{\text{CH}}^{\text{non-asym}}(\cdot)$). For strong preferences, the choice-only weights quickly decay to zero, while the choice-decision-time weights remain relatively large. This supports our key insight that decision times complement choices and improve estimation for queries with strong preferences.

In summary, both asymptotic (\cref{ssec:estimation:asymp}) and non-asymptotic (\cref{ssec:estimation:nonasymp}) analyses demonstrate that the choice-decision-time estimator extracts more information from queries with strong preferences. This finding aligns with prior empirical work~\citep{clithero2018improving} and is further supported by our results in \cref{ssec:empirical:estimation}.

In fixed-budget best-arm identification, our choice-decision-time estimator's ability to extract more information from queries with strong preferences is especially valuable. Bandit learners, such as GSE~\citep{azizi2021fixed}, strategically sample queries, update estimates of $\theta^*$, and eliminate lower-utility arms. With the choice-only estimator, learners struggle to extract information from queries with strong preferences. To resolve this, one approach is to selectively sample queries with weak preferences, but this has two drawbacks. First, queries with weak preferences take longer to answer (i.e., require more resources), potentially lowering the `bang per buck' (information per resource)~\citep{badanidiyuru2018bandits}. Second, since $\theta^*$ is unknown in advance, learners cannot reliably target queries with weak preferences. In contrast, with our choice-decision-time estimator, learners leverage decision times to gain more information from queries with strong preferences, improving bandit learning performance. We integrate both estimators into bandit learning in \cref{sec:algorithm} and evaluate their performance in \cref{sec:empirical}.



\section{Interactive learning algorithm}\label{sec:algorithm}
We introduce the Generalized Successive Elimination (GSE) algorithm~\citep{azizi2021fixed,alieva2021robust,yang2022minimax} for fixed-budget best-arm identification in preference-based linear bandits, and outline the key options for each GSE component, which we empirically compare in \cref{sec:empirical}.

The pseudo-code for GSE is shown in \cref{alg:GSE}. The algorithm uses a hyperparameter $\eta$ to control the number of phases, the budget per phase, and the number of arms eliminated in each phase. GSE divides the total budget $B$ evenly across phases and reserves a buffer, sized by another hyperparameter $B_{\text{buff}}$, to prevent overspending in any phase (\cref{alg:GSE:budget_buffer}). In each phase, GSE computes an experimental design $\lambda$, a probability distribution over the query space, to guide query sampling. We consider two designs: the transductive design~\citep{fiez2019sequential}, $\lambda_{\text{trans}}$ (\cref{alg:GSE:design:trans}), and the weak-preference design~\citep{jun2021improved}, $\lambda_{\text{weak}}$ (\cref{alg:GSE:design:weak}). Both designs minimize the worst-case variance of utility differences between surviving arms. The transductive design weights all queries equally, whereas the weak-preference design prioritizes queries with weak preferences to counter the choice-only estimator's difficulty in extracting information from queries with strong preferences (\cref{sec:estimation}). Since $\theta^*$ is unknown, the weak-preference design identifies queries with weak preferences based on the previous phase's estimate $\widehat{\theta}_{\text{CH}}$. Then, GSE samples queries based on the design (\cref{alg:GSE:round}) and, after exhausting the phase's budget, estimates $\theta^*$ using either the choice-decision-time estimator $\widehat{\theta}_{\text{CH,DT}}$ (\cref{alg:GSE:estimation:LM}) or the choice-only estimator $\widehat{\theta}_{\text{CH}}$ (\cref{alg:GSE:estimation:GLM}). It then eliminates arms with low estimated utilities (\cref{alg:GSE:estimation:eliminate}). This process repeats until only one arm remains, which GSE recommends as the best arm (\cref{alg:GSE:estimation:recommend}).


The key difference between \cref{alg:GSE} and previous GSE algorithms~\citep{azizi2021fixed,alieva2021robust,yang2022minimax} is that our setting involves queries with random response times, unknown to the learner. Previous work assumes fixed resource consumption per query and uses deterministic rounding methods~\citep{fiez2019sequential,azizi2021fixed} to pre-allocate queries. This approach does not handle random resource usage. Instead, we adopt a random sampling procedure~\citep{tao2018best,camilleri2021high} in \cref{alg:GSE:round} to allocate queries based on the design. Random resource usage also requires tuning the elimination parameter $\eta$, to balance data collection and arm elimination, and the buffer size $B_{\text{buff}}$, to prevent overspending. In our empirical study (\cref{ssec:empirical:bandit}), we manually tune both parameters. Further theoretical analysis is needed to better understand and optimize them.


\begin{algorithm}
  \caption{Generalized Successive Elimination (GSE)~\citep{azizi2021fixed}}
  \label{alg:GSE}
  \begin{algorithmic}[1]
  \State{}\textbf{Input:}
    Arm space $\mathcal{Z}$,
    query space $\mathcal{X}$,
    non-decision time $t_{\text{nondec}}$,
    and total budget $B$.
  \State{}\textbf{Hyperparameters:}
    Elimination parameter $\eta$
    and buffer size $B_{\text{buff}}$.
  \State{}\textbf{Initialization:}
    $\mathcal{Z}_1\leftarrow\mathcal{Z}$.
  \For{each phase $k=1,\dots,K\coloneqq\left\lceil\log_{\eta}|\mathcal{Z}|\right\rceil$ with the budget $B_k\coloneqq B/K-B_{\text{buff}}$}\label{alg:GSE:budget_buffer}
  \label{alg:GSE:numPhases}

    \State{}Design 1.
    $\lambda_k\coloneqq
      \lambda_{\text{trans},k}\leftarrow
      \argmin_{\lambda\in\blacktriangle^{|\mathcal{X}|}}
        \max_{z \neq z' \in\mathcal{Z}_k}
        \|z-z'\|^2_{\left(\sum_{x\in\mathcal{X}}
        \lambda_{x} x x\t\right)^{-1}}$.
        \label{alg:GSE:design:trans}
    \State{}Design 2.
    $\lambda_k\coloneqq
      \lambda_{\text{weak},k}\leftarrow
      \argmin_{\lambda\in\blacktriangle^{|\mathcal{X}|}}
        \max_{z \neq z' \in\mathcal{Z}_k}
        \|z-z'\|^2_{\left(\sum_{x\in\mathcal{X}}
        \dot{\mu}(x\t\widehat{\theta}_{k-1})\, \lambda_{x} x x\t\right)^{-1}}$.
        \label{alg:GSE:design:weak}

    \State{}Sample queries $x_j\sim\lambda_{k}$
      and stop at $J_k$ if $\sum_{j=1}^{J_k-1} t_{\text{RT},x_j,j} \leq B_k$ and $\sum_{j=1}^{J_k} t_{\text{RT},x_j,j} > B_k$.
      \label{alg:GSE:round}

    \State{}Estimate 1.
    $\widehat{\theta}_k\coloneqq
      \widehat{\theta}_{\text{CH,DT},k}\leftarrow$
      apply \cref{eq:estimation:LM} to all the $J_k$ samples.
      \label{alg:GSE:estimation:LM}

    \State{}Estimate 2.
    $\widehat{\theta}_k\coloneqq
      \widehat{\theta}_{\text{CH},k}\leftarrow$
      apply \cref{eq:estimation:GLM:MLE} to all the $J_k$ samples.
      \label{alg:GSE:estimation:GLM}

    \State{}Update $\mathcal{Z}_{k+1}\leftarrow\text{Top-}\left\lceil\frac{|\mathcal{Z}_k|}{\eta}\right\rceil$ arms in $\mathcal{Z}_k$, ranked by the estimated utility $z\t\widehat{\theta}_k$.\label{alg:GSE:estimation:eliminate}

  \EndFor
  \State{}\textbf{Output:}\text{ the single one } $\widehat{z}\in\mathcal{Z}_{K+1}$.\label{alg:GSE:estimation:recommend}
  \end{algorithmic}
\end{algorithm}

\section{Empirical results}\label{sec:empirical}
This section empirically compares the GSE variations introduced in \cref{sec:algorithm}:
(1) $(\lambda_{\text{trans}},\widehat{\theta}_{\text{CH,DT}})$: Transductive design with choice-decision-time estimator.
(2) $(\lambda_{\text{trans}},\widehat{\theta}_{\text{CH}})$: Transductive design with choice-only estimator.
(3) $(\lambda_{\text{weak}},\widehat{\theta}_{\text{CH}})$: Weak-preference design with choice-only estimator.

\subsection{Estimation performance on synthetic data}\label{ssec:empirical:estimation}
We evaluate the estimation performance of the GSE variations on the ``sphere'' synthetic problem, a standard linear bandit problem in the literature~\citep{li2023optimal,tao2018best,degenne2020gamification}. Details are provided in \cref{app:sec:exp:sphere}.

Estimation performance, as discussed in \cref{sec:estimation}, depends on the utility difference $x^\top\theta^*$ and the barrier $a$. We vary $a$ over a range of values commonly used in psychology~\citep{wiecki2013hddm,clithero2018improving}. To examine how preference strength impacts estimation, we scale each arm $z$ to $c_{\mathcal{Z}} \cdot z$, effectively scaling each utility difference $x^\top\theta^*$ to $c_{\mathcal{Z}} \cdot x^\top\theta^*$. Small $c_{\mathcal{Z}}$ values correspond to problems with weak preferences, while large values correspond to strong preferences. For each $(c_{\mathcal{Z}}, a)$ pair, the system generates $100$ random problem instances and runs $100$ repeated simulations per instance. In each simulation, the GSE variations sample $50$ queries, ignoring the response time budget, and compute $\widehat{\theta}$. Performance is evaluated by $\mathbb{P}[\argmax_{z\in\mathcal{Z}} z^\top \widehat{\theta}\neq z^*]$, which reflects the best-arm identification goal defined in \cref{sec:setting}. To isolate the effect of estimation, we allow $\lambda_{\text{weak}}$ access to the true $\theta^*$, enabling it to perfectly compute the terms $\dot{\mu}(x^\top\theta^*)$ used in \cref{alg:GSE:design:weak} of \cref{alg:GSE}.

As shown in \cref{fig:results:estimation:GLM}, fixing the barrier $a$ and examining the vertical line, as $c_{\mathcal{Z}}$ increases and preferences become stronger, the performance of the choice-only estimator with the transductive design first improves and then declines. The initial improvement arises because larger $c_{\mathcal{Z}}$ increases utility differences between the best arm and others, theoretically simplifying best-arm identification. The subsequent decline, highlighted by the dark curved band, supports our insight from \cref{sec:estimation} that choices from queries with strong preferences provide limited information. Fixing $c_{\mathcal{Z}}$ and examining the horizontal line, performance first improves and then declines. This trend aligns with \cref{fig:estimation:LM_vs_GLM:a} and \cref{ssec:estimation:asymp}, where higher barriers $a$ increase the choice-only weights for queries with weak preferences, initially improving performance. However, as $a$ grows, fewer queries exhibit increased weights, while most queries' weights decrease, leading to the later performance drop.

In \Cref{fig:results:estimation:GLMWeak}, for moderate $c_{\mathcal{Z}}$, the choice-only estimator with the weak-preference design outperforms the transductive design (\cref{fig:results:estimation:GLM}), demonstrating that focusing on queries with weak preferences improves estimation. However, as $c_{\mathcal{Z}}$ becomes too large, performance declines because many $\dot{\mu}(x^\top\theta^*)$ in \cref{alg:GSE:design:weak} of \cref{alg:GSE} approach zero, preventing informative queries from being sampled. This advantage of the weak-preference design assumes perfect knowledge of $\theta^*$ and equal resource consumption across queries. In practice, where $\theta^*$ is unknown and weak-preference queries require longer response times, the transductive design performs better, as shown in \cref{ssec:empirical:bandit}.

\Cref{fig:results:estimation:LM} shows that the choice-decision-time estimator consistently outperforms the choice-only estimators under both the transductive and weak-preference designs, particularly for strong preferences. This suggests that for queries with strong preferences, decision times complement choices and improve estimation, confirming our theoretical insights from \cref{sec:estimation}, while for queries with weak preferences, decision times add little value but do not degrade performance.
The performance also improves with a higher barrier $a$, supporting the insights conveyed by \cref{fig:estimation:LM_vs_GLM:a} and \cref{ssec:estimation:asymp}.

\begin{figure}[t]
  \centering
  \begin{subfigure}[t]{0.31\textwidth}
    \centering
    \includegraphics[trim=0cm 0cm 0cm 0cm,clip,width=\textwidth]{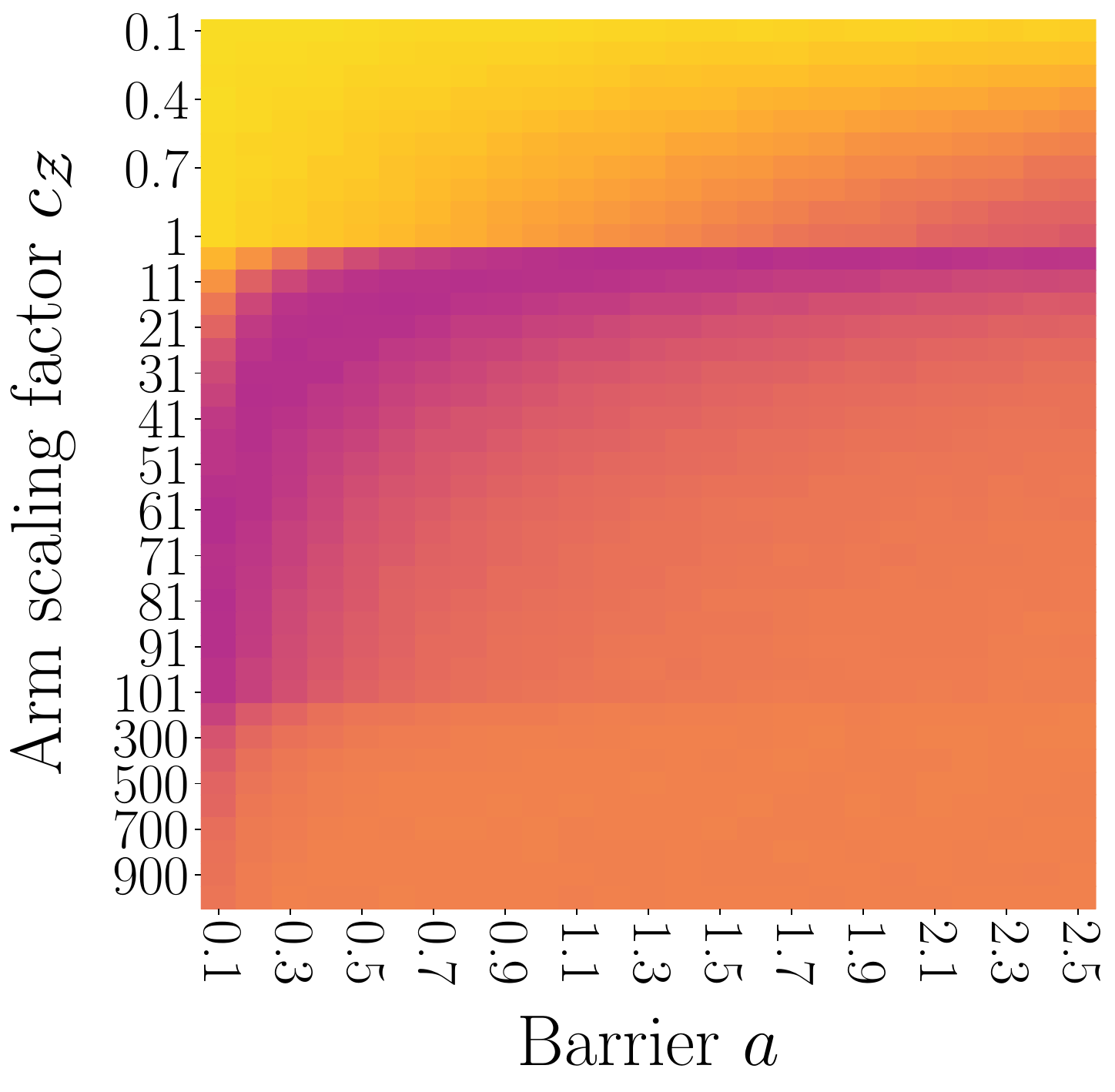}
    \caption{$(\lambda_{\text{trans}},\widehat{\theta}_{\text{CH}})$}
    \label{fig:results:estimation:GLM}
  \end{subfigure}
  \hfill
  \begin{subfigure}[t]{0.31\textwidth}
    \centering
    \includegraphics[trim=0cm 0cm 0cm 0cm,clip,width=\textwidth]{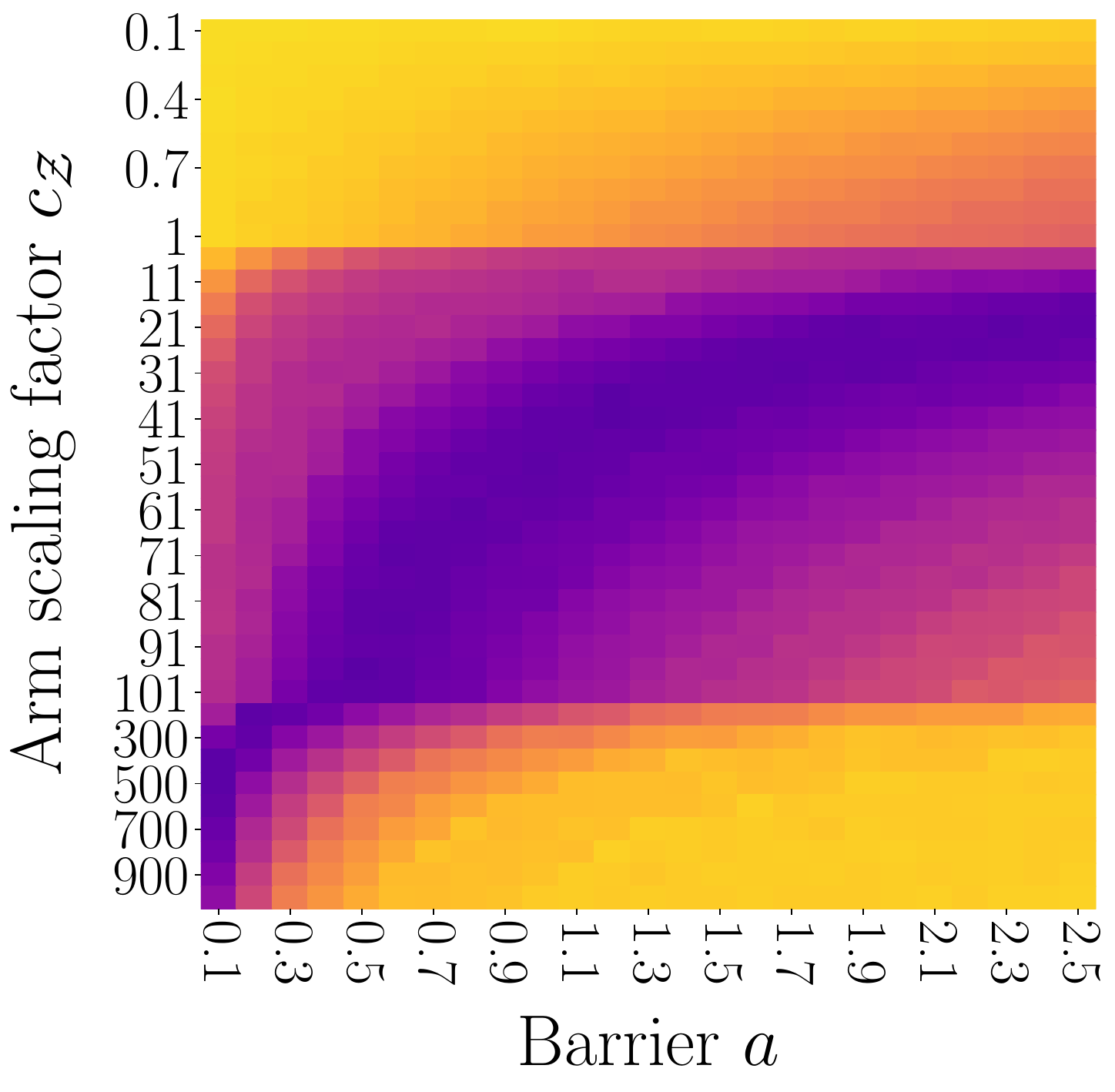}
    \caption{$(\lambda_{\text{weak}},\widehat{\theta}_{\text{CH}})$}
    \label{fig:results:estimation:GLMWeak}
  \end{subfigure}
  \hfill
  \begin{subfigure}[t]{0.31\textwidth}
    \centering
    \includegraphics[trim=0cm 0cm 0cm 0cm,clip,width=\textwidth]{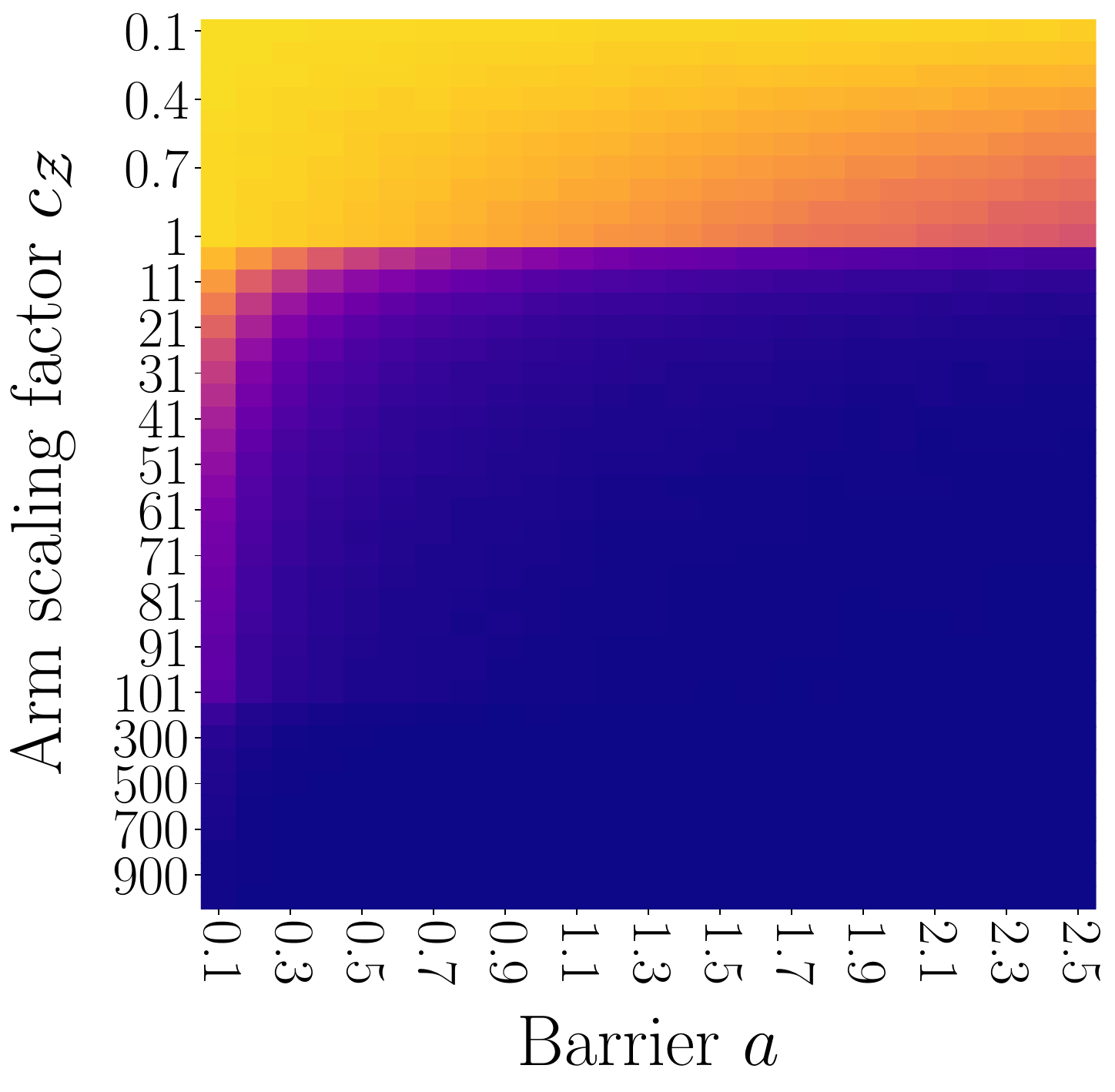}
    \caption{$(\lambda_{\text{trans}},\widehat{\theta}_{\text{CH,DT}})$}
    \label{fig:results:estimation:LM}
  \end{subfigure}
  \hfill
  \begin{subfigure}[t]{0.05\textwidth}
    \centering
    \raisebox{0.5cm}{
      \includegraphics[width=\textwidth]{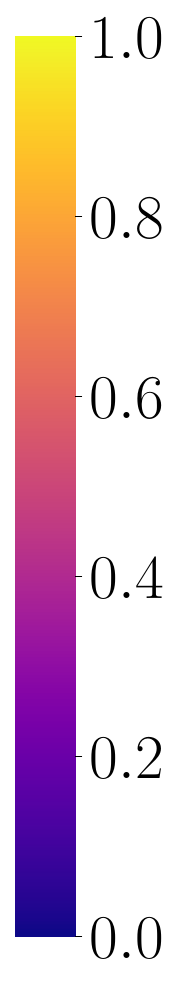}
    }
  \end{subfigure}
  \caption{Three heatmaps show estimation error probabilities, $\mathbb{P}[\argmax_{z\in\mathcal{Z}} z^\top \widehat{\theta}\neq z^*]$, for three GSE variations, shown as functions of the arm scaling factor $c_{\mathcal{Z}}$ and barrier $a$. Darker colors indicate better estimation. (a) The choice-only estimator $\widehat{\theta}_{\text{CH}}$ with the transductive design $\lambda_{\text{trans}}$ struggles as $c_{\mathcal{Z}}$ increases (i.e., preferences become stronger), highlighting that choices from queries with strong preferences provide limited information. (b) The weak-preference design $\lambda_{\text{weak}}$ improves (a) by sampling queries with weak preferences but assumes perfect knowledge of $\theta^*$ and equal resource consumption across queries. (c) The choice-decision-time estimator $\widehat{\theta}_{\text{CH,DT}}$ with $\lambda_{\text{trans}}$ outperforms both choice-only methods in (a) and (b), showing that decision times complement choices and improve estimation, especially for strong preferences.}
  \label{fig:results:estimation}
\end{figure}

\subsection{Fixed-budget best-arm identification performance on real datasets}\label{ssec:empirical:bandit}
This section compares the bandit performance of six GSE variations. The first three are as previously defined at the beginning of \cref{sec:empirical}: $(\lambda_{\text{trans}},\widehat{\theta}_{\text{CH,DT}})$, $(\lambda_{\text{trans}},\widehat{\theta}_{\text{CH}})$, and $(\lambda_{\text{weak}},\widehat{\theta}_{\text{CH}})$.

The 4th GSE variation, $(\lambda_{\text{trans}}, \widehat{\theta}_{\text{CH},\mathbb{RT}})$, evaluates the performance of the choice-decision-time estimator when the non-decision time $t_{\text{nondec}}$ is unknown. The estimator, $\widehat{\theta}_{\text{CH},\mathbb{RT}}$, is identical to the original choice-decision-time estimator from Eq.~\eqref{eq:estimation:LM}, but with response times used in place of decision times.

The 5th GSE variation, $(\lambda_{\text{trans}},\widehat{\theta}_{\text{CH,logit}})$, is based on \citet[eq.~(5)]{wagenmakers2007ez}, which states that $x\t\cdot\b{2a\theta^*}=\mu^{-1}(\mathbb{P}[c_x=1])$, where $\mu^{-1}(p)\coloneqq\log\b{p/\b{1-p}}$. By incorporating our linear utility structure, we obtain the following choice-only estimator $\widehat{\theta}_{\text{CH,logit}}$:
\begin{equation*}
    \widehat{\theta}_{\text{CH,logit}}
    \coloneqq
    \left(\sum_{x\in\mathcal{X}_{\text{sample}}} n_x\: xx\t\right)^{-1}\sum_{x\in\mathcal{X}_{\text{sample}}}
    n_x \: x
    \cdot
    \mu^{-1}\b{\wh{\mathfrak{C}}_{x}},
\end{equation*}
where $\wh{\mathfrak{C}}_{x}\coloneqq\frac{1}{n_x}\sum_{i=1}^{n_x}\frac{1}{2}\b{c_{x,s_{x,i}}+1}$ is the empirical mean of the binary choices coded as 0 or 1.

The 6th GSE variation, $(\lambda_{\text{trans}}, \widehat{\theta}_{\text{CH,DT,logit}})$, is based on \citet[eq.~(6)]{xiang2024combining}, which states that $x\t\theta^*=\text{sgn}\b{c_x} \sqrt{\bbE\qb{c_x}/\bbE\qb{t_x} \cdot 0.5\:\mu^{-1}\b{\mathbb{P}\qb{c_x=1}}}$. This identity forms the foundation of the estimator in \citet[eq.~(7)]{berlinghieri2023measuring}. By incorporating our linear utility structure, we obtain the following choice-decision-time estimator $\widehat{\theta}_{\text{CH,DT,logit}}$:
\begin{equation*}
    \widehat{\theta}_{\text{CH,DT,logit}}
    \coloneqq
    \left(\sum_{x\in\mathcal{X}_{\text{sample}}} n_x\: xx\t\right)^{-1}\sum_{x\in\mathcal{X}_{\text{sample}}}
    n_x \: x
    \cdot \text{sgn}\b{c_x}
    \sqrt{
      \frac{\bbE\qb{c_x}}{\bbE\qb{t_x}}
      \cdot
      \frac{1}{2}\: \mu^{-1}\b{\wh{\mathfrak{C}}_{x}}
    }.
\end{equation*}

We evaluate six GSE variations on bandit instances constructed from three real-world datasets of human choices and response times. Each dataset includes multiple participants. For each participant, we estimated dEZDM parameters, built a bandit instance, and simulated the GSE variations to assess performance. Details on experimental procedures are provided in \cref{app:sec:experiment}. Key results for the three domains are shown in \cref{fig:results:all}, with full results in \cref{app:sec:experiment}. First, $(\lambda_{\text{trans}},\widehat{\theta}_{\text{CH,DT}})$ consistently outperforms $(\lambda_{\text{trans}},\widehat{\theta}_{\text{CH}})$, demonstrating the benefit of incorporating decision times. Second, both of these variations outperform $(\lambda_{\text{weak}},\widehat{\theta}_{\text{CH}})$, as discussed in \cref{ssec:empirical:estimation}. Third, $(\lambda_{\text{trans}},\widehat{\theta}_{\text{CH,DT}})$ performs similarly to $(\lambda_{\text{trans}},\widehat{\theta}_{\text{CH,}\mathbb{RT}})$, suggesting that not knowing the non-decision time has minimal impact. Finally, $(\lambda_{\text{trans}},\widehat{\theta}_{\text{CH,logit}})$~\citep{wagenmakers2007ez} and $(\lambda_{\text{trans}},\widehat{\theta}_{\text{CH,DT,logit}})$~\citep{xiang2024combining} do not perform as consistently well as $(\lambda_{\text{trans}},\widehat{\theta}_{\text{CH,DT}})$, highlighting the effectiveness of our proposed choice-decision-time estimator (\cref{eq:estimation:LM}).

\begin{figure}[t]
  \centering
  \begin{subfigure}[b]{1\textwidth}
    \centering
    \includegraphics[trim=0cm 4cm 0cm 4cm,clip,width=\textwidth]{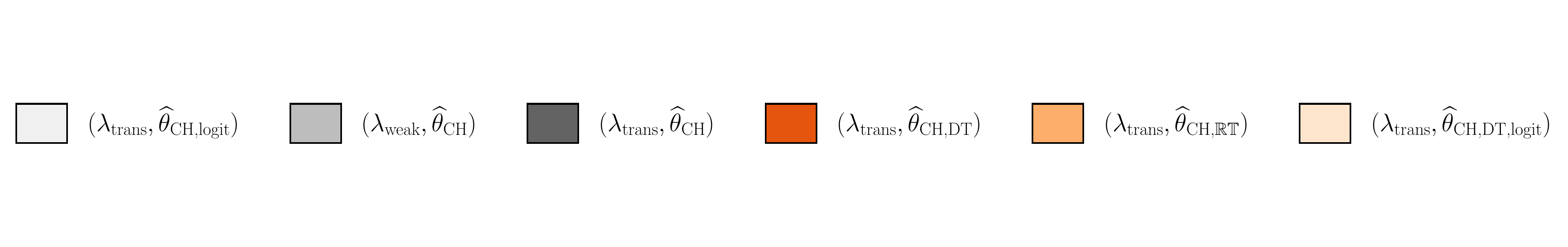}
  \end{subfigure}
  \begin{subfigure}[t]{0.32\textwidth}
    \centering
    \includegraphics[trim=0cm 0cm 0cm 0cm,clip,width=\textwidth]{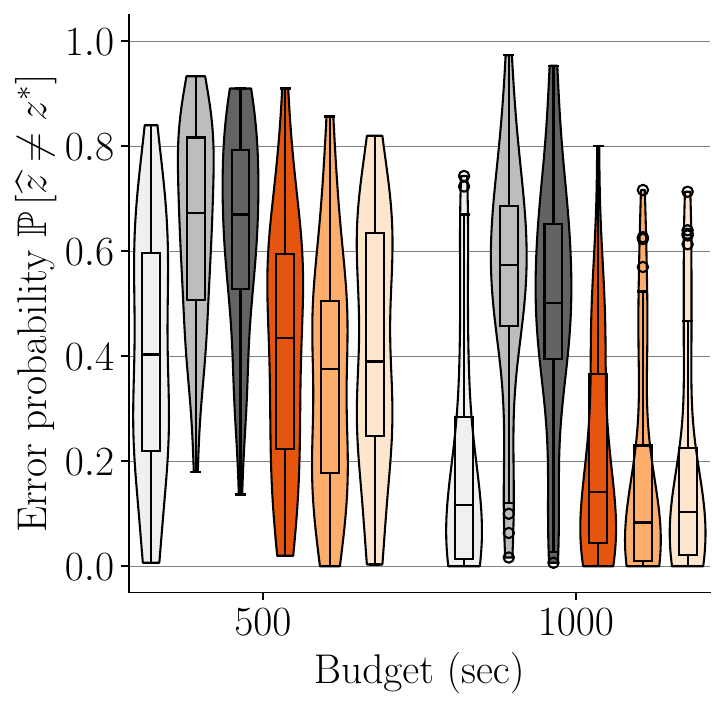}
    \caption{Food-risk dataset~\citep{smith2018attention}}
    \label{fig:results:all:foodrisk}
  \end{subfigure}
  \begin{subfigure}[t]{0.32\textwidth}
    \centering
    \includegraphics[trim=0cm 0cm 0cm 0cm,clip,width=\textwidth]{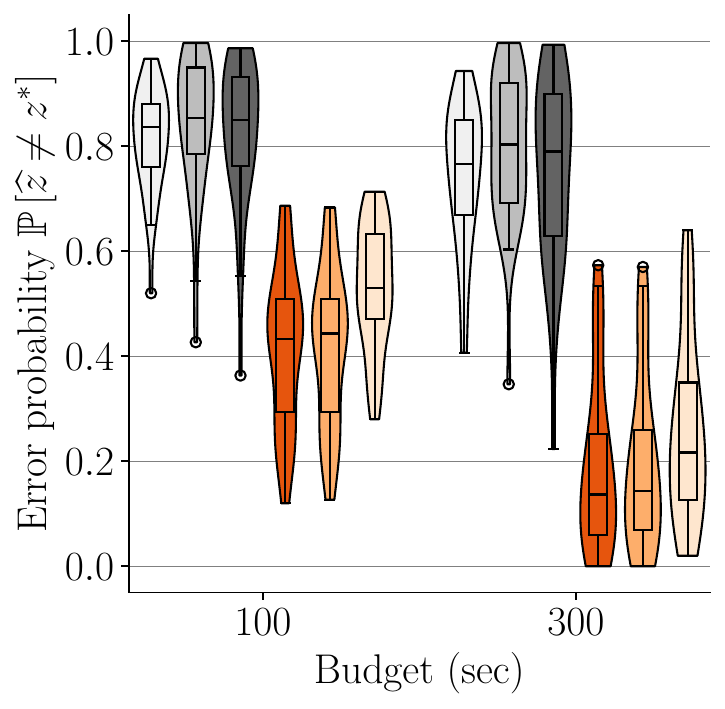}
    \caption{Snack dataset~\citep{clithero2018improving}}
    \label{fig:results:all:snack:1}
  \end{subfigure}
  \begin{subfigure}[t]{0.32\textwidth}
    \centering
    \includegraphics[trim=0cm 0cm 0cm 0cm,clip,width=\textwidth]{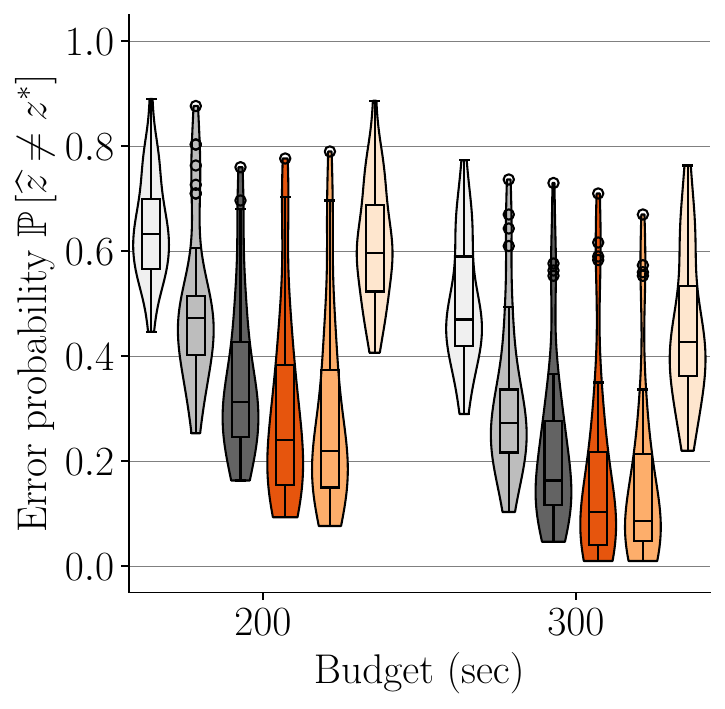}
    \caption{Snack dataset~\citep{krajbich2010visual}}
    \label{fig:results:all:snack:2}
  \end{subfigure}
  \caption{This figure shows violin plots (with overlaid box plots) for datasets (a), (b), and (c), showing the distribution of best-arm identification error probabilities, $\mathbb{P}\left[\widehat{z}\neq z^*\right]$, for all bandit instances across six GSE variations and two budgets.
  The box plots follow the convention of \href{https://matplotlib.org/stable/api/_as_gen/matplotlib.pyplot.boxplot.html}{the \texttt{matplotlib} Python package}. For each GSE variation and budget, the horizontal line in the middle of the box represents the median of the error probabilities across all bandit instances. Each error probability is averaged over $300$ repeated simulations under different random seeds. The box's upper and lower borders represent the third and first quartiles, respectively, with whiskers extending to the farthest points within $1.5 \times$ the interquartile range. Flier points indicate outliers beyond the whiskers.}
  \vspace{-0.5em}
  \label{fig:results:all}
\end{figure}

\section{Conclusion and future work}\label{sec:conclusion}
This work is the first to leverages human response times to improve fixed-budget best-arm identification in preference-based linear bandits. We proposed a utility estimator that combines choices and response times. Both theoretical and empirical analyses show that response times provide complementary information about preference strength, particularly for queries with strong preferences, enhancing estimation performance. When integrated into a bandit algorithm, incorporating response times consistently improved results across three real-world datasets.

One limitation of this approach is its reliance on reliable response time data, which may be challenging in crowdsourcing settings where participants' focus can vary~\citep{myers2022practical}. Future work could integrate eye-tracking data into the DDM framework~\citep{krajbich2010visual,fisher2017attentional,smith2018attention,krajbich2019accounting,yang2023dynamic} to monitor attention and filter unreliable responses. Another direction is to relax the assumption of known non-decision times by estimating them directly from data, following methods proposed by \citet{wagenmakers2007ez}.




\clearpage
\bibliographystyle{abbrvnat}
\bibliography{references}







\appendix

\clearpage
\section{Broader impacts}\label{app:sec:social}
Incorporating human response times in human-interactive AI systems provides significant benefits, such as efficiently eliciting user preferences, reducing cognitive loads on users, and improving accessibility for users with disabilities and various cognitive abilities. These benefits can greatly improve recommendation systems, assistive robots, online shopping platforms, and fine-tuning for large language models. However, using human response times also raises concerns about privacy, manipulation, and bias against individuals with slower response times. Governments and law enforcement should work together to mitigate these negative consequences by establishing ethical standards and regulations. Businesses should always obtain user consent before recording response times.
\clearpage
\section{Literature review}\label{app:sec:literature}
\subsection{Bounded accumulation models for choices and response times}\label{app:sec:literature:1}
Bounded Accumulation Models (BAMs) describe human decision-making using an accumulator (or sampling rule) and a stopping rule~\citep{webb2019neural}. In binary choice tasks, such as two-alternative forced choice tasks, a widely used BAM is the drift-diffusion model (DDM)~\citep{ratcliff2008diffusion}, which models decisions as Brownian motion with fixed boundaries. To capture differences in human response times for correct and incorrect answers, \citet{ratcliff2008diffusion} allows drift, starting point, and non-decision time to vary across trials. \citet{wagenmakers2007ez} later introduced the EZ-diffusion model (EZDM), a simplified version of DDM with closed-form solutions for choice and response time moments, making parameter estimation easier and more robust. EZDM assumes deterministic drift, starting point, and non-decision time, fixed across trials, with the starting point equidistant from the boundaries. \citet{berlinghieri2023measuring} specialized EZDM to the difference-based EZDM (dEZDM), where the drift represents the utility difference between two options. For binary queries with arms $z_1$ and $z_2$, the drift is modeled as $u_{z_1} - u_{z_2}$, where $u_{z_1}$ and $u_{z_2}$ are the utilities of $z_1$ and $z_2$.

As discussed in \cref{sec:setting}, we impose a linear utility structure on the dEZDM, where each arm's utility is given by $u_z = z^\top\theta^*$, with $\theta^*$ denotes the human preference vector. This approach is supported by both bandit and psychology literature. In bandits, linear utility models scale efficiently with a large number of arms~\citep{chu2011contextual,li2010contextual}. In psychology, linear combinations of attributes are commonly used in multi-attribute decision-making models~\citep{trueblood2014multiattribute,fisher2017attentional,yang2023dynamic}. The standard dEZDM in \citep[Definition~1]{berlinghieri2023measuring} is a special case of our dEZDM with a linear utility structure, where arms correspond to the standard basis vectors in Euclidean space $\mathbb{R}^d$. This mirrors the relationship between multi-armed bandits and linear bandits.


Similarly to our approach, \citet{shvartsmanresponse} parameterize the human utility function as a Gaussian process and propose a moment-matching Bayesian inference method that uses both choices and response times to estimate latent utilities. Unlike our work, their focus is solely on estimation and does not address bandit optimization. Integrating their estimation techniques into bandit optimization presents an interesting avenue for future research.

Another widely used BAM is the race model~\citep{usher2001time,brown2005ballistic}, which naturally extends to queries with more than two options. In race models, each option has its own accumulator, and the decision ends when any accumulator reaches its barrier. BAMs can also model human attention during decision-making. For example, the attentional-DDM~\citep{krajbich2010visual,krajbich2019accounting,yang2023dynamic} jointly models choices, response times, and eye movements across different options or attributes. Similarly, \citet{thomas2019gaze} introduce the gaze-weighted linear accumulator model to study gaze bias at the trial level. To incorporate learning effects, \citet{pedersen2017drift} combine reinforcement learning (RL) with DDM, where the human adjusts the drift through RL. In contrast, our work uses RL for AI decision-making when interacting with humans. BAMs also connect to Bayesian RL models of human cognition. For example, \citet{fudenberg2018speed} propose a model where humans balance decision accuracy and time cost, showing it is equivalent to a DDM with time-decaying boundaries. Neurophysiological evidence supports BAMs. For instance, EEG recordings demonstrate that neurons exhibit accumulation processes and decision thresholds~\citep{webb2019neural}. Additionally, diffusion processes have been used to model neural firing rates~\citep{ratcliff2016diffusion}.

\subsection{Parameter estimation for bounded accumulation models}
BAMs often lack closed-form density functions, so hierarchical Bayesian inference is commonly used for parameter estimation~\citep{wiecki2013hddm}. While flexible, these methods are computationally intensive, making them impractical for real-time applications in online learning systems. Faster estimators~\citep{wagenmakers2007ez,berlinghieri2023measuring,xiang2024combining} usually estimate parameters for individual option pairs without leveraging data across pairs. To address this, we propose a computationally efficient method for estimating linear human utility functions, which we integrate into bandit learning. In \cref{ssec:empirical:bandit}, we empirically show that our estimator outperforms those from prior work~\citep{wagenmakers2007ez,xiang2024combining}.

In practice, using response time data requires pre-processing and model fitting, as outlined by \citet{myers2022practical}. Additionally, \citet{alos2021time,fudenberg2020testing,baldassi2020behavioral} propose statistical tests to assess the suitability of various DDM extensions for a given dataset.

\subsection{Uses of response times}
Response times serve multiple purposes, as highlighted by \citet{clithero2018response}. A primary use is improving choice prediction. For instance, \citet{clithero2018improving} showed that DDM predicts choice probabilities more accurately than the logit model, with parameters estimated through Bayesian Markov chain Monte Carlo. Similarly, \citet{alos2021time} demonstrated that response times enhance the identifiability of human preferences compared to using choices alone.

Response times also shed light on human decision-making processes. \citet{castro2019cognitive} applied DDM analysis to explore how cognitive workload, induced by secondary tasks, influences decision-making. Analyzing response times has been a long-standing method in cognitive testing to assess mental capabilities~\citep{de2019overview}. Additionally, \citet{zhang2023goal,zhang2024human} introduced a framework that uses human planning time to infer their intended goals.

Response times can also enhance AI decision-making. In dueling bandits and preference-based RL~\citep{bengs2021preference}, human choice models are commonly used for preference elicitation. One such model, the random utility model, can be derived from certain BAMs~\citep{alos2021time}. For example, as discussed after \cref{eq:setting:ECH_VCH_ERT}, both the Bradley-Terry model~\citep{BradleyTerry1952} and dEZDM~\citep{wagenmakers2007ez,berlinghieri2023measuring} yield logistic choice probabilities in the form $\mathbb{P}[z_1\succ z_2]=\sigma_{logistic}(u_{z_1}-u_{z_2})=1/\left(1+\exp\left(-c \cdot (u_{z_1}-u_{z_2})\right)\right)$, where $u_{z_1}$ and $u_{z_2}$ denote the utilities of $z_1$ and $z_2$ and $c$ is some constant~\citep[section~3.2]{bengs2021preference}. Our work leverages this connection between random utility models and choice-response-time models to estimate human utilities using both choices and response times.

We hypothesize that our key insight, that response times provide complementary information, especially for queries with strong preferences, extends beyond the dEZDM and the specific logistic link function $\sigma_{logistic}$. Many psychological models capture both choices and response times but lack closed-form choice distributions. In such cases, the choice probability is often expressed as $\mathbb{P}[z_1\succ z_2] = \sigma^{\dagger}(u_{z_1}, u_{z_2})$, where $\sigma^{\dagger}$ is a function of $u_{z_1}$ and $u_{z_2}$ without a closed form. Fixing $u_{z_2}$ and varying $u_{z_1}$ defines the psychometric function $\sigma^{\dagger}(\cdot,u_{z_2})$, which typically exhibits an ``S'' shape~\citep[fig.~1.1]{Strzalecki2024theory}. As preferences become stronger, $\sigma^{\dagger}$ flattens, similar to \cref{fig:setting:DDM:2,fig:setting:DDM:3}, suggesting that choices carry less information. We conjecture that response times remain a valuable complementary signal in such cases.

If we further assume the choice probability depends only on the utility difference, $u_{z_1} - u_{z_2}$, then $\mathbb{P}[z_1\succ z_2] = \sigma^{\ddagger}(u_{z_1} - u_{z_2})$, where the link function $\sigma^{\ddagger}$ is typically assumed to be strictly monotonic and bounded within $[0, 1]$~\citep[section~3.2]{bengs2021preference}. These properties naturally produce an ``S''-shaped curve that flattens as preferences become stronger, again suggesting that choices provide less information. In such cases, we conjecture that response times can complement choices to enhance learning.

In summary, BAMs, like DDMs and race models, offer a strong theoretical framework for understanding human decision-making, supported by both behavioral and neurophysiological evidence. These models have been widely applied to choice prediction and the study of human cognitive processes. Our work connects BAMs with bandit algorithms by introducing a computationally efficient estimator for online preference learning. Future research could explore other BAM variants to further examine the benefits of incorporating response times.

\clearpage
\section{Proofs}
\subsection{Parameters of the difference-based EZ-Diffusion Model (dEZDM)~\citep{wagenmakers2007ez,berlinghieri2023measuring}}\label{app:ssec:model}
Given a human preference vector $\theta^*$, for each query $x\in\mathcal{X}$, the utility difference is defined as $u_x\coloneqq x\t\theta^*$. In the dEZDM model (introduced in \cref{sec:setting}), with barrier $a$, according to \citet[eq.~(4), (6), and (9)]{wagenmakers2007ez}, the human choice $c_x$ has the following properties:
\begin{equation*}\begin{split}
    \bbP\b{\cx=1} = \frac{1}{1+\exp\b{-2au_x}},
    \quad \bbP\b{\cx=-1} = \frac{\exp\b{-2au_x}}{1+\exp\b{-2au_x}}.
\end{split}\end{equation*}
Thus, the expected choice is $\bbE\qb{\cx} = \tanh(au_x)$, and the choice variance is $\bbV\qb{c_x} = 1-\tanh(au_x)^2$ (restating \cref{eq:setting:ECH_VCH_ERT}).

The human decision time $\tx$ has the following properties:
\begin{equation*}\begin{split}
    \bbE\qb{\tx} = {}&
        \begin{cases}
            \frac{a}{u_x}\tanh(a u_x) \quad\text{if } u_x\neq 0\\
            a^2 \hfill\text{ if } u_x= 0
        \end{cases}
        \quad\text{(restating \cref{eq:setting:ECH_VCH_ERT})},\\
    \bbV\qb{\tx} = {}&
        \begin{cases}
            \frac{a}{u_x{}^3}\frac{\exp\b{4au_x}- 1 - 4au_x\exp\b{2au_x}}{\b{\exp\b{2au_x}+1}^2}
            \quad\text{if } u_x\neq 0\\
            2a^4/3 \hfill\text{ if } u_x= 0
        \end{cases}.
\end{split}\end{equation*}
From this, we obtain the following key relationship:
\begin{equation*}\begin{split}
    \frac{\bbE\qb{c_x}}{\bbE\qb{t_x}} = \frac{u_x}{a} = x\t \b{\frac{1}{a}\theta^*} \quad\text{(restating \cref{eq:estimation:identity})}.
\end{split}\end{equation*}
All these parameters depend solely on the utility difference $u_x\coloneqq x^\top\theta^*$ and the barrier $a$.

\subsection{Asymptotic normality of the choice-decision-time estimator for estimating the human preference vector \texorpdfstring{$\theta^*$}{theta*}}\label{app:ssec:proof:asymptotic:LM}
We now present the proof of the asymptotic normality result for the choice-decision-time estimator, $\widehat{\theta}_{\text{CH,DT}}$, as stated in \cref{thm:estimation:asymptotic:LM}, which is restated as follows:
\thmEstimationAsymptoticLM*
\begin{proof}
    To simplify notation, we define:
    \begin{equation}
        \wh{\calC}_x = \frac{1}{n}\sum_{i=1}^n c_{x,s_{x,i}}, \quad \calC_x = \bbE\qb{c_x}, \quad \wh{\calT}_x = \frac{1}{n}\sum_{i=1}^n t_{x,s_{x,i}}, \quad \calT_x = \bbE\qb{t_x}.
    \end{equation}
    For brevity, we abbreviate $\calX_{\text{sample}}$ as $\calX$ and $\wh{\theta}_{\text{CH,DT},n}$ as $\wh{\theta}$.
    The estimator $\wh{\theta}$ can be expressed as:
    \begin{equation}\begin{split}
        \wh{\theta} = \b{\sum_{x'\in\calX} n x'x'{}\t}^{-1} \sum_{x\in\calX} n x \ \frac{\wh{\calC}_x}{\wh{\calT}_x}
        \quad\quad\text{(restating \cref{eq:estimation:LM})}.
    \end{split}\nonumber\end{equation}
    We rewrite $\theta^*/a$ as:
    \begin{equation}\begin{split}
        \theta^*/a = {}& \b{\sum_{x'\in\calX} n x'x'{}\t}^{-1} \sum_{x\in\calX} n xx\t \ \frac{\theta^*}{a}\\
        = {}& \b{\sum_{x'\in\calX} n x'x'{}\t}^{-1} \sum_{x\in\calX} n x \ \frac{\chx}{\rtx}.
    \end{split}\end{equation}
    Therefore, for any vector $y\in\bbR^d$, we have:
    \begin{equation}\begin{split}\label{eq:lem_asymp_1}
        y\t \b{\wh{\theta} - \frac{\theta^*}{a}} = {}& y\t \b{\sum_{x'\in\calX} n x'x'{}\t}^{-1} \sum_{x\in\calX} n x \b{\frac{\wh{\calC}_{x}}{\wh{\calT}_{x}}-\frac{\chx}{\rtx}}
        \eqqcolon \sum_{x\in\calX} \xi_x \b{\frac{\wh{\calC}_{x}}{\wh{\calT}_{x}}-\frac{\chx}{\rtx}},
    \end{split}\end{equation}
    where $\xi_x$ is defined as $\xi_x\coloneqq y\t \b{\sum_{x'\in\calX} n x'x'{}\t}^{-1} n x$. In \cref{eq:lem_asymp_1}, the only random variables are $\wh{\calC}_{x}$ and $\wh{\calT}_{x}$. For simplicity, for any $x_i\in\calX\coloneqq\{x_1, \cdots, x_{|\calX|}\}$, we slighly abuse the notation and use $\xi_i$, $c_i$, $t_i$, $\calC_i$, $\calT_i$, $\wh{\calC_i}$ and $\wh{\calT_i}$ denote $\xi_{x_i}$, $c_{x_i}$, $t_{x_i}$, $\calC_{x_i}$, $\calT_{x_i}$, $\wh{\calC}_{x_i}$, and $\wh{\calT}_{x_i}$, respectively.
    By applying the multidimensional central limit theorem, we have:
    \begin{equation}\begin{split}\label{eq:lem_asymp_2}
        \sqrt{n}\begin{bmatrix}
            \wh{\calC}_1-\calC_1\\
            \wh{\calT}_1-\calC_1\\
            \vdots\\
            \wh{\calC}_{|\calX|}-\calC_{|\calX|}\\
            \wh{\calT}_{|\calX|}-\calC_{|\calX|}\\
        \end{bmatrix} \overset{D}{\longrightarrow} {}& \calN\b{0, \begin{bmatrix}
            \bbV\qb{c_1} & \cov\qb{c_1,t_1} \\
            \cov\qb{t_1,c_1} & \bbV\qb{t_1} \\
            & & \multicolumn{2}{c}{\multirow{2}{*}{$\ddots$}}\\
            \\
            & & & &\bbV\qb{c_{|\calX|}} & \cov\qb{c_{|\calX|},t_{|\calX|}}\\
            & & & &\cov\qb{t_{|\calX|},c_{|\calX|}} & \bbV\qb{t_{|\calX|}} \\
        \end{bmatrix}}\\
        = {}& \calN\Bigg(0, \diag\Big[\bbV\qb{c_1},\bbV\qb{t_1}, \cdots, \bbV\qb{c_{|\calX|}},\bbV\qb{t_{|\calX|}}\Big]\Bigg).
    \end{split}\end{equation}
    In the first line of \cref{eq:lem_asymp_2}, the block-diagonal structure of the covariance matrix emerges because $(\wh{\calC}_i, \wh{\calT}_i)_{i\in[|\calX|]}$ are independent of each other.
    For any fixed $x_i$, to derive the second line of \cref{eq:lem_asymp_2}, we use the fact that:
    \begin{equation}\begin{split}\label{eq:choice_time_uncorrelated}
        \bbE\qb{t_ic_i} = {}& \bbP\b{c_i=1}\bbE\qb{1\cdot t_i|c_i=1} + \bbP\b{c_i=-1}\bbE\qb{-1\cdot t_i|c_i=-1}\\
        \overset{(i)}{=} {}& \b{\bbP\b{c_i=1}-\bbP\b{c_i=-1}}\bbE\qb{t_i|c_i=1}\\
        = {}& \bbE\qb{c_i}\bbE\qb{t_i},
    \end{split}\end{equation}
    where $(i)$ is because $\bbE\qb{t_i|c_i=1} = \bbE\qb{t_i|c_i=-1}$~\citep[eq.~(A.7) and (A.9)]{palmer2005effect}.
    Therefore, \cref{eq:choice_time_uncorrelated} implies that $\cov(c_i,t_i)=0$~\footnote{\Cref{eq:choice_time_uncorrelated} implies that for any query $x_i$, the human choice $c_i$ and decision time $t_i$ are uncorrelated. Moreover, they are independent, as discussed by \citet[the discussion above eq.~(7)]{drugowitsch2016fast} and \citet[proposition~3]{baldassi2020behavioral}.}, which justifies the second line of \cref{eq:lem_asymp_2}.

    Now, let us define the function $g(c_1,t_1, \cdots, c_{|\calX|}, t_{|\calX|}) \coloneqq \sum_{i\in[|\calX|]} \xi_i \  c_i/t_i$. The gradient of $g$ is:
    \begin{equation}\begin{split}
        \nabla g|_{\b{c_1,t_1, \cdots, c_{|\calX|}, t_{|\calX|}}} = \begin{bmatrix}
            \xi_1/t_1 & -\xi_1c_1/t_1^2 & \cdots & \xi_{|\calX|}/t_{|\calX|} & -\xi_{|\calX|}c_{|\calX|}/t_{|\calX|}^2
        \end{bmatrix}\t.
    \end{split}\end{equation}
    Using the multivariate delta method, we obtain:
    \begin{equation}\begin{split}\label{eq:lem_asymp_3}
        {}& \sqrt{n}\sum_{i\in[|\calX|]} \xi_i \b{\frac{\wh{\calC}_{i}}{\wh{\calT}_{i}} - \frac{\calC_i}{\calT_i}}\\
        = {}& \sqrt{n} \b{g\b{\wh{\calC}_1,\wh{\calT}_1, \cdots, \wh{\calC}_{|\calX|},\wh{\calT}_{|\calX|}}-g\b{\calC_1,\calT_1, \cdots, \calC_{|\calX|},\calT_{|\calX|}}}\\
        \cid {}& \calN\b{0, \nabla g\t|_{\b{\calC_1,\calT_1, \cdots, \calC_{|\calX|}, \calT_{|\calX|}}} \begin{bmatrix}
            \bbV\qb{c_1} \\
             & \bbV\qb{t_1}\\
            & & \ddots\\
            &&&\bbV\qb{c_{|\calX|}} & \\
            && && \bbV\qb{t_{|\calX|}}\\
        \end{bmatrix} \nabla g|_{\b{\calC_1,\calT_1, \cdots, \calC_{|\calX|}, \calT_{|\calX|}}}}\\
        = {}& \calN\b{0, \sum_{i\in[|\calX|]} \xi_i^2 \b{\frac{1}{\calT_i^2}\bbV(c_i)+\frac{\calC_i^2}{\calT_i^4}\bbV(t_i)}}\\
        = {}& \calN\b{0, \frac{1}{a^2}\sum_{i\in[|\calX|]} \xi_i^2 \b{\frac{a^2}{\calT_i^2}\bbV(c_i)+\frac{a^2\calC_i^2}{\calT_i^4}\bbV(t_i)}}
    \end{split}\end{equation}

    By applying the identities outlined in \cref{app:ssec:model}, we can establish the following identity:
    \begin{equation}\begin{split}
    \forall i\in[|\calX|]\colon
    \frac{a^2}{\calT_i^2}\bbV(c_i)+\frac{a^2\calC_i^2}{\calT_i^4}\bbV(t_i)
    =\frac{1}{\calT_i}.
    \end{split}\end{equation}
    Substituting this identity into \cref{eq:lem_asymp_3}, we obtain:
    \begin{equation}
        \sqrt{n}\sum_{i\in[|\calX|]} \xi_i \b{\frac{\wh{\calC}_{i}}{\wh{\calT}_{i}} - \frac{\calC_i}{\calT_i}}
        \cid \calN\b{0, \frac{1}{a^2}\sum_{i\in[|\calX|]} \xi_i^2 \frac{1}{\calT_i}}.
    \end{equation}

    Finally, the asymptotic variance can be upper bounded as follows:
    \begin{equation}\begin{split}
        {}& \frac{1}{a^2} \sum_{i\in[|\calX|]} \xi_i^2 \frac{1}{\calT_i}\\
        \leq {}&
        \frac{1}{a^2} \frac{1}{\min_{i\in[|\calX|]}\calT_i} \sum_{i\in[|\calX|]} \xi_i^2\\
        ={}&
        \frac{1}{a^2} \frac{1}{\min_{i\in[|\calX|]}\calT_i} \cdot \b{\sum_{x\in\calX} y\t \b{\sum_{x'\in\calX} n x'x'{}\t }^{-1}  n^2 xx\t \b{\sum_{x'\in\calX} n x'x'{}\t }^{-1}y}\\
        = {}&
        \frac{1}{a^2} \frac{1}{\min_{i\in[|\calX|]}\calT_i} \cdot y\t \b{\sum_{x'\in\calX} x'x'{}\t }^{-1} y\\
        = {}&
        \frac{1}{a^2} y\t \b{\sum_{x'\in\calX} \qb{\min_{i\in[|\calX|]}\calT_i} x'x'{}\t }^{-1} y\\
        \equiv {}& \frac{1}{a^2} \norm{y}_{\b{\sum_{x'\in\calX} \qb{\min_{i\in[|\calX|]}\calT_i} x'x'{}\t }^{-1}}^2.
    \end{split}\end{equation}
\end{proof}

\clearpage
\subsection{Non-asymptotic concentration of the two estimators for estimating the utility difference \texorpdfstring{$u_x$}{ux} given a query \texorpdfstring{$x$}{x}}

\subsubsection{The choice-decision-time estimator}\label{app:sssec:estimation:nonasymp:LM}
\Cref{ssec:estimation:nonasymp} focuses on the problem of estimating the utility difference for a single query. Given a query $x\in\mathcal{X}$, the objective is to estimate the utility difference $u_x\coloneqq x\t\theta^*$ using an i.i.d. dataset, denoted by $\left\{(c_{x,s_{x,i}},t_{x,s_{x,i}})\right\}_{i\in[n_x]}$.

We begin by applying the choice-decision-time estimator from \cref{eq:estimation:LM}, which is derived by solving the following least squares problem:
\begin{equation*}\begin{split}
\widehat{\theta}_{\text{CH,DT}}=\argmin_{\theta\in\mathbb{R}^d}
\sum_{x\in\mathcal{X}_{\text{sample}}}n_x
\b{x\t\theta - \frac{\sum_{i\in[n_x]}c_{x,s_{x,i}}}{\sum_{i\in[n_x]}t_{x,s_{x,i}}}}^2.
\end{split}\end{equation*}
Similarly, the utility difference for a single query is estimated as the solution to the following least squares problem, yielding the estimate:
\begin{equation*}\begin{split}
\wh{u}_{x,\text{CH,DT}}=\argmin_{u\in\mathbb{R}}
\b{u - \frac{\sum_{i\in[n_x]}c_{x,s_{x,i}}}{\sum_{i\in[n_x]}t_{x,s_{x,i}}}}^2 =
    \frac{\sum_{i\in[n_x]}c_{x,s_{x,i}}}{\sum_{i\in[n_x]}t_{x,s_{x,i}}}
    \quad\text{(restating \cref{eq:estimation:LM:singleQuery})}.
\end{split}\end{equation*}
The resulting estimate, $\widehat{u}_{x,\text{CH,DT}}$, approximates $u_x/a$ rather than $u_x$. However, since the ranking of arm utilities is preserved between $u_x/a$ and $u_x$, estimating $u_x/a$ is sufficient for the purpose of best-arm identification.

For the case where the utility difference $u_x\neq 0$, the non-asymptotic concentration inequality for this estimator is presented in \cref{thm:estimation:nonasymptotic:LM}. To prove this, we first introduce \cref{app:lem:1_estimator}, which demonstrates that for any given query $x$, the decision time is a sub-exponential random variable.

To simplify notation, we define:
\begin{equation}
    \wh{\calC}_x = \frac{1}{n_x}\sum_{i=1}^{n_x} c_{x,s_{x,i}},
    \quad \calC_x = \bbE\qb{c_x},
    \quad \wh{\calT}_x = \frac{1}{n_x}\sum_{i=1}^{n_x} t_{x,s_{x,i}},
    \quad \calT_x = \bbE\qb{t_x},
    \quad \wh{u}_{x,\text{CH,DT}} = \frac{\wh{\calC}_x}{\wh{\calT}_x}.
\end{equation}

\begin{lemma}\label{app:lem:1_estimator}
    If $u_x\neq 0$, then $\b{t_x-\calT_x}$ is sub-exponential $\text{SE}\b{\nu_x^2, \alpha_x}$, where $\nu_x = \sqrt{2}a/|u_x|$ and $\alpha_x = 2/u_x^2$.
\end{lemma}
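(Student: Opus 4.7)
The plan is to work with the explicit moment generating function (MGF) of the first-passage time $t_x$ of the drifted Brownian motion underlying the dEZDM. Since $E_{x,\tau}=u_x\tau+B(\tau)$ is a Brownian motion with drift $u_x$ starting at $0$, and $t_x$ is its first exit time from the strip $(-a,a)$, optional stopping applied to the exponential martingale $\exp(\theta E_{x,\tau}-(\theta u_x+\theta^2/2)\tau)$ at $\tau=t_x$, together with the symmetry of the two barriers, yields the classical formula
\begin{equation*}
M(\lambda):=\bbE[e^{\lambda t_x}]=\frac{\cosh(au_x)}{\cosh(a\sqrt{u_x^2-2\lambda})},\qquad \lambda<u_x^2/2.
\end{equation*}
The task is then to establish a centered-MGF bound of the form $\bbE[e^{\lambda(t_x-\calT_x)}]\le \exp(\lambda^2\nu_x^2/2)$ for $|\lambda|\le 1/\alpha_x=u_x^2/2$, with $\calT_x=(a/u_x)\tanh(au_x)$ supplied by \cref{eq:setting:ERT}.

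Writing $\psi(\lambda):=\log M(\lambda)-\lambda\calT_x$, one has $\psi(0)=0$ and $\psi'(0)=\bbE[t_x]-\calT_x=0$. The natural route is Taylor's theorem with remainder: $\psi(\lambda)=(\lambda^2/2)\psi''(\xi)$ for some $\xi$ between $0$ and $\lambda$. A direct calculation using the substitution $w:=a\sqrt{u_x^2-2\xi}$ gives
\begin{equation*}
\psi''(\xi)=\frac{a^4}{w^3}\bigl[\tanh(w)-w\sech^2(w)\bigr],
\end{equation*}
which for $|\xi|\le u_x^2/2$ corresponds to $w\in[0, a|u_x|\sqrt{2}]$. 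The needed bound on $\psi''$ then follows from elementary monotonicity/convexity properties of $\log\cosh$ (in particular, the inequality $\tanh(y)-y\sech^2(y)\le y$ for $y\ge 0$), yielding $\psi''(\xi)\le \nu_x^2=2a^2/u_x^2$ on the allowed range.

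Once this single-observation bound is established, the extension to $\wh{\calT}_x-\calT_x=(1/n_x)\sum_{i}(t_{x,s_{x,i}}-\calT_x)$ is immediate by independence: the centered sample-mean MGF factorizes as
\begin{equation*}
\bbE\bigl[e^{\lambda(\wh{\calT}_x-\calT_x)}\bigr]=\Bigl(\bbE\bigl[e^{(\lambda/n_x)(t_x-\calT_x)}\bigr]\Bigr)^{n_x},
\end{equation*}
so substituting the single-sample bound at argument $\lambda/n_x$ and taking the $n_x$-th power gives the claimed sub-exponential parameters for $\wh{\calT}_x-\calT_x$.

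The main obstacle is establishing the sub-exponential constant $\nu_x^2=2a^2/u_x^2$ with the right tightness: a crude worst-case bound on $\psi''(\xi)$ evaluated at $w\to 0$ gives the larger constant $2a^4/3$, which is the correct value of $\psi''(u_x^2/2)$ but too loose for the stated $\nu_x^2$ when $au_x>\sqrt{3}$. Obtaining the stated $\nu_x^2$ therefore requires a finer argument---either a direct pointwise estimate of $\psi(\lambda)$ that exploits the cancellation in $\log\cosh(au_x)-\log\cosh(w)-\lambda\calT_x$ (rather than only a remainder bound), or an interpolation exploiting the fact that when $\xi$ approaches $u_x^2/2$ the prefactor $\lambda^2$ in the Taylor remainder is correspondingly constrained. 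The remaining computations are routine given the explicit MGF above.
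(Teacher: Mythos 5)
Your setup coincides with the paper's: the same first-passage MGF $\bbE[e^{\lambda t_x}]=\cosh(au_x)/\cosh(a\sqrt{u_x^2-2\lambda})$, and the same reduction to showing $\psi(\lambda)\coloneqq\log\bbE[e^{\lambda t_x}]-\lambda\,\bbE[t_x]\le (a^2/u_x^2)\lambda^2$ on $|\lambda|<u_x^2/2$ (the tensorization to the sample mean is indeed routine). The problem is that the step you leave open is the entire content of the lemma, and the route you propose cannot be completed as stated. With $w=a\sqrt{u_x^2-2\xi}$ you correctly get $\psi''(\xi)=\frac{a^4}{w^3}\left[\tanh(w)-w\sech^2(w)\right]$, and as $\xi\uparrow u_x^2/2$ (i.e.\ $w\to0$) this tends to $2a^4/3$, which exceeds the target $\nu_x^2=2a^2/u_x^2$ whenever $a|u_x|>\sqrt{3}$. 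A Lagrange remainder $\tfrac{\lambda^2}{2}\psi''(\xi)$ with $\xi$ uncontrolled inside $(0,\lambda)$ forces you to bound $\psi''$ uniformly over that interval, and for $\lambda$ near $u_x^2/2$ that supremum is near $2a^4/3$; so no sharpening of your elementary inequalities on $\tanh$ can rescue a uniform-second-derivative argument in that regime. You flag this yourself and gesture at ``a finer argument,'' but you do not supply one, so the proof is incomplete precisely where the work lies.

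The paper closes the gap with a first-order rather than second-order comparison. Writing $\Delta=u^2-2s$ and $h(s)\coloneqq(a^2/u^2)s^2-\psi(s)$, it shows $h(0)=0$ and that
\begin{equation*}
h'(s)=\frac{2a^2}{u^2}s+\frac{a}{u}\tanh(au)-\frac{a}{\sqrt{\Delta}}\tanh\left(a\sqrt{\Delta}\right)
\end{equation*}
has the same sign as $s$, whence $h\ge 0$ on the whole interval. The sign claim uses monotonicity of $\tanh$ to replace $\tanh(au)$ by $\tanh(a\sqrt{\Delta})$ in the favorable direction, the identity $\frac{1}{u}-\frac{1}{\sqrt{\Delta}}=\frac{-2s}{u\sqrt{\Delta}(\sqrt{\Delta}+u)}$, and the bound $\tanh(x)/x\le 1$. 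The explicit factor $-2s$ that emerges is exactly the cancellation you suspected was needed; it is visible at first order but is washed out once you pass to $\psi''$ and take a worst case over the intermediate point. If you want to finish along your outline, prove the differential inequality $\psi'(s)\le (2a^2/u^2)s$ for $s\ge0$ (and the reverse for $s\le0$) instead of a pointwise bound on $\psi''$.
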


\begin{proof}
    For simplicity, we will omit the subscript $x$ throughout the proof and assume, without loss of generality, that $u > 0$.

    Our objective is to establish the following inequality, which holds for all $s\in(-u^2/2,u^2/2)$:
    \begin{equation}\begin{split}
        \bbE\b{\exp\b{s\b{t-\calT}}} \leq \exp\b{\frac{2a^2/u^2}{2}s^2}.
        \label{eq:app:subexp:1}
    \end{split}\end{equation}
    This implies that $\b{t-\calT}$ is sub-exponential $\text{SE}\b{\nu^2, \alpha}$, as defined by \citet[Definition~2.7]{wainwright2019high}.

    \paragraph{Step 1: Transform \cref{eq:app:subexp:1} into a more manageable inequality (\cref{eq:app:1est_1}).}

    Using \citet[eq.~(128)]{cox2017theory}, with $\Delta\coloneqq u^2-2s$, $\theta_1\coloneqq-u-\sqrt{\Delta}$ and $\theta_2\coloneqq-u+\sqrt{\Delta}$, we have\footnote{In \citet[eq.~(128)]{cox2017theory}, setting $a=2a$ and $x_0=a$ leads to the desired result.}:
    \begin{equation}\begin{split}
        \bbE\b{\exp\b{st}} = {}& \frac{\exp\b{a\theta_1} - \exp\b{2a\theta_2+a\theta_1}}{\exp\b{2a\theta_1} - \exp\b{2a\theta_2}} - \frac{\exp\b{a\theta_2} - \exp\b{2a\theta_1+a\theta_2}}{\exp\b{2a\theta_1} - \exp\b{2a\theta_2}}\\
        = {}& \frac{\exp\b{a\theta_1}\qb{1+\exp\b{a\theta_1+a\theta_2}}}{\exp\b{2a\theta_1} - \exp\b{2a\theta_2}} - \frac{\exp\b{a\theta_2}\qb{1 + \exp\b{a\theta_2+a\theta_1}}}{\exp\b{2a\theta_1} - \exp\b{2a\theta_2}}\\
        = {}& \frac{\qb{\exp\b{a\theta_1}-\exp\b{a\theta_2}}\qb{1 + \exp\b{a\theta_2+a\theta_1}}}{\exp\b{2a\theta_1} - \exp\b{2a\theta_2}}\\
        = {}& \frac{1 + \exp\b{a\theta_2+a\theta_1}}{\exp\b{a\theta_1} + \exp\b{a\theta_2}}\\
        = {}& \frac{\exp\b{-au} + \exp\b{au}}{\exp\b{-a\sqrt{\Delta}} + \exp\b{a\sqrt{\Delta}}}\\
        \eqqcolon {}& \frac{N}{D(s)}.
    \end{split}\end{equation}
    In the last line, we define $N = 2\cosh(au)$ and $D(s) = 2\cosh(a\sqrt{\Delta})$. Thus, we arrive at:
    \begin{equation}\begin{split}
        \bbE\b{\exp\b{s\cdot\b{t-\calT}}} = \frac{N}{D(s)}\cdot \frac{1}{\exp\b{s\cdot\calT}} = \frac{N}{\exp\b{sa\tanh(au)/u}D(s)}.
    \end{split}\end{equation}
    To prove the original inequality in \cref{eq:app:subexp:1}, it is now sufficient to show:
    \begin{equation}\begin{split}
        D(s)\cdot \exp\b{\frac{a}{u}\tanh(au)s + \frac{a^2}{u^2}s^2} \geq N.
        \label{eq:app:subexp:2}
    \end{split}\end{equation}

    For $s = 0$, the inequality holds trivially, as:
    \begin{equation}\begin{split}
        D(0)\cdot 1 = 2\cosh(au) = N.
    \end{split}\end{equation}
    For $s \neq 0$, taking the derivative of the left-hand side of \cref{eq:app:subexp:2} yields:
    \begin{equation}\begin{split}
        {}& \frac{\d}{\d s} \b{D(s)\cdot \exp\b{\frac{a}{u}\tanh(au)s + \frac{a^2}{u^2}s^2}}\\
        = {}& \exp\b{\frac{a}{u}\tanh(au)s + \frac{a^2}{u^2}s^2} \cdot \left(- \frac{2a}{\sqrt{\Delta}} \sinh\b{a\sqrt{\Delta}} + 2\cosh\b{a\sqrt{\Delta}}\cdot\b{\frac{a}{u}\tanh(au)+2\frac{a^2}{u^2}s}\right)\\
        = {}& 2\exp\b{\frac{a}{u}\tanh(au)s + \frac{a^2}{u^2}s^2}\cosh\b{a\sqrt{\Delta}} \cdot \left(-\frac{a}{\sqrt{\Delta}} \tanh\b{a\sqrt{\Delta}} + \frac{a}{u}\tanh(au) + 2\frac{a^2}{u^2}s\right).
    \end{split}\end{equation}
    In step 2, we will prove the following inequality:
    \begin{equation}\begin{split}\label{eq:app:1est_1}
        -\frac{a}{\sqrt{\Delta}} \tanh\b{a\sqrt{\Delta}} + \frac{a}{u}\tanh(au) + 2\frac{a^2}{u^2}s \left\{
            \begin{array}{ll}
                \geq 0, & \forall s \geq 0,\\
                < 0, & \forall s < 0,\\
            \end{array}
        \right.
    \end{split}\end{equation}
    \Cref{eq:app:1est_1} implies that $D(s)\cdot \exp\b{\frac{a}{u}\tanh(au)s + \frac{a^2}{u^2}s^2} \geq N$, which finishes the proof.

    \paragraph{Step 2. Prove \cref{eq:app:1est_1}.}

    For $s \geq 0$, the following holds:
    \begin{equation}\begin{split}
        {}& -\frac{a}{\sqrt{\Delta}} \tanh\b{a\sqrt{\Delta}} + \frac{a}{u}\tanh(au) + 2\frac{a^2}{u^2}s\\
        \overset{(i)}{\geq} {}& a\tanh\b{a\sqrt{\Delta}} \b{\frac{1}{u}-\frac{1}{\sqrt{\Delta}}} + 2\frac{a^2}{u^2}s\\
        = {}& a\tanh\b{a\sqrt{\Delta}} \frac{-2s}{u\sqrt{\Delta}\b{\sqrt{\Delta}+u}} + 2\frac{a^2}{u^2}s\\
        = {}& -2s\cdot \frac{a^2}{u\b{\sqrt{\Delta}+u}}\cdot \frac{\tanh\b{a\sqrt{\Delta}}}{a\sqrt{\Delta}} + 2\frac{a^2}{u^2}s\\
        \overset{(ii)}{\geq} {}& -2s \frac{a^2}{u^2}\cdot 1 + 2\frac{a^2}{u^2}s\\
        = {}& 0.
    \end{split}\end{equation}
    Here, $(i)$ follows from $\tanh(au)\geq\tanh(a\sqrt{\Delta}) = \tanh(a\sqrt{u^2-2s})$ and $(ii)$ follows from $\tanh(x)/x \leq 1$.

    For $s<0$, the following holds:
    \begin{equation}\begin{split}
        {}& -\frac{a}{\sqrt{\Delta}} \tanh\b{a\sqrt{\Delta}} + \frac{a}{u}\tanh(au) + 2\frac{a^2}{u^2}s\\
        \overset{(i)}{\leq} {}& a\tanh\b{a\sqrt{\Delta}} \b{\frac{1}{u}-\frac{1}{\sqrt{\Delta}}} + 2\frac{a^2}{u^2}s\\
        = {}& -2s\cdot \frac{a^2}{u\b{\sqrt{\Delta}+u}}\cdot \frac{\tanh\b{a\sqrt{\Delta}}}{a\sqrt{\Delta}} + 2\frac{a^2}{u^2}s\\
        \overset{(ii)}{\leq} {}& -2s \frac{a^2}{u^2}\cdot 1 + 2\frac{a^2}{u^2}s\\
        = {}& 0.
    \end{split}\end{equation}
    Here, $(i)$ follows from $\tanh(au)\leq\tanh(a\sqrt{\Delta}) = \tanh(a\sqrt{u^2-2s})$ and $(ii)$ follows from $\tanh(x)/x \leq 1$.

    By combining both cases, we conclude that the inequality in \cref{eq:app:1est_1} holds, which completes Step 2 and proves the desired result.
\end{proof}

Next, we prove \cref{thm:estimation:nonasymptotic:LM}, which provides the non-asymptotic concentration inequality for the estimator from \cref{eq:estimation:LM:singleQuery}, restated as follows:
\thmEstimationNonAsymptoticLM*

\begin{proof}
For clarity, we will omit the subscripts $x$ throughout this proof. Based on \cref{app:lem:1_estimator}, we define the constants $\nu\coloneqq\sqrt{2}a/|u|$ and $\alpha\coloneqq2/u^2$.

We begin by introducing $\epsilon_{\calC} \coloneqq \calT/\b{\sqrt{2}+\sqrt{2}\nu|\calC|/\calT}\cdot \epsilon$ and $\epsilon_{\calT} \coloneqq \nu\epsilon_{\calC}$.  From the identities provided in \cref{app:ssec:model}, we know that $\nu|\calC|/\calT = \sqrt{2}a/|u|\cdot |u|/a = \sqrt{2}$. This allows us to simplify the constants $\epsilon_{\calC}$ and $\epsilon_{\calT}$ as:
\begin{equation}\begin{split}
    \epsilon_{\calC} = \frac{\calT}{\sqrt{2}\b{\sqrt{2}+1}}\epsilon
    \quad\text{and}\quad
    \epsilon_{\calT} = \frac{\nu\calT}{\sqrt{2}\b{\sqrt{2}+1}}\epsilon.
    \label{eq:app:conc_1}
\end{split}\end{equation}

For any $\epsilon$ satisfying the following condition:
\begin{equation}\begin{split}
    \epsilon \leq \min\left\{\frac{1}{\nu}, \frac{\sqrt{2}(1+\sqrt{2})\nu}{\alpha\calT}\right\},
\end{split}\end{equation}
we observe that $\epsilon_{\calT} < \min\left\{\calT(1-1/\sqrt{2}), \nu^2/\alpha\right\}$.
We can now apply \cref{lem:app:estimator_2} to derive the following:
\begin{equation}\begin{split}
    \bbP\b{\abs{\wh{\calT}-\calT}>\epsilon_{\calT}} \leq {}& 2\exp\b{-\frac{n\epsilon_{\calT}^2}{2\nu^2}}.
    \label{eq:app:conc_2}
\end{split}\end{equation}
Thus, by combining the results, we conclude:
\begin{equation}\begin{split}
    \bbP\b{\abs{\frac{\wh{\calC}}{\wh{\calT}} - \frac{\calC}{\calT}} > \epsilon}
    = {}& \bbP\b{\abs{\frac{\wh{\calC}}{\wh{\calT}} - \frac{\calC}{\calT}} > \sqrt{2}\frac{\epsilon_{\calC}+\epsilon_{\calT}\cdot |\calC|/\calT}{\calT}}\\
    \overset{(i)}{\leq} {}& \bbP\b{\abs{\wh{\calC}-\calC} > \epsilon_{\calC}} + \bbP\b{\abs{\wh{\calT}-\calT}>\epsilon_{\calT}}\\
    \overset{(ii)}{\leq} {}& 2\exp\b{-\frac{n\epsilon_{\calC}^2}{2}} + 2\exp\b{-\frac{n\epsilon_{\calT}^2}{2\nu^2}}\\
    \overset{(iii)}{=} {}& 4\exp\b{-\frac{n\epsilon_{\calC}^2}{2}}\\
    = {}& 4\exp\b{-\frac{\calT^2}{4\b{1+\sqrt{2}}^2}\cdot n\epsilon^2}.
\end{split}\end{equation}
Here, $(i)$ follows from \cref{lem:app:estimator_3}, $(ii)$ uses \cref{lem:app:estimator_2} and \cref{eq:app:conc_2}, and $(iii)$ follows from \cref{eq:app:conc_1}.
\end{proof}

\paragraph{Supporting Details}
\begin{lemma}\label{lem:app:estimator_2}
    For each query $x$ with $u_x\neq 0$, and constants $\epsilon_{\calC}>0$ and $\epsilon_{\calT} \in(0, \nu_x^2/\alpha_x]$, the following inequalities hold:
    \begin{equation}\begin{split}
        \bbP\b{\abs{\wh{\calC}_x-\calC_x} \geq \epsilon_{\calC}} \leq 2\exp\b{-\frac{n\epsilon_{\calC}^2}{2}}, \quad \bbP\b{\abs{\wh{\calT}_x-\calT_x} \geq \epsilon_{\calT}} \leq 2\exp\b{-\frac{n\epsilon_{\calT}^2}{2\nu_x^2}}.
    \end{split}\end{equation}
    Here, the constants are $\nu_x\coloneqq\sqrt{2}a/|u_x|$ and $\alpha_x\coloneqq2/u_x^2$.
\end{lemma}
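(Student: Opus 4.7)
The plan is to prove the two inequalities as independent concentration statements for the two scalar empirical means $\wh{\calC}_x$ and $\wh{\calT}_x$. Each one reduces to a standard tail bound once we identify the right distributional class for a single summand: $c_x$ is bounded almost surely, and, per Lemma~\ref{app:lem:1_estimator}, $t_x$ is sub-exponential. The two bounds then fall out of Hoeffding's inequality and the Bernstein-type tail inequality for sub-exponentials, respectively, after tensorizing over the $n_x$ i.i.d. samples.

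For the choice bound, I would apply Hoeffding's inequality directly. Since $c_{x,s_{x,i}}\in\{-1,+1\}$, each centered summand $c_{x,s_{x,i}}-\calC_x$ lies almost surely in an interval of length $2$. Hoeffding's inequality for an average of $n$ independent bounded variables with range $b=2$ then gives
\begin{equation*}
    \bbP\bigl(|\wh{\calC}_x-\calC_x|\geq \epsilon_{\calC}\bigr)\;\leq\; 2\exp\!\bigl(-2n\epsilon_{\calC}^2/b^2\bigr)\;=\;2\exp\!\bigl(-n\epsilon_{\calC}^2/2\bigr),
\end{equation*}
which matches the claim with no side condition on $\epsilon_{\calC}$.

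For the decision-time bound, I would read Lemma~\ref{app:lem:1_estimator} as the single-sample statement that each centered decision time $t_{x,s_{x,i}}-\calT_x$ is sub-exponential with parameters $(\nu_x^2,\alpha_x)$ (the MGF computation in its proof is precisely the single-sample MGF of the two-barrier first-passage time from Cox and Miller). By independence of the $n$ samples, $\wh{\calT}_x-\calT_x=\frac{1}{n}\sum_{i}(t_{x,s_{x,i}}-\calT_x)$ is then sub-exponential with parameters $(\nu_x^2/n,\alpha_x/n)$. The standard Bernstein-type tail bound for sub-exponentials yields
\begin{equation*}
    \bbP\bigl(|\wh{\calT}_x-\calT_x|\geq t\bigr)\;\leq\;2\exp\!\Bigl(-\min\!\bigl\{n t^2/(2\nu_x^2),\; n t/(2\alpha_x)\bigr\}\Bigr).
\end{equation*}
The only step that needs care is checking which regime of the Bernstein bound we are in; this is exactly the purpose of the hypothesis $\epsilon_{\calT}\in(0,\nu_x^2/\alpha_x]$, which is equivalent to $\epsilon_{\calT}^2/(2\nu_x^2)\leq \epsilon_{\calT}/(2\alpha_x)$ and therefore forces the Gaussian (quadratic) regime, giving the claimed $2\exp\!\bigl(-n\epsilon_{\calT}^2/(2\nu_x^2)\bigr)$. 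I do not foresee any real obstacle beyond this threshold check; everything else is a routine application of named inequalities together with the sub-exponential MGF already furnished by Lemma~\ref{app:lem:1_estimator}.
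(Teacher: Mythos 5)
Your proposal is correct and follows essentially the same route as the paper: Hoeffding's inequality for the bounded choices, and the Bernstein-type sub-exponential tail bound (tensorized over the $n$ i.i.d.\ samples) for the decision times, with the hypothesis $\epsilon_{\calT}\leq\nu_x^2/\alpha_x$ serving exactly to select the quadratic regime of the $\min$. Your reading of \cref{app:lem:1_estimator} as a single-sample MGF statement, followed by tensorization, is precisely how the paper's citation of the standard sub-exponential concentration result operates.
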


\begin{proof}
    Since $c_x\in\{-1,1\}$, by applying Hoeffding's inequality \citep[proposition~2.5]{wainwright2019high}, we obtain:
    \begin{equation}\begin{split}
        \bbP\b{\abs{\wh{\calC}_x-\calC_x} \geq \epsilon_{\calC}} \leq 2\exp\b{-\frac{n\epsilon_{\calC}^2}{2}}.
    \end{split}\end{equation}

    From \cref{app:lem:1_estimator}, we know that $t_x$ is sub-exponential $SE(\nu_x^2, \alpha_x)$. By applying \citet[proposition~2.9 and eq.~(2.18)]{wainwright2019high}, we obtain:
    \begin{equation}\begin{split}
        \bbP\b{\abs{\wh{\calT}_x-\calT_x} \geq \epsilon_{\calT}} \leq 2\exp\b{-\frac{n\epsilon_{\calT}^2}{2\nu_x^2}}, \quad \forall \epsilon_{\calT} \in(0, \nu_x^2/\alpha_x].
    \end{split}\end{equation}
\end{proof}

\begin{lemma}\label{lem:app:estimator_3}
    Consider constants $\calC\in\mathbb{R}$, $\calT>0$, $\epsilon_{\calC}>0$, and $\epsilon_{\calT}\in\b{0, (1-1/\sqrt{2})\calT}$. For any $\wh\calC\in [\calC-\epsilon_{\calC}, \calC+\epsilon_{\calC}]$ and $\wh{\calT}\in [\calT-\epsilon_{\calT}, \calT+\epsilon_{\calT}]$, the following inequality holds
    \begin{equation}\begin{split}
        \abs{\frac{\wh{\calC}}{\wh{\calT}} - \frac{\calC}{\calT}} \leq \sqrt{2}\frac{\epsilon_{\calC}+\epsilon_{\calT}\cdot |\calC|/\calT}{\calT}.
    \end{split}\end{equation}
\end{lemma}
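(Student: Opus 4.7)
The plan is to prove this as a purely deterministic calculation via the standard quotient-difference decomposition followed by the triangle inequality. The constant $1 - 1/\sqrt{2}$ in the hypothesis is exactly calibrated to yield the factor $\sqrt{2}$ in the conclusion by ensuring $\wh{\calT}$ stays bounded away from $0$.

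The key identity I would use is
\begin{equation*}
    \frac{\wh{\calC}}{\wh{\calT}} - \frac{\calC}{\calT} = \frac{\wh{\calC}-\calC}{\wh{\calT}} - \frac{\calC}{\calT}\cdot\frac{\wh{\calT}-\calT}{\wh{\calT}},
\end{equation*}
which, after applying the triangle inequality, reduces the claim to bounding $1/\wh{\calT}$. From $\wh{\calT} \geq \calT - \epsilon_{\calT}$ and the hypothesis $\epsilon_{\calT} < (1 - 1/\sqrt{2})\calT$, I get $\wh{\calT} > \calT/\sqrt{2}$ and hence $1/\wh{\calT} < \sqrt{2}/\calT$. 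Substituting this together with $\abs{\wh{\calC} - \calC} \leq \epsilon_{\calC}$ and $\abs{\wh{\calT} - \calT} \leq \epsilon_{\calT}$ into the triangle-inequality estimate gives exactly the claimed bound $\sqrt{2}\b{\epsilon_{\calC}+\epsilon_{\calT}\abs{\calC}/\calT}/\calT$.

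There is no substantive obstacle; the whole argument is a few lines of algebra. The only thing worth double-checking is that the constant $(1 - 1/\sqrt{2})$ is tight enough to yield the factor $\sqrt{2}$ on the right-hand side (any smaller slack on $\epsilon_{\calT}$ would force a larger constant). It is also worth noting that this lemma is the only place where the restriction $\epsilon_{\calT} < (1 - 1/\sqrt{2})\calT$ is used in the proof of \cref{thm:estimation:nonasymptotic:LM}, which explains why the hypothesis on $\epsilon$ in that theorem contains both the term $|u_x|/(\sqrt{2}a)$ (coming from $\epsilon_{\calT} < \nu^2/\alpha$ needed to invoke \cref{lem:app:estimator_2}) and the term $(1+\sqrt{2})a|u_x|/\bbE\qb{t_x}$ (coming from the $(1-1/\sqrt{2})$ slack here).
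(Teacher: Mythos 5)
Your proof is correct and rests on the same two ingredients as the paper's: the algebraic identity $\wh{\calC}/\wh{\calT} - \calC/\calT = (\wh{\calC}\calT - \calC\wh{\calT})/(\wh{\calT}\calT)$ (your decomposition is just this identity split into two terms) followed by the lower bound $\wh{\calT} \geq \calT - \epsilon_{\calT} > \calT/\sqrt{2}$, which is exactly where the hypothesis $\epsilon_{\calT} < (1-1/\sqrt{2})\calT$ is spent. The only cosmetic difference is that the paper first reduces the supremum over the box $[\calC-\epsilon_{\calC},\calC+\epsilon_{\calC}]\times[\calT-\epsilon_{\calT},\calT+\epsilon_{\calT}]$ to its four corners before applying the same estimate, a detour your direct triangle-inequality argument makes unnecessary.
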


\begin{proof}
    The maximum value of $\abs{\wh{\calC}/\wh{\calT} - \calC/\calT}$ is attained at the extremum of $\wh{\calC}/\wh{\calT}$. Since $\wh{\calC}/\wh{\calT}$ is linear in $\wh{\calC}$, the extremum of $\wh{\calC}/\wh{\calT}$ is attained at $C^*\in \{\calC-\epsilon_{\calC}, \calC+\epsilon_{\calC}\}$ for any $\wh{\calT}\in[\calT-\epsilon_{\calT},\calT+\epsilon_{\calT}]>0$. Given that $\wh{\calT} > 0$, the extremum of $C^*/\wh{\calT}$ is attained at $T^*\in\{\calT-\epsilon_{\calT}, \calT+\epsilon_{\calT}\}$. Therefore, the extremum of $\wh{\calC}/\wh{\calT}$ lies in the set:
\begin{equation}\begin{split}
    \max_{\substack{\wh{\calC}\in[\calC-\epsilon_{\calC},\calC+\epsilon_{\calC}]\\\wh{\calT}\in[\calT-\epsilon_{\calT},\calT+\epsilon_{\calT}]} } \frac{\wh{\calC}}{\wh{\calT}} \in \left\{ \frac{\calC-\epsilon_{\calC}}{\calT-\epsilon_{\calT}}, \quad \frac{\calC-\epsilon_{\calC}}{\calT+\epsilon_{\calT}}, \quad \frac{\calC+\epsilon_{\calC}}{\calT-\epsilon_{\calT}}, \quad \frac{\calC+\epsilon_{\calC}}{\calT+\epsilon_{\calT}} \right\}.
\end{split}\end{equation}
For any combination $(s_{\calC},s_{\calT}) \in \{\pm 1\}\times \{\pm 1\}$, and using the function $\epsilon_{\calT} \leq (1-1/\sqrt{2})\calT$, we have:
\begin{equation}\begin{split}
    {}& \abs{\frac{\calC + s_{\calC}\epsilon_{\calC}}{\calT + s_{\calT}\epsilon_{\calT}} - \frac{\calC}{\calT}} = \abs{\frac{s_{\calC}\epsilon_{\calC}\calT - s_{\calT}\epsilon_{\calT}\calC}{\calT\b{\calT+s_{\calT}\epsilon_{\calT}}}} \leq \frac{\epsilon_{\calC}\calT + \epsilon_{\calT}|\calC|}{\calT\b{\calT-\epsilon_{\calT}}} \leq \sqrt{2}\frac{\epsilon_{\calC}\calT + \epsilon_{\calT}|\calC|}{\calT^2}.
\end{split}\end{equation}
By combining these results, we conclude that:
\begin{equation*}\begin{split}
    \max_{\substack{\wh{\calC}\in[\calC-\epsilon_{\calC},\calC+\epsilon_{\calC}]\\\wh{\calT}\in[\calT-\epsilon_{\calT},\calT+\epsilon_{\calT}]}}\abs{\frac{\wh{\calC}}{\wh{\calT}} - \frac{\calC}{\calT}} = {}& \max_{(s_{\calC},s_{\calT}) \in \{\pm 1\}\times \{\pm 1\}} \abs{\frac{\calC + s_{\calC}\epsilon_{\calC}}{\calT + s_{\calT}\epsilon_{\calT}} - \frac{\calC}{\calT}} \leq \sqrt{2}\frac{\epsilon_{\calC} + \epsilon_{\calT}|\calC|/\calT}{\calT}.
\end{split}\end{equation*}
\end{proof}

\subsubsection{The choice-only estimator}\label{app:sssec:estimation:nonasymp:GLM}
We now apply the logistic-regression-based choice-only estimator from \cref{eq:estimation:GLM:MLE} to estimate the utility difference for a single query. Recall that for each query $x\in\calX$, the human choice $c_x\in\{-1,1\}$. We define the binary-encoded choice as $e_x\coloneqq \b{c_x+1}/2\in\{0,1\}$. We reformulate the MLE in \cref{eq:estimation:GLM:MLE} into a utility difference estimation problem for a single query, leading to the following optimization problem:
\begin{equation*}\begin{split}
\wh{u}_{x,\text{CH}}
={}& \argmax_{u\in\mathbb{R}} \sum_{i\in[n_x]}\log\mu(c_{x,s_{x,i}}\: u)\\
={}& \argmax_{u\in\mathbb{R}} \sum_{i\in[n_x]}\log \qb{\b{\mu(u)}^{e_{x,s_{x,i}}} \cdot \b{\mu(-u)}^{1-e_{x,s_{x,i}}}}.
\end{split}\end{equation*}
The first-order optimality condition provides the optimal solution:
\begin{equation*}
\wh{u}_{x,\text{CH}}=\mu^{-1}\b{\frac{1}{n_x}\sum_{i\in[n_x]}e_{x,s_{x,i}}}
\quad\text{(restating \cref{eq:estimation:GLM:singleQuery})},
\end{equation*}
where $\mu^{-1}(p)\coloneqq\log\b{p/(1-p)}$ is the logit function (also known as the log-odds), defined as the inverse of the function $\mu(\cdot)$ introduced in \cref{eq:estimation:GLM:MLE}.

The resulting estimate, $\wh{u}_{x,\text{CH}}$, from \cref{eq:estimation:GLM:singleQuery} gives an estimate of $2a u_x$, not $u_x$. However, since the ranking of arm utilities based on $2a u_x$ is the same as that based on the true $u_x$, estimating $2a u_x$ suffices for identifying the best arm.

The non-asymptotic concentration inequality for this estimator is stated in \cref{thm:estimation:nonasymptotic:GLM}. This result is directly adapted from \citet[theorem 5]{jun2021improved}, by letting $x_1=\cdots=x_t=1$ and $t_{\text{eff}}=d=1$.


\clearpage
\section{Experiment details}\label{app:sec:experiment}
Our empirical experiments (Sec.~\ref{sec:empirical}) were conducted on a MacBook Pro (M3 Pro, Nov 2023) with 36 GB of memory.

Our implementation is available via \url{https://shenlirobot.github.io/pages/NeurIPS24.html}. The code is written in Julia and builds on the implementation by \citet{tirinzoni2022elimination}, where the transductive and weak-preference designs are solved using the Frank–Wolfe algorithm~\citep{fiez2019sequential}. Their code is accessible at \url{https://github.com/AndreaTirinzoni/bandit-elimination}. Simulations and Bayesian inference for the DDM are implemented using the Julia package \texttt{SequentialSamplingModels.jl}, available at \url{https://itsdfish.github.io/SequentialSamplingModels.jl/dev/#SequentialSamplingModels.jl}.

For a query $x \in \mathcal{X}$, the estimators from \citet{wagenmakers2007ez} and \citet{xiang2024combining}, analyzed in \cref{ssec:estimation:nonasymp} and benchmarked in \cref{ssec:empirical:bandit}, require calculating $\mu^{-1}(p)\coloneqq\log\b{p/\b{1-p}}$, where $\mu^{-1}(\cdot)$ is the logit function and $p\coloneqq 1/n_x\cdot\sum_{i=1}^{n_x}\b{c_{x,s_{x,i}}+1}/2$ represents the empirical mean of the human binary choices coded as 0 or 1. Since $p=0$ or $p=1$ makes this calculation undefined, we follow \citet[the discussion below fig.~6]{wagenmakers2007ez} and approximate $p$ as $1-1/(2n_x)$ when $p=1$ and $1/(2n_x)$ when $p=0$.

\subsection{The ``Sphere'' Synthetic Problem for Evaluating Estimation Performance in \cref{ssec:empirical:estimation}}\label{app:sec:exp:sphere}

We evaluate estimation performance using the ``sphere'' synthetic problem, a standard benchmark in linear bandit literature~\citep{li2023optimal,tao2018best,degenne2020gamification}. In this problem, the arm space $\mathcal{Z}\subset\{z\in\mathbb{R}^5\colon\|z\|_2=1\}$ contains $10$ randomly generated arms. To define the true preference vector $\theta^*$, we select the two arms $z$ and $z'$ that are closest in direction, i.e., $(z,z')\in\argmax_{z,z'\in\mathcal{Z}}z\t z'$, and set $\theta^*=z+0.01(z'-z)$. In this way, $z$ is the best arm. The query space is $\mathcal{X}\coloneqq\{z-z'\colon z\in\mathcal{Z}\}$.

\newpage
\subsection{Processing the food-risk dataset with choices (-1 or 1)~\texorpdfstring{\citep{smith2018attention}}{}}\label{app:sec:exp:foodrisk}
We accessed the food-risk dataset with choices (-1 or 1)~\citep{smith2018attention} through \citet{yang2023dynamic}'s repository (\url{https://osf.io/d7s6c/}). This dataset includes the choices and response times of $42$ participants, each responding to between $60$ and $200$ queries. Each query compares two arms, with each arm containing two food items. By selecting an arm, participants had an equal chance of receiving either food item, hence the name ``food risk'' (or ``food-gamble'') task. Additionally, participants' eye movements were tracked during the experiment. \citet{yang2023dynamic} modeled each participant's choices, response times, and eye movements using the attentional DDM~\citep{krajbich2010visual}, where the drift for each query is a linear combination of the participant's ratings of the four food items in the query, with the weights adjusting based on their eye movements. The ratings, $\in\{-10,-9,\dots,0,\dots,9,10\}$, were collected before the participants interacted with the binary queries.

In our work, for each participant, we define each arm's feature vector as the participant's ratings of the two corresponding food items, augmented with second-order polynomials. We fit each participant's data to a difference-based EZ-diffusion model~\citep{wagenmakers2007ez,berlinghieri2023measuring} with a linear utility structure, as introduced in \cref{sec:setting}. For each participant, using Bayesian inference with non-informative priors~\citep{clithero2018improving}, we estimated the preference vector $\theta^*\in\mathbb{R}^5$, non-decision time $t_{\text{nondec}}$, and barrier $a$. Across participants, the barrier $a$ ranged from $0.715$ to $2.467$, with a mean of $1.437$, and $t_{\text{nondec}}$ ranged from $0.206$ to $1.917$ seconds, with a mean of $0.746$ seconds. This procedure generated one bandit instance per participant, with a preference vector $\theta^*\in\mathbb{R}^5$, an arm space $\mathcal{Z}\subset\mathbb{R}^5$ where $\left|\mathcal{Z}\right|\in [31,95]$, and a query space $\mathcal{X}\coloneqq\left\{z-z'\colon z\in\mathcal{Z}\right\}$. Then, we used the fitted models to simulate human feedback for bandit experiments.

For each bandit instance, we benchmarked six GSE variations (introduced in \cref{ssec:empirical:bandit}):
$(\lambda_{\text{trans}},\widehat{\theta}_{\text{CH,DT}})$,
$(\lambda_{\text{trans}},\widehat{\theta}_{\text{CH,}\mathbb{RT}})$,
$(\lambda_{\text{trans}},\widehat{\theta}_{\text{CH}})$,
$(\lambda_{\text{weak}},\widehat{\theta}_{\text{CH}})$,
$(\lambda_{\text{trans}},\widehat{\theta}_{\text{CH,logit}})$,
and $(\lambda_{\text{trans}},\widehat{\theta}_{\text{CH,DT,logit}})$.
For each GSE variation, we ran $300$ repeated simulations under different random seeds, with human choices and response times sampled from the dEZDM with the identified parameters. Since each bandit instance contains a different number of arms, rather than tuning the elimination parameter $\eta$ in \cref{alg:GSE} for each instance, we set $\eta=2$, following the convention in previous bandit research, e.g., \citet[section 3]{azizi2021fixed}. We manually tuned the buffer size $B_{\text{buff}}$ in \cref{alg:GSE} to $20$, $30$, or $50$ seconds based on empirical performance, ensuring the budget was not exceeded in each phase. The full results are shown in \cref{fig:app:results:foodrisk:full}, with selected results highlighted in \cref{fig:results:all:foodrisk}.

\begin{figure}[h]
  \centering
  \begin{subfigure}[b]{1\textwidth}
    \centering
    \includegraphics[trim=0cm 4cm 0cm 4cm,clip,width=\textwidth]{figures/empirical_result_legend6.pdf}
  \end{subfigure}
  \hfill
  \begin{subfigure}[b]{1\textwidth}
    \centering
    \includegraphics[trim=0cm 0cm 0cm 0cm,clip,width=\textwidth]{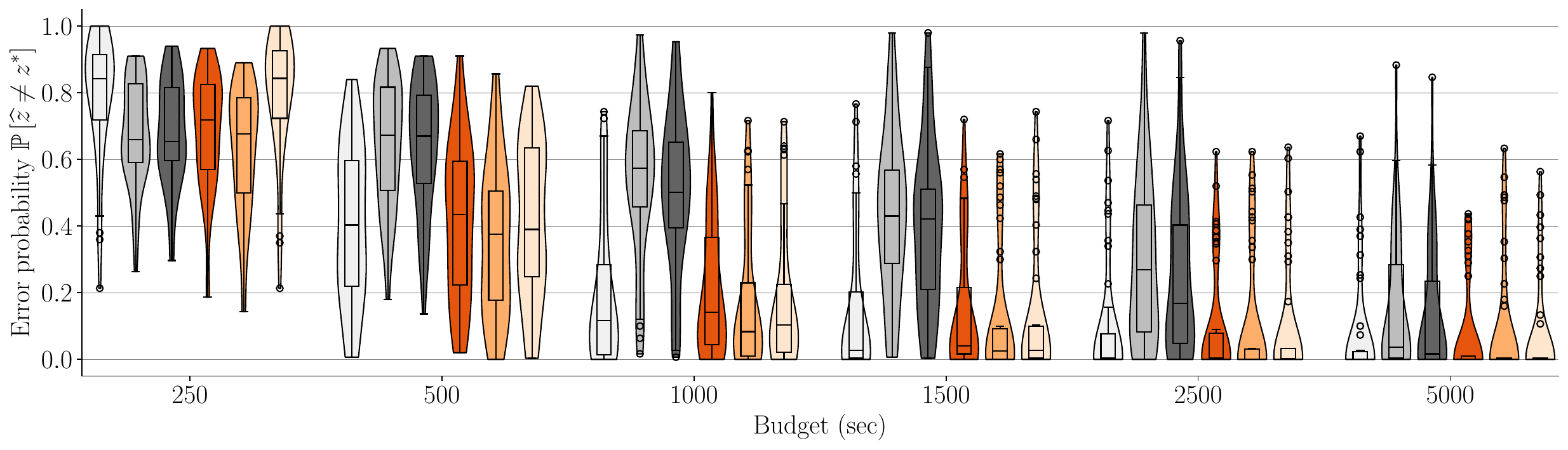}
  \end{subfigure}
  \caption{A violin plot overlaid with a box plot showing the best-arm identification error probability, $\mathbb{P}\left[\widehat{z}\neq z^*\right]$, as a function of budget for each GSE variation, simulated using the food-risk dataset with choices (-1 or 1)~\citep{smith2018attention}, as described in \cref{app:sec:exp:foodrisk}. The box plots follow the convention of \href{https://matplotlib.org/stable/api/_as_gen/matplotlib.pyplot.boxplot.html}{the \texttt{matplotlib} Python package}. For each GSE variation and budget, the horizontal line in the middle of the box represents the median of the error probabilities across all bandit instances. Each error probability is averaged over $300$ repeated simulations under different random seeds. The box's upper and lower borders represent the third and first quartiles, respectively, with whiskers extending to the farthest points within $1.5 \times$ the interquartile range. Flier points indicate outliers beyond the whiskers.}
  \label{fig:app:results:foodrisk:full}
\end{figure}

\clearpage
\subsection{Processing the snack dataset with choices (yes or no)~\texorpdfstring{\citep{clithero2018improving}}{}}\label{app:sec:exp:snack:1}
We accessed the snack dataset with choices (yes or no)~\citep{clithero2018improving} through the supplementary material provided by \citet{alos2021time} at \url{https://www.journals.uchicago.edu/doi/abs/10.1086/713732}. This dataset consists of training and testing data. The training data was collected from a ``YN'' task, where $31$ participants provided binary feedback (``Yes'' or ``No'') and response times for queries comparing each of the $17$ snack items to a fixed reference snack, with each query repeated $10$ times. The reference snack, assigned a utility of $0$, remained fixed throughout the experiment. The testing data was collected using a two-alternative forced-choice task, where participants provided binary choices and response times for queries comparing two snack items, with each query repeated once. \citet{clithero2018improving} fit a difference-based EZ-diffusion model~\citep{berlinghieri2023measuring,wagenmakers2007ez} to the training data using Bayesian inference with non-informative priors, without imposing a linear utility structure, and tested the model using the testing data.

In our work, we fit each participant's training data to a difference-based EZ-diffusion model with a linear utility structure, as described in \cref{sec:setting}, and used the fitted model to simulate human feedback for bandit experiments. We preprocessed the data by removing outliers, following \citet[footnote 22]{clithero2018improving}, excluding trials with response times below $200$ ms or greater than five standard deviations above the mean. After cleaning, the number of trials per participant ranged from $167$ to $170$. Since the dataset does not provide feature vectors for the $17$ non-reference snack items, we used one-hot encoding to represent each snack item as a feature vector in $\mathbb{R}^{17}$. This allowed us to construct a bandit instance for each participant with a preference vector $\theta^*\in\mathbb{R}^{17}$, an arm space $\mathcal{Z} \subset \mathbb{R}^{17}$ with $\left|\mathcal{Z}\right|=17$, and a query space $\mathcal{X}\coloneqq\left\{z-\mathbf{0}\colon z\in\mathcal{Z}\right\}$ to represent comparisons with the reference snack. We applied Bayesian inference with non-informative priors~\citep{clithero2018improving} to estimate each participant's preference vector $\theta^*$, non-decision time $t_{\text{nondec}}$, and barrier $a$. Across participants, the barrier $a$ ranged from $0.759$ to $1.399$, with a mean of $1.1$, and $t_{\text{nondec}}$ ranged from $0.139$ to $0.485$ seconds, with a mean of $0.367$ seconds.

For each of the six GSE variations (introduced in \cref{ssec:empirical:bandit}):
$(\lambda_{\text{trans}},\widehat{\theta}_{\text{CH,DT}})$,
$(\lambda_{\text{trans}},\widehat{\theta}_{\text{CH,}\mathbb{RT}})$,
$(\lambda_{\text{trans}},\widehat{\theta}_{\text{CH}})$,
$(\lambda_{\text{weak}},\widehat{\theta}_{\text{CH}})$,
$(\lambda_{\text{trans}},\widehat{\theta}_{\text{CH,logit}})$,
and $(\lambda_{\text{trans}},\widehat{\theta}_{\text{CH,DT,logit}})$, we tuned the elimination parameter $\eta$ in \cref{alg:GSE} using the following procedure: We considered $\eta\in\{2,3,4,5,6,7,8,9\}$, resulting in the number of phases $\coloneqq\left\lceil\log_{\eta}|\mathcal{Z}|\right\rceil=\left\lceil\log_{\eta}(17)\right\rceil$ (\cref{alg:GSE:numPhases} of \cref{alg:GSE}) being $\{5, 3, 3, 2, 2, 2, 2, 2\}$, respectively. We excluded $\eta>\left\lceil17/2\right\rceil=9$, as those cases also result in $2$ phases, the same as $\eta\in\{5,6,7,8,9\}$.
Then, for each $\eta$, for each of the 31 bandit instances, and for each budget $\in\left\{50, 75, 100, 125, 150, 200, 250, 300\right\}$ seconds, we ran $50$ repeated simulations per GSE variation under different random seeds, sampling human feedback from the fitted dEZDM. We then aggregated the results into a single best-arm identification error probability for each GSE variation, $\eta$, bandit instance, and budget. These error probabilities were compiled into violin and box plots, as shown in \cref{fig:app:results:Clithero:eta}.

For each GSE variation, we selected the $\eta$ that minimized the median error probability, as shown in the box plots in \cref{fig:app:results:Clithero:eta}. If multiple $\eta$ values yielded the same median, we used the third quartile, and if necessary, the first quartile, to break ties. Based on this approach, we selected:
$\eta=6$ for $(\lambda_{\text{trans}},\widehat{\theta}_{\text{CH,DT}})$,
$\eta=6$ for $(\lambda_{\text{trans}},\widehat{\theta}_{\text{CH,}\mathbb{RT}})$,
$\eta=9$ for $(\lambda_{\text{trans}},\widehat{\theta}_{\text{CH}})$,
$\eta=9$ for $(\lambda_{\text{weak}},\widehat{\theta}_{\text{CH}})$,
$\eta=9$ for $(\lambda_{\text{trans}},\widehat{\theta}_{\text{CH,logit}})$,
and
$\eta=5$ for $(\lambda_{\text{trans}},\widehat{\theta}_{\text{CH,DT,logit}})$.

After tuning $\eta$, we manually set the buffer size $B_{\text{buff}}$ in \cref{alg:GSE} to $10$ seconds based on empirical results, ensuring the budget was not exceeded in any phase. We then benchmarked each GSE variation on all $31$ bandit instances using its own manually tuned $\eta$ and $B_{\text{buff}}$. Each variation was evaluated over $300$ repeated simulations with different random seeds, where human choices and response times were sampled from the dEZDM with the identified parameters. The full results are shown in \cref{fig:app:results:Clithero:full}, with selected results presented in \cref{fig:results:all:snack:1}.

\begin{figure}[h]
  \centering
  \begin{subfigure}[b]{0.49\textwidth}
    \centering
    \includegraphics[trim=0cm 0cm 0cm 0cm,clip,width=\textwidth]{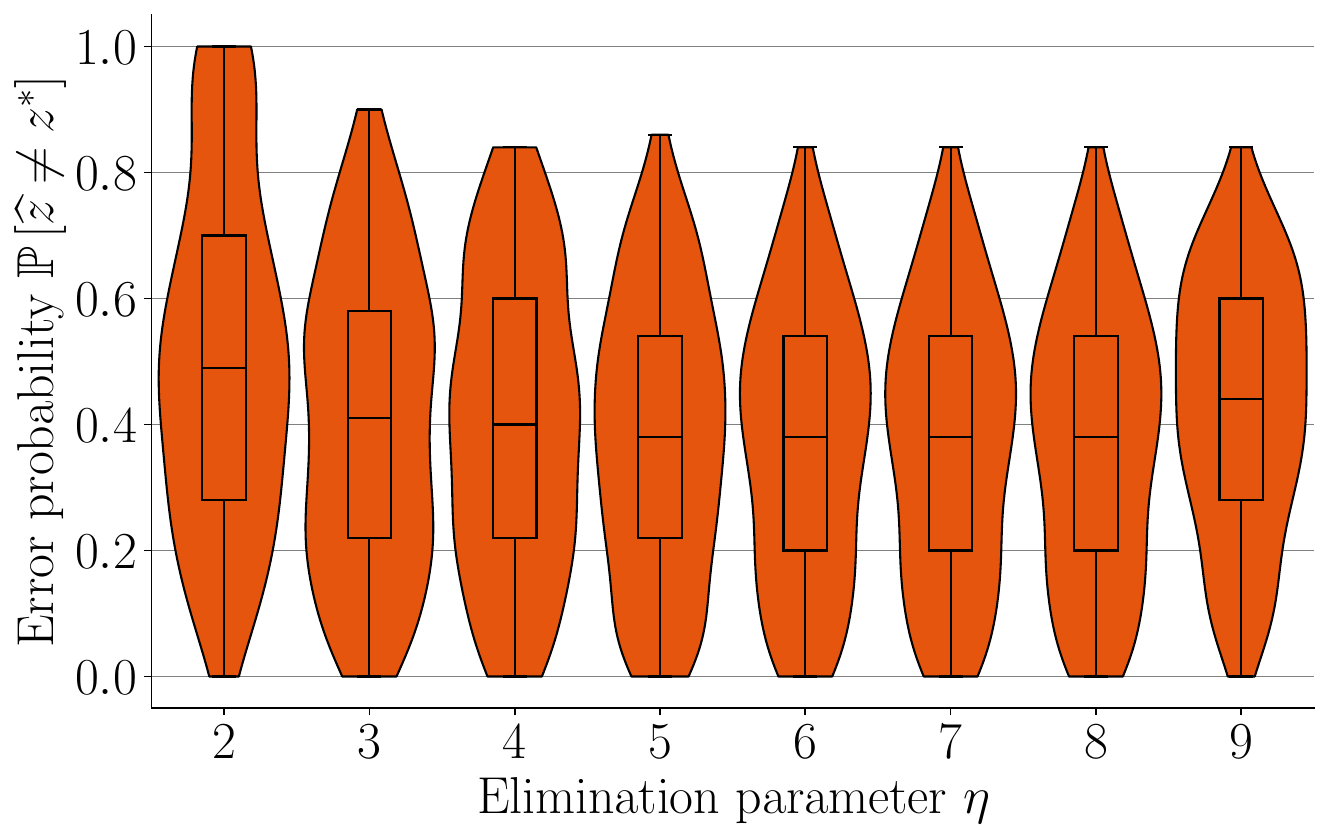}
    \caption{$(\lambda_{\text{trans}},\widehat{\theta}_{\text{CH,DT}})$.}
  \end{subfigure}
  \hfill
  \begin{subfigure}[b]{0.49\textwidth}
    \centering
    \includegraphics[trim=0cm 0cm 0cm 0cm,clip,width=\textwidth]{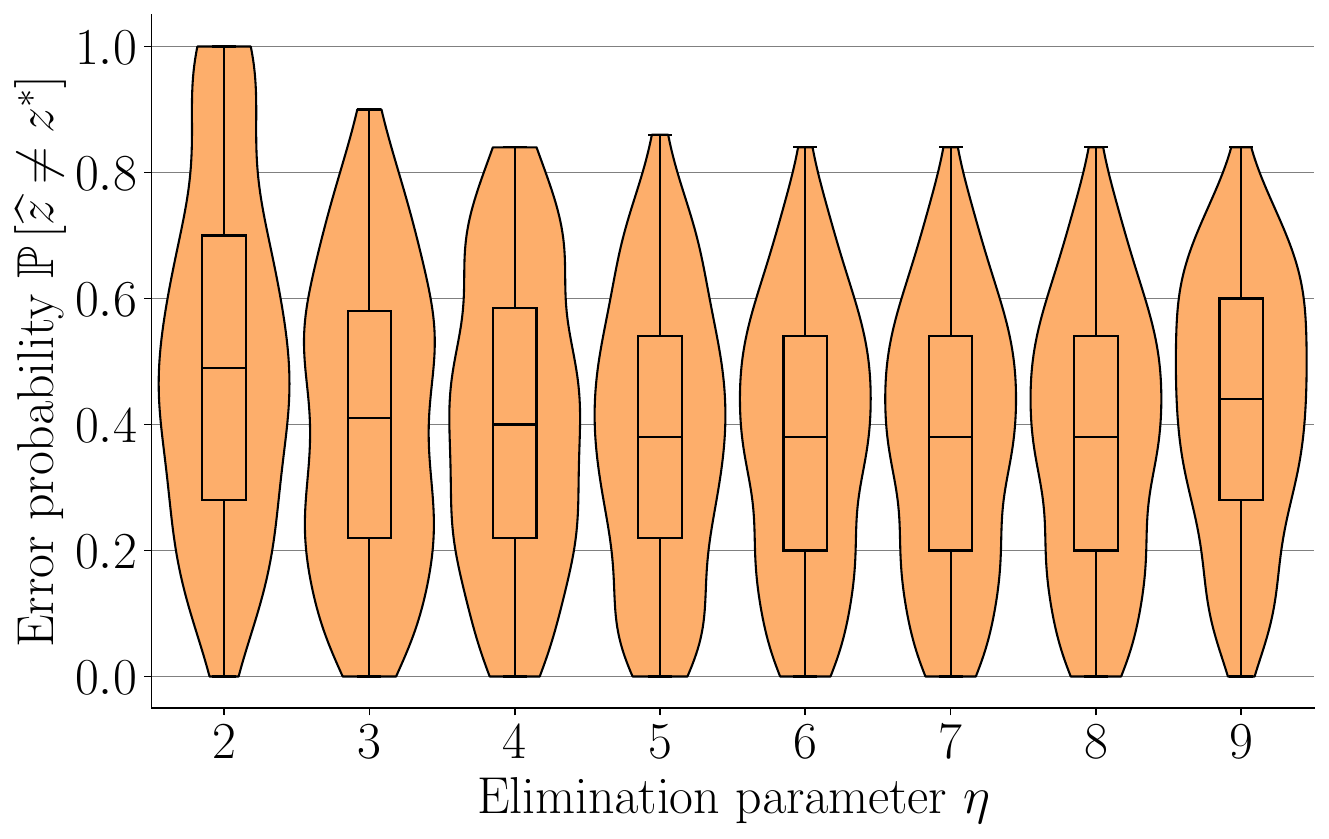}
    \caption{$(\lambda_{\text{trans}},\widehat{\theta}_{\text{CH,}\mathbb{RT}})$.}
  \end{subfigure}
  \begin{subfigure}[b]{0.49\textwidth}
    \centering
    \includegraphics[trim=0cm 0cm 0cm 0cm,clip,width=\textwidth]{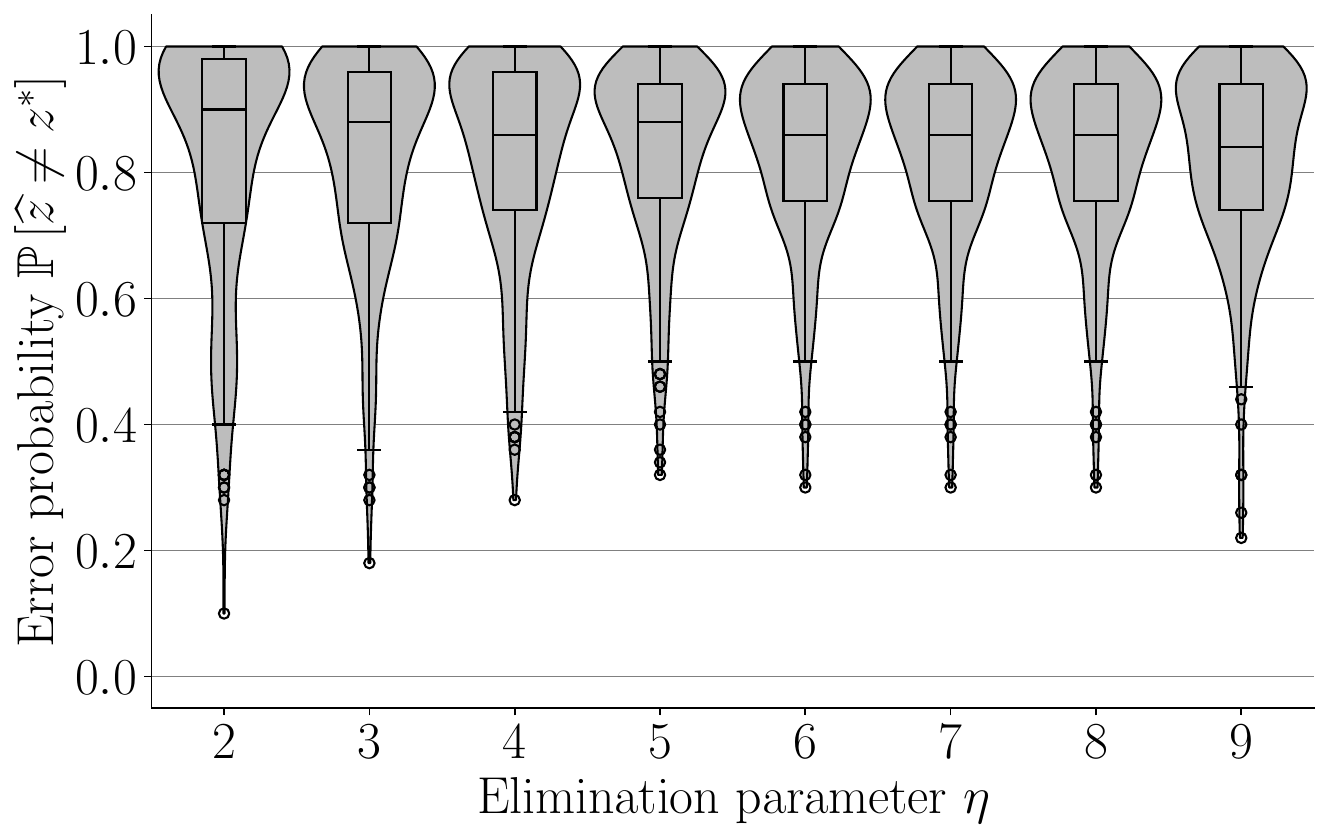}
    \caption{$(\lambda_{\text{weak}},\widehat{\theta}_{\text{CH}})$.}
  \end{subfigure}
  \hfill
  \begin{subfigure}[b]{0.49\textwidth}
    \centering
    \includegraphics[trim=0cm 0cm 0cm 0cm,clip,width=\textwidth]{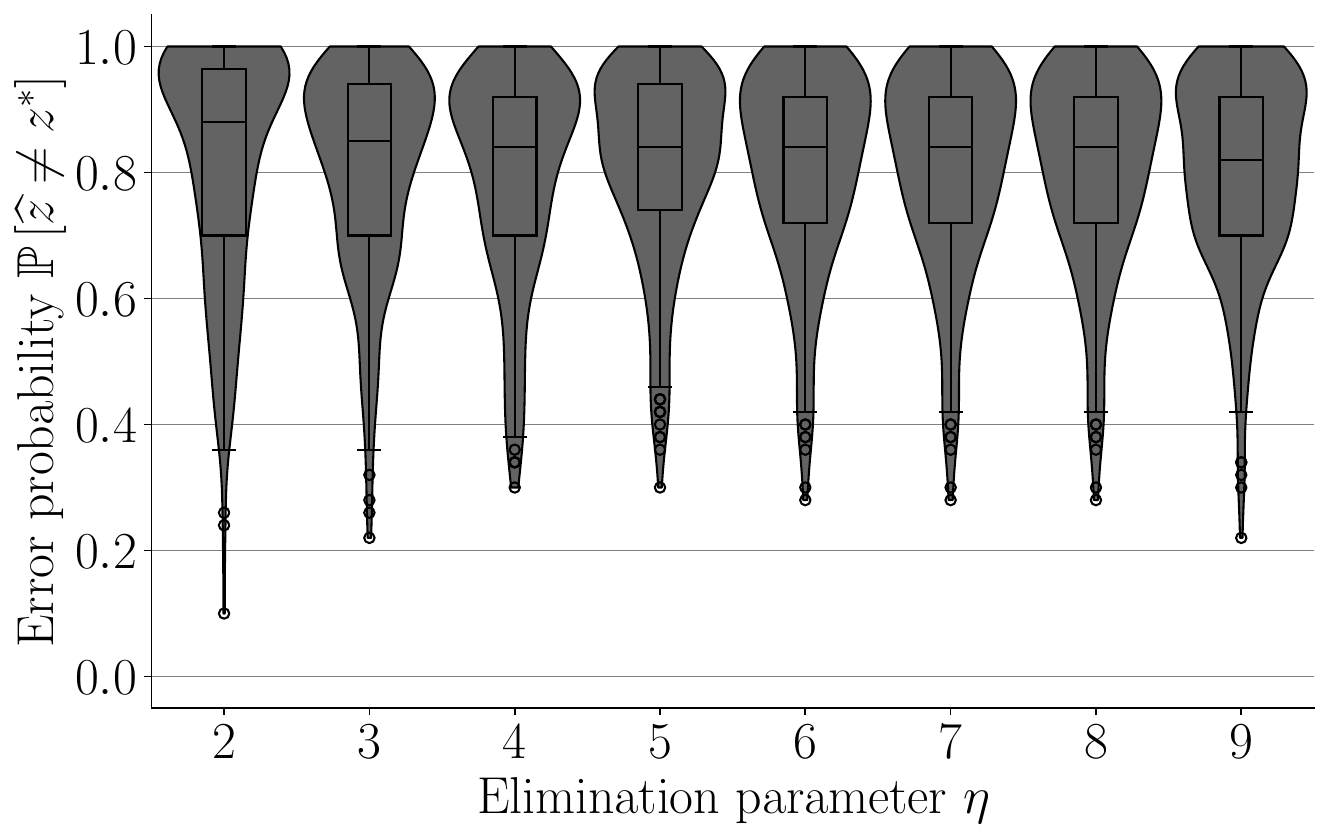}
    \caption{$(\lambda_{\text{trans}},\widehat{\theta}_{\text{CH}})$.}
  \end{subfigure}
  \begin{subfigure}[b]{0.49\textwidth}
    \centering
    \includegraphics[trim=0cm 0cm 0cm 0cm,clip,width=\textwidth]{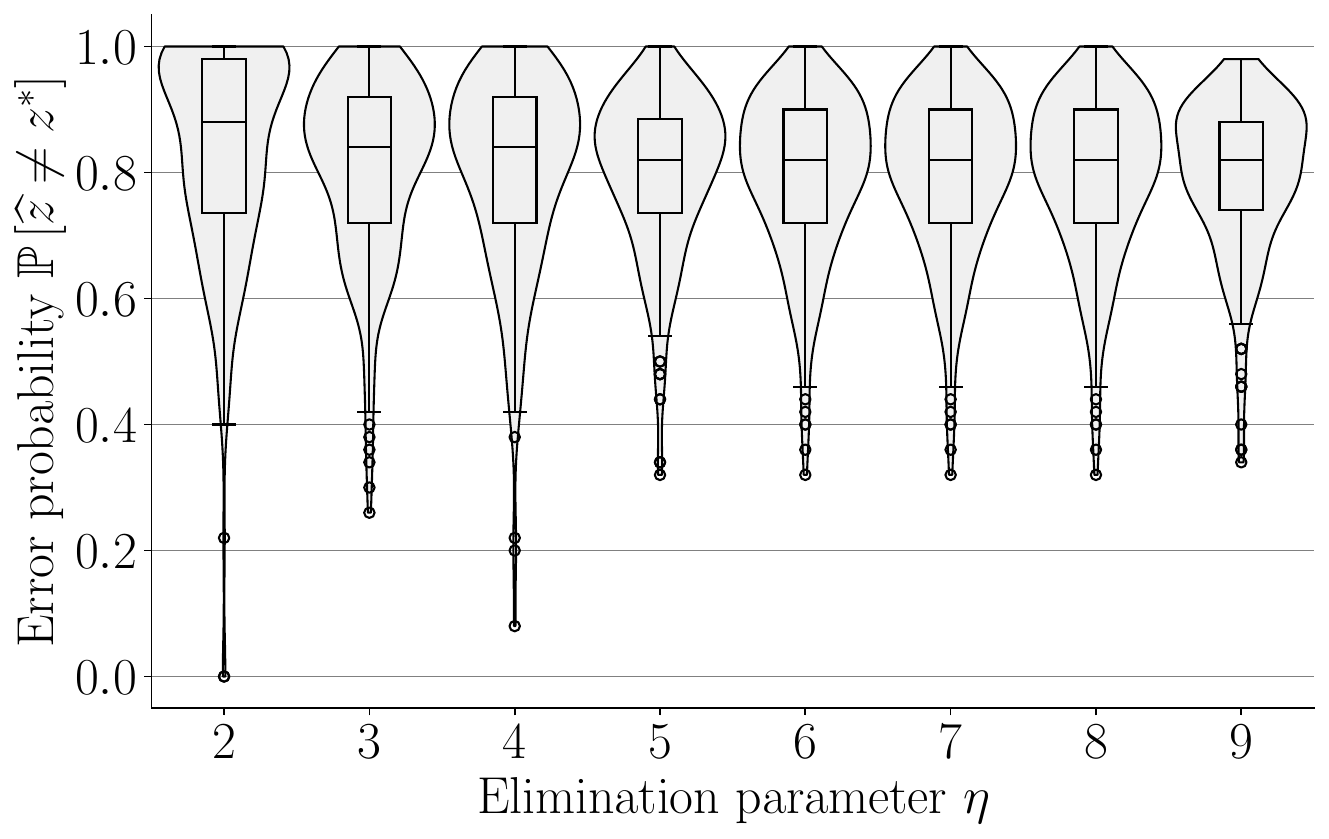}
    \caption{$(\lambda_{\text{trans}},\widehat{\theta}_{\text{CH,logit}})$.}
  \end{subfigure}
  \hfill
  \begin{subfigure}[b]{0.49\textwidth}
    \centering
    \includegraphics[trim=0cm 0cm 0cm 0cm,clip,width=\textwidth]{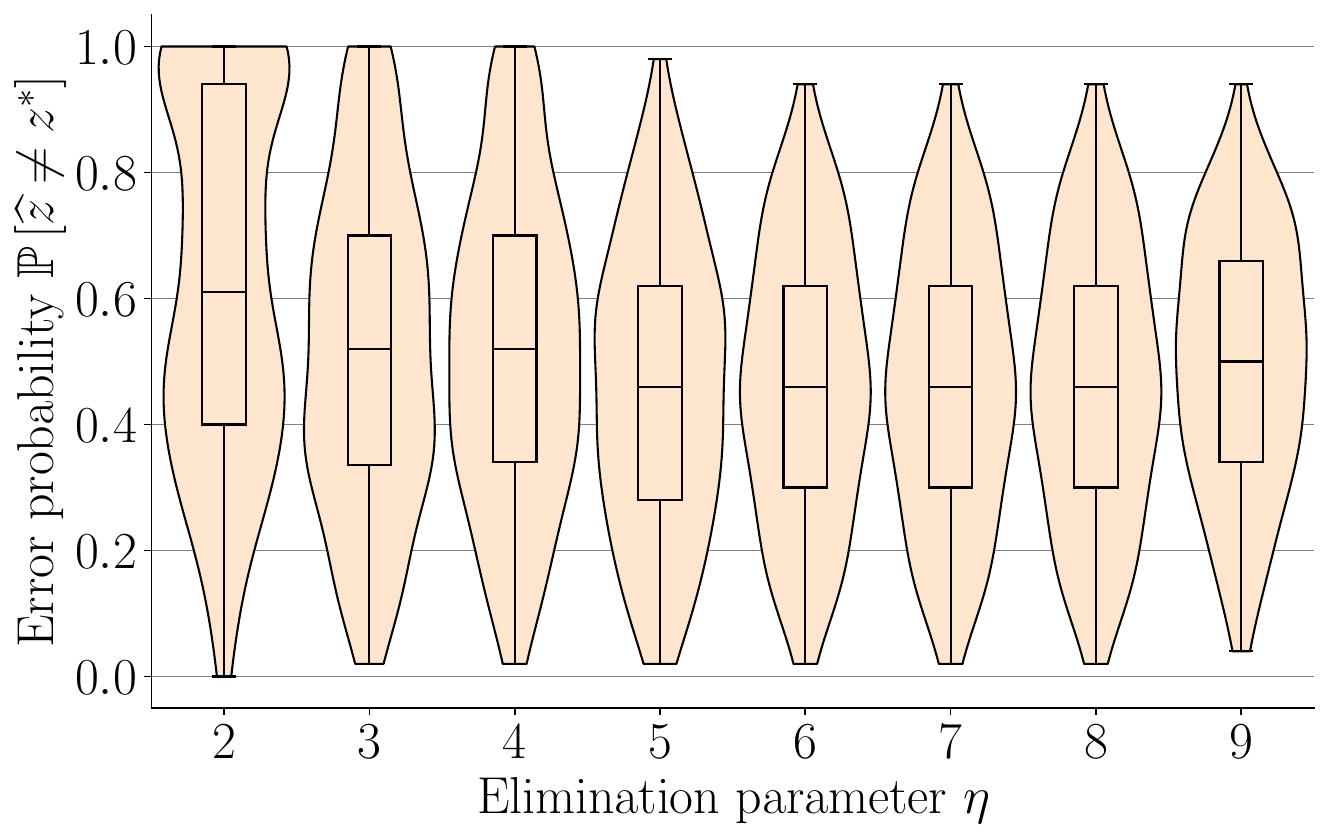}
    \caption{$(\lambda_{\text{trans}},\widehat{\theta}_{\text{CH,DT,logit}})$.}
  \end{subfigure}
  %
  \caption{Violin plots overlaid with box plots, used for tuning the elimination parameter $\eta$ in \cref{alg:GSE} for each GSE variation, simulated based on the snack dataset with choices (yes or no)~\citep{clithero2018improving}, as discussed in \cref{app:sec:exp:snack:1}. Each plot shows the best-arm identification error probability, $\mathbb{P}\left[\widehat{z} \neq z^*\right]$, as a function of $\eta$. The box plots follow the convention of the \href{https://matplotlib.org/stable/api/_as_gen/matplotlib.pyplot.boxplot.html}{\texttt{matplotlib}} Python package. The horizontal line in each box represents the median of the error probabilities across all bandit instances and budgets. Each error probability is averaged over $50$ repeated simulations under different random seeds. The top and bottom borders of the box represent the third and first quartiles, respectively, while the whiskers extend to the farthest points within $1.5\times$ the interquartile range. Flier points are the outliers past the end of the whiskers.}
  \label{fig:app:results:Clithero:eta}
\end{figure}

\begin{figure}[t]
  \centering
  \begin{subfigure}[b]{1\textwidth}
    \centering
    \includegraphics[trim=0cm 4cm 0cm 4cm,clip,width=\textwidth]{figures/empirical_result_legend6.pdf}
  \end{subfigure}
  \hfill
  \begin{subfigure}[b]{1\textwidth}
    \centering
    \includegraphics[trim=0cm 0cm 0cm 0cm,clip,width=\textwidth]{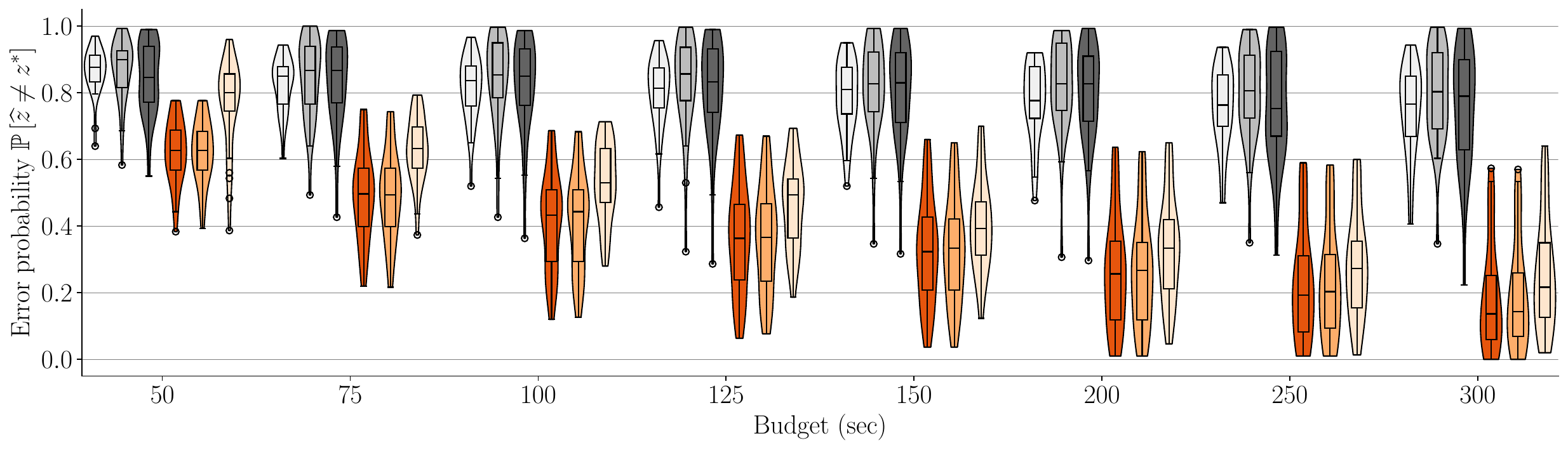}
  \end{subfigure}
  \caption{A violin plot overlaid with a box plot showing the best-arm identification error probability, $\mathbb{P}\left[\widehat{z}\neq z^*\right]$, as a function of budget for each GSE variation, simulated using the snack dataset with choices (yes or no)~\citep{clithero2018improving}, as described in \cref{app:sec:exp:snack:1}. The box plots follow the convention of \href{https://matplotlib.org/stable/api/_as_gen/matplotlib.pyplot.boxplot.html}{the \texttt{matplotlib} Python package}. For each GSE variation and budget, the horizontal line in the middle of the box represents the median of the error probabilities across all bandit instances. Each error probability is averaged over $300$ repeated simulations under different random seeds. The box's upper and lower borders represent the third and first quartiles, respectively, with whiskers extending to the farthest points within $1.5 \times$ the interquartile range. Flier points indicate outliers beyond the whiskers.}
  \label{fig:app:results:Clithero:full}
\end{figure}

\clearpage
\subsection{Processing the snack dataset with choices (-1 or 1)~\texorpdfstring{\citep{krajbich2010visual}}{}}\label{app:sec:exp:snack:2}
We accessed the snack dataset with choices (-1 or 1)~\citep{krajbich2010visual} via \citet{fudenberg2018speed}'s replication package at \url{https://www.aeaweb.org/articles?id=10.1257/aer.20150742}. This dataset contains choices and response times from $39$ participants, each responding to between $49$ and $100$ queries comparing two snack items. Participants' eye movements were tracked during the experiment. \citet{krajbich2010visual} modeled each participant's choices, response times, and eye movements using the attentional DDM, where the drift for each query is a linear combination of the participant's ratings of both snack items in the query, with the weights influenced by their eye movements. The ratings, $\in\{-10,-9,\dots,0,\dots,9,10\}$, were collected before participants interacted with the binary queries.

In our work, to avoid creating trivial bandit problems by encoding snack items as 1-dimensional vectors (as done in \cref{app:sec:exp:foodrisk}), we defined the feature vector for each snack item with a participant rating $r_z\in\{-10,-9,\dots,0,\dots,9,10\}$ as a one-hot vector in $\mathbb{R}^{21}$, where the $(r_z+11)$-th element is $1$ and the rest are $0$. The preference vector $\theta^*$ is structured as $\beta^*\cdot[-10,-9,\dots,0,\dots,9,10]\t\in\mathbb{R}^{21}$, where $\beta^*$ is participant-specific and unknown to the learner. This ensures that, for each arm $z$, the participant's utility is $u_z\coloneqq z\t\theta^*=r_z\beta^*$. In this way, each participant's data generated a bandit instance with a preference vector $\theta^*\in\mathbb{R}^{21}$, a set of arms $\mathcal{Z}\subset\mathbb{R}^{21}$ with $\left|\mathcal{Z}\right|=21$, and a query space $\mathcal{X}\coloneqq\left\{z-z'\colon z\in\mathcal{Z}\right\}$.

We fit each participant's data to a difference-based EZ-diffusion model~\citep{wagenmakers2007ez,berlinghieri2023measuring} using the linear utility structure described above. For each participant, using Bayesian inference with non-informative priors~\citep{clithero2018improving}, we estimated the preference vector $\theta^*$ (or equivalently, the parameter $\beta^*$), non-decision time $t_{\text{nondec}}$, and barrier $a$. Across participants, the barrier $a$ ranged from $0.75$ to $2.192$ with a mean of $1.335$, and $t_{\text{nondec}}$ ranged from $0.387$ to $1.22$ seconds with a mean of $0.641$ seconds. We then used these fitted models to simulate human feedback for bandit experiments, assuming the learner did not know the underlying structure $\theta^*=\beta^*\cdot[-10,-9,\dots,0,\dots,9,10]\t$.

For each of the following GSE variations (introduced in \cref{ssec:empirical:bandit}):
$(\lambda_{\text{trans}},\widehat{\theta}_{\text{CH,DT}})$,
$(\lambda_{\text{trans}},\widehat{\theta}_{\text{CH,}\mathbb{RT}})$,
$(\lambda_{\text{trans}},\widehat{\theta}_{\text{CH}})$,
$(\lambda_{\text{weak}},\widehat{\theta}_{\text{CH}})$,
$(\lambda_{\text{trans}},\widehat{\theta}_{\text{CH,logit}})$,
and $(\lambda_{\text{trans}},\widehat{\theta}_{\text{CH,DT,logit}})$, we tuned the elimination parameter $\eta$ in \cref{alg:GSE} using the following procedure: We considered $\eta\in\{2,3,4,5,6,7,8,9,10,11\}$, which resulted in the number of phases $\coloneqq\left\lceil\log_{\eta}|\mathcal{Z}|\right\rceil=\left\lceil\log_{\eta}(17)\right\rceil$ (\cref{alg:GSE:numPhases} of \cref{alg:GSE}) being $\{5, 3, 3, 2, 2, 2, 2, 2, 2, 2\}$, respectively. We excluded cases where $\eta>\left\lceil21/2\right\rceil=11$, as these result in $2$ phases, identical to when $\eta\in\{5,6,7,8,9,10,11\}$.
Then, for each $\eta$, for each of the 39 bandit instances, and for each budget $\in\left\{150, 200, 250, 300, 350, 400, 450, 500\right\}$ seconds, we ran $50$ repeated simulations per GSE variation under different random seeds, sampling human feedback from the fitted dEZDM. We then aggregated the results into a single best-arm identification error probability for each GSE variation, $\eta$, bandit instance, and budget. These error probabilities were compiled into violin and box plots, as shown in \cref{fig:app:results:Krajbich:eta}.

For each GSE variation, we selected the $\eta$ that minimized the median error probability, as shown in the box plots in \cref{fig:app:results:Krajbich:eta}. If multiple $\eta$ values yielded the same median, we used the third quartile, and if necessary, the first quartile, to break ties. Based on this approach, we selected:
$\eta=4$ for $(\lambda_{\text{trans}},\widehat{\theta}_{\text{CH,DT}})$,
$\eta=4$ for $(\lambda_{\text{trans}},\widehat{\theta}_{\text{CH,}\mathbb{RT}})$,
$\eta=4$ for $(\lambda_{\text{trans}},\widehat{\theta}_{\text{CH}})$,
$\eta=2$ for $(\lambda_{\text{weak}},\widehat{\theta}_{\text{CH}})$,
$\eta=5$ for $(\lambda_{\text{trans}},\widehat{\theta}_{\text{CH,logit}})$,
and
$\eta=5$ for $(\lambda_{\text{trans}},\widehat{\theta}_{\text{CH,DT,logit}})$.

After tuning $\eta$, we manually set the buffer size $B_{\text{buff}}$ in \cref{alg:GSE} to $20$ seconds based on empirical results, ensuring the budget was not exceeded in any phase. We then benchmarked each GSE variation on all $39$ bandit instances using its own manually tuned $\eta$. Each variation was evaluated over $300$ repeated simulations with different random seeds, where human choices and response times were sampled from the dEZDM with the identified parameters. The full results are shown in \cref{fig:app:results:Krajbich:full}, with selected results presented in \cref{fig:results:all:snack:2}.

\begin{figure}[ht]
  \centering
  \begin{subfigure}[b]{0.49\textwidth}
    \centering
    \includegraphics[trim=0cm 0cm 0cm 0cm,clip,width=\textwidth]{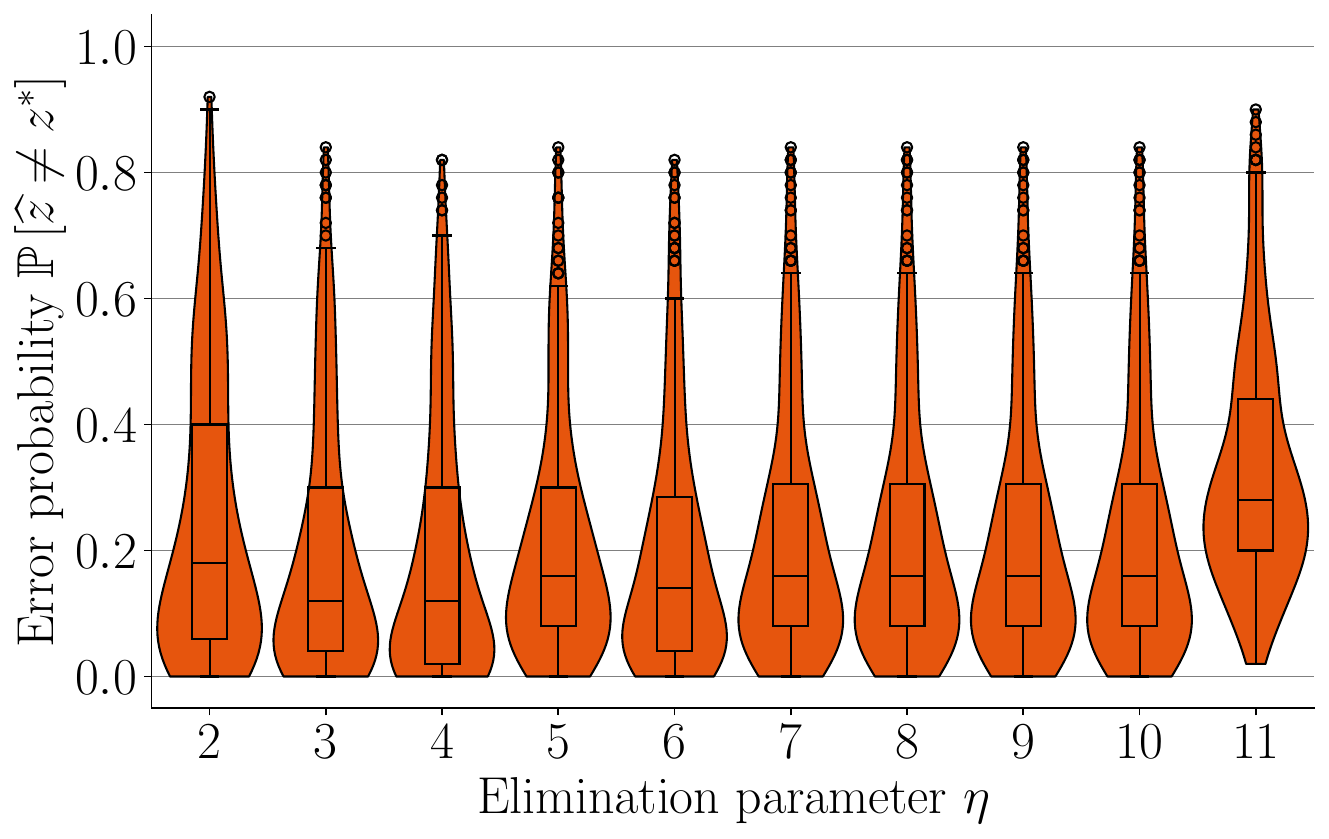}
    \caption{$(\lambda_{\text{trans}},\widehat{\theta}_{\text{CH,DT}})$.}
  \end{subfigure}
  \hfill
  \begin{subfigure}[b]{0.49\textwidth}
    \centering
    \includegraphics[trim=0cm 0cm 0cm 0cm,clip,width=\textwidth]{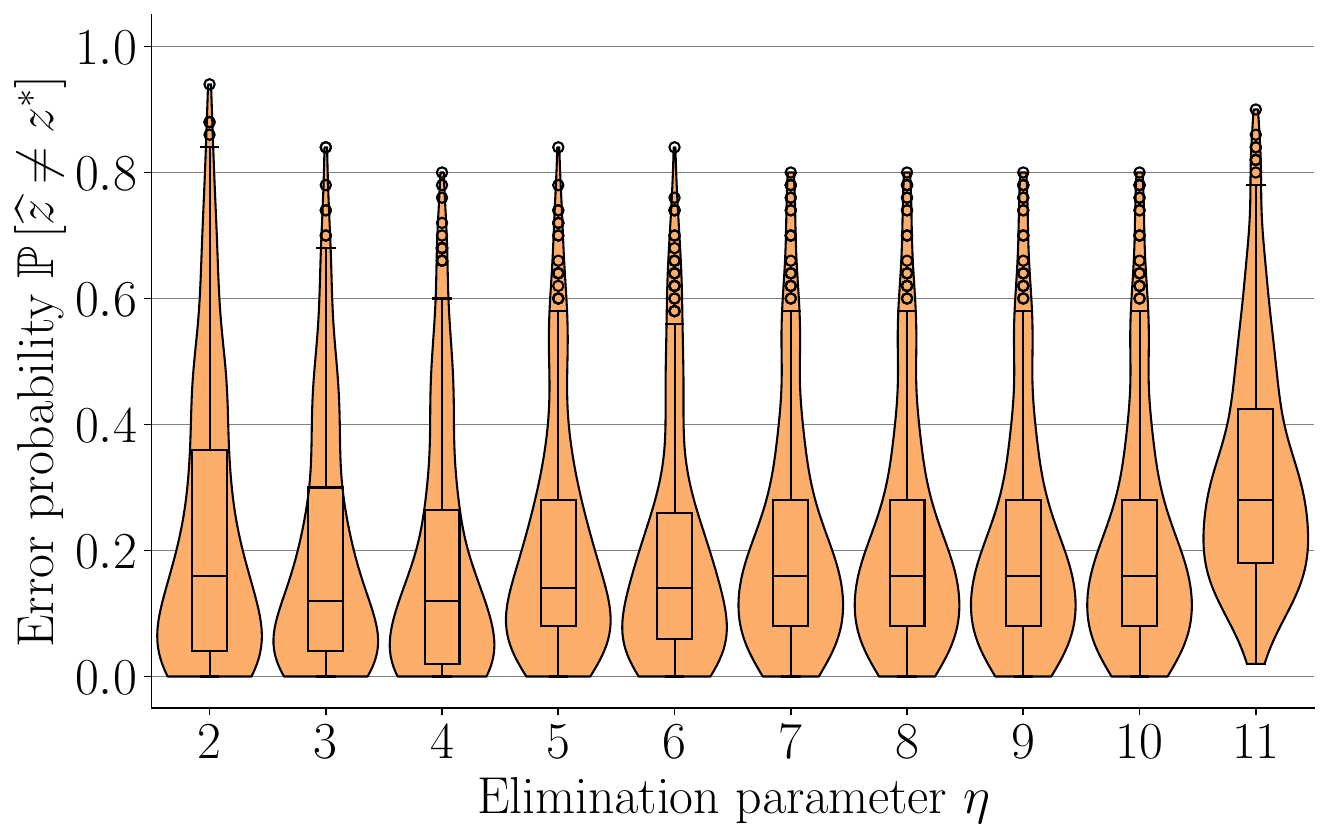}
    \caption{$(\lambda_{\text{trans}},\widehat{\theta}_{\text{CH,}\mathbb{RT}})$.}
  \end{subfigure}
  \begin{subfigure}[b]{0.49\textwidth}
    \centering
    \includegraphics[trim=0cm 0cm 0cm 0cm,clip,width=\textwidth]{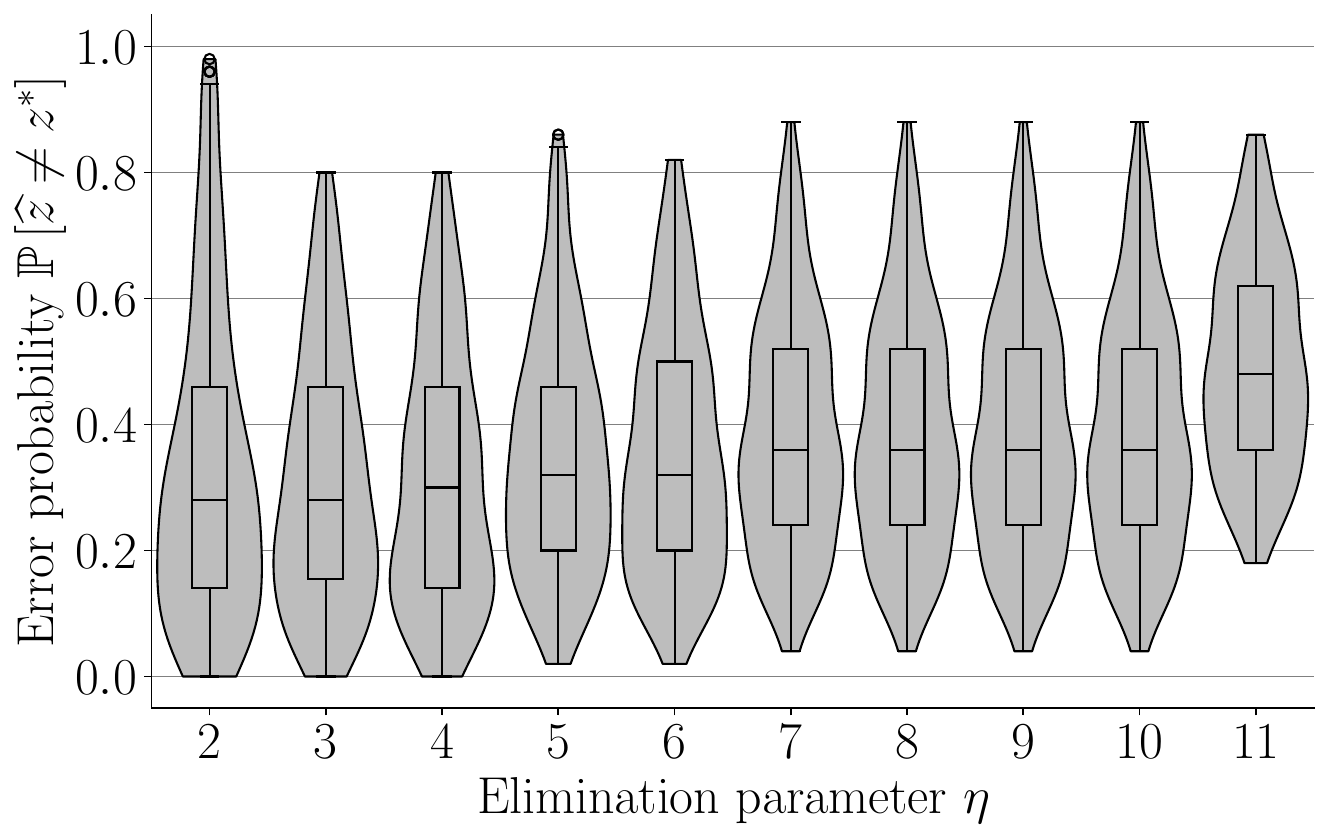}
    \caption{$(\lambda_{\text{weak}},\widehat{\theta}_{\text{CH}})$.}
  \end{subfigure}
  \hfill
  \begin{subfigure}[b]{0.49\textwidth}
    \centering
    \includegraphics[trim=0cm 0cm 0cm 0cm,clip,width=\textwidth]{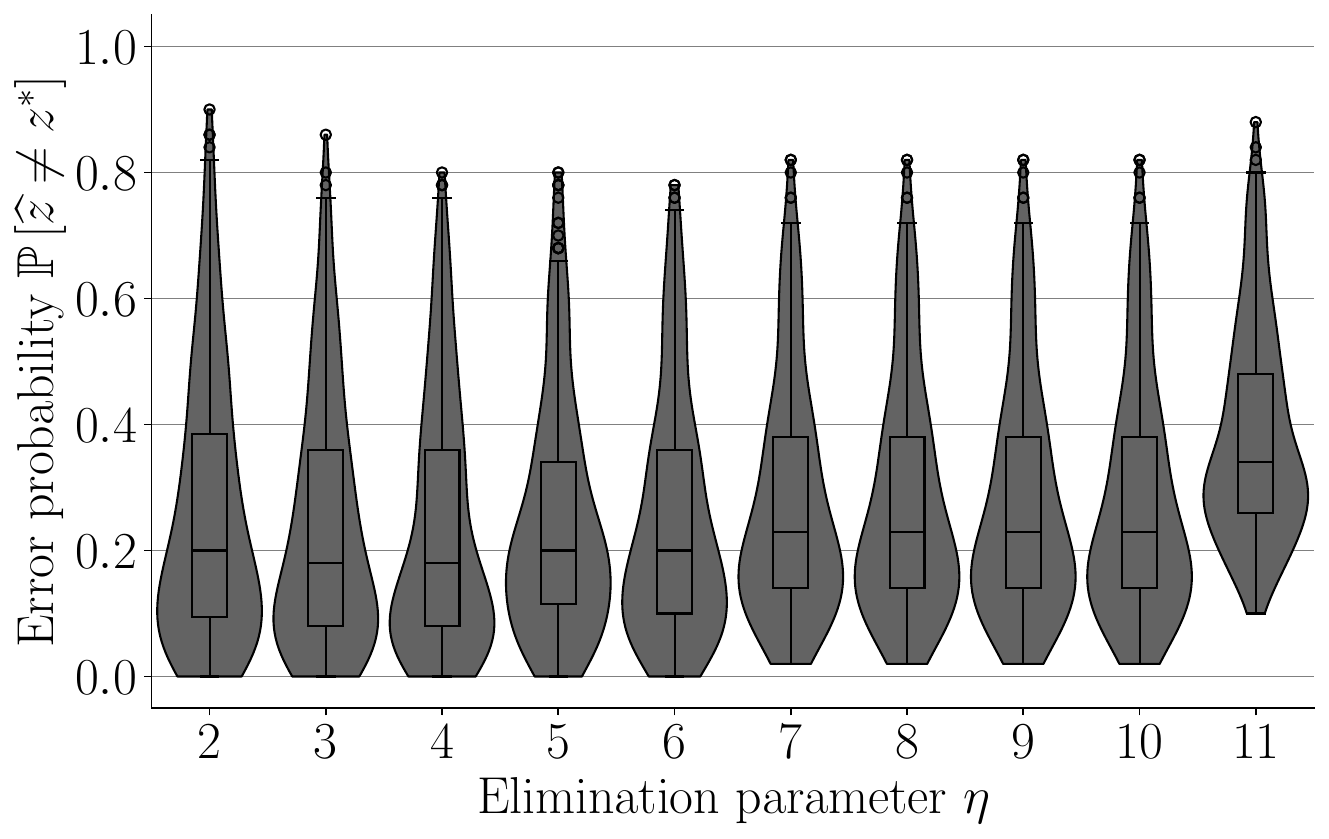}
    \caption{$(\lambda_{\text{trans}},\widehat{\theta}_{\text{CH}})$.}
  \end{subfigure}
  \begin{subfigure}[b]{0.49\textwidth}
    \centering
    \includegraphics[trim=0cm 0cm 0cm 0cm,clip,width=\textwidth]{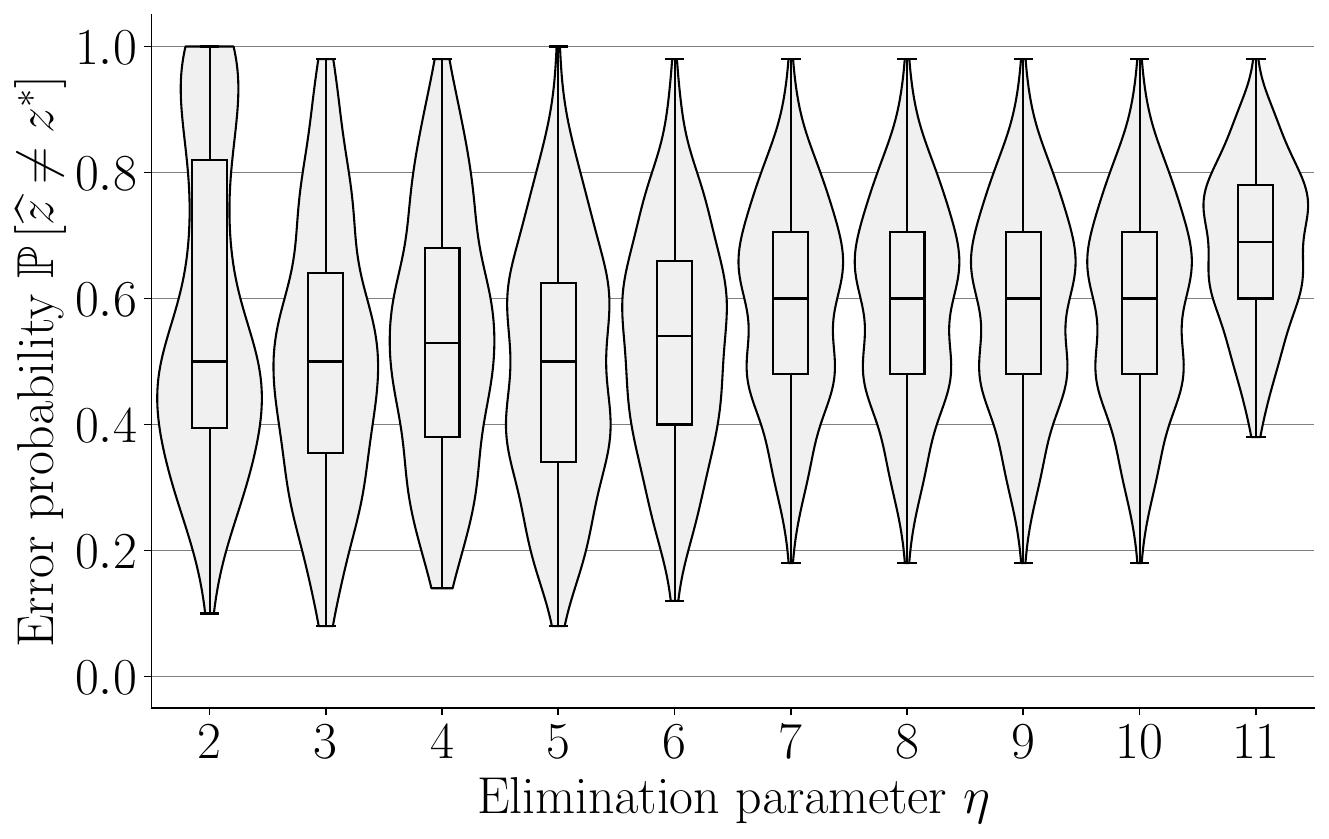}
    \caption{$(\lambda_{\text{trans}},\widehat{\theta}_{\text{CH,logit}})$.}
  \end{subfigure}
  \hfill
  \begin{subfigure}[b]{0.49\textwidth}
    \centering
    \includegraphics[trim=0cm 0cm 0cm 0cm,clip,width=\textwidth]{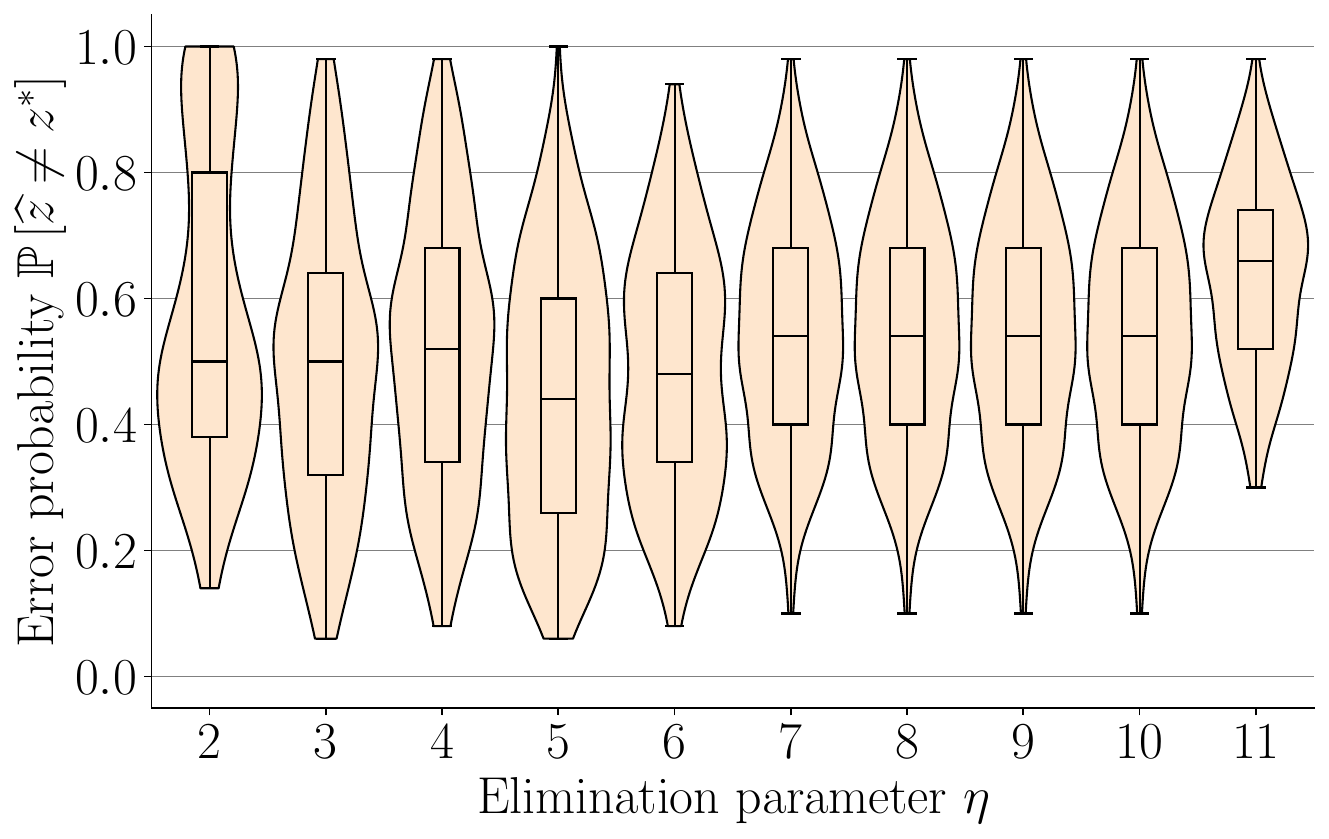}
    \caption{$(\lambda_{\text{trans}},\widehat{\theta}_{\text{CH,DT,logit}})$.}
  \end{subfigure}
  %
  \caption{Violin plots overlaid with box plots, used for tuning the elimination parameter $\eta$ in \cref{alg:GSE} for each GSE variation, simulated based on the snack dataset with choices (-1 or 1)~\citep{krajbich2010visual}, as discussed in \cref{app:sec:exp:snack:2}. Each plot shows the best-arm identification error probability, $\mathbb{P}\left[\widehat{z} \neq z^*\right]$, as a function of $\eta$. The box plots follow the convention of the \href{https://matplotlib.org/stable/api/_as_gen/matplotlib.pyplot.boxplot.html}{\texttt{matplotlib}} Python package. The horizontal line in each box represents the median of the error probabilities across all bandit instances and budgets. Each error probability is averaged over $50$ repeated simulations under different random seeds. The top and bottom borders of the box represent the third and first quartiles, respectively, while the whiskers extend to the farthest points within $1.5\times$ the interquartile range. Flier points are the outliers past the end of the whiskers.}
  \label{fig:app:results:Krajbich:eta}
\end{figure}

\begin{figure}[t]
  \centering
  \begin{subfigure}[b]{1\textwidth}
    \centering
    \includegraphics[trim=0cm 4cm 0cm 4cm,clip,width=\textwidth]{figures/empirical_result_legend6.pdf}
  \end{subfigure}
  \hfill
  \begin{subfigure}[b]{1\textwidth}
    \centering
    \includegraphics[trim=0cm 0cm 0cm 0cm,clip,width=\textwidth]{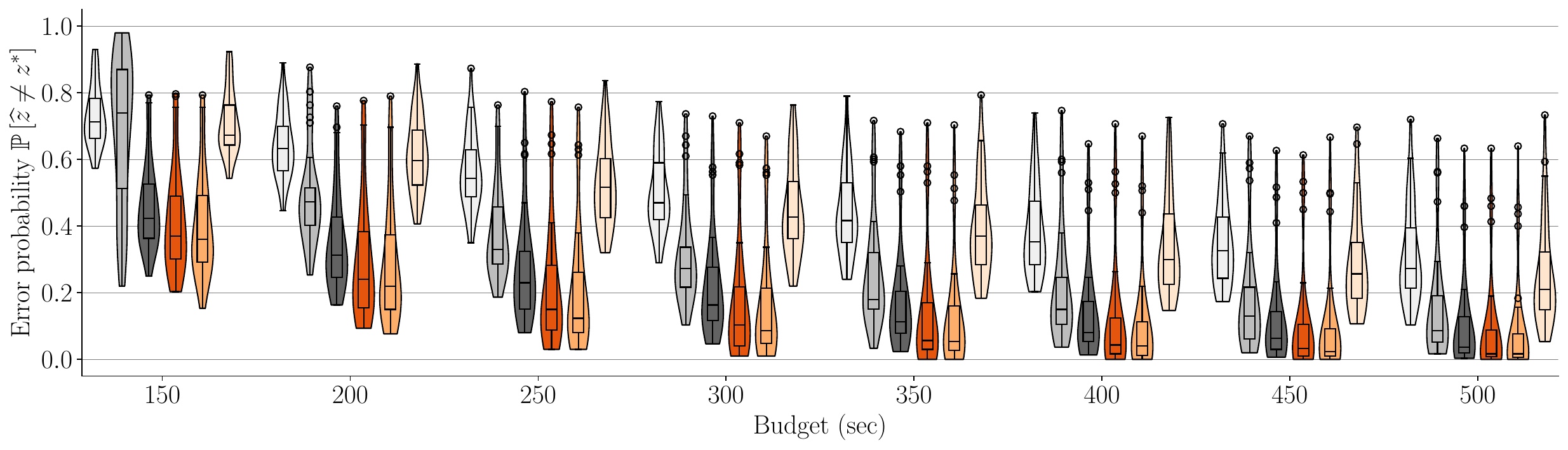}
  \end{subfigure}
  \caption{A violin plot overlaid with a box plot showing the best-arm identification error probability, $\mathbb{P}\left[\widehat{z}\neq z^*\right]$, as a function of budget for each GSE variation, simulated using the snack dataset with choices (-1 or 1)~\citep{krajbich2010visual}, as described in \cref{app:sec:exp:snack:2}. The box plots follow the convention of \href{https://matplotlib.org/stable/api/_as_gen/matplotlib.pyplot.boxplot.html}{the \texttt{matplotlib} Python package}. For each GSE variation and budget, the horizontal line in the middle of the box represents the median of the error probabilities across all bandit instances. Each error probability is averaged over $300$ repeated simulations under different random seeds. The box's upper and lower borders represent the third and first quartiles, respectively, with whiskers extending to the farthest points within $1.5 \times$ the interquartile range. Flier points indicate outliers beyond the whiskers.}

  \label{fig:app:results:Krajbich:full}
\end{figure}


\end{document}